\def\1{\bm{1}}
\DeclareMathAlphabet{\mathsfit}{\encodingdefault}{\sfdefault}{m}{sl} \SetMathAlphabet{\mathsfit}{bold}{\encodingdefault}{\sfdefault}{bx}{n}
\newcommand{\softmax}{\bm{\sigma}_{\mathrm{SM}}}
\DeclareMathOperator*{\argmax}{arg\,max}
\DeclareMathOperator*{\argmin}{arg\,min}
\DeclareMathOperator*{\sign}{sign}
\let\MYcaption\@makecaption
\let\@makecaption\MYcaption
\crefname{assumption}{assumption}{assumptions}
\theoremstyle{plain}
\newtheorem{theorem}{Theorem}[section]
\newtheorem{proposition}[theorem]{Proposition}
\newtheorem{lemma}[theorem]{Lemma}
\theoremstyle{definition}
\newtheorem{definition}[theorem]{Definition}
\newtheorem{assumption}[theorem]{Assumption}
\theoremstyle{remark}
\author[1]{Akiyoshi Tomihari\thanks{tomihari@g.ecc.u-tokyo.ac.jp}}
\author[1]{Issei Sato \thanks{sato@g.ecc.u-tokyo.ac.jp}}
\affil[1]{Department of Computer Science, The University of Tokyo}
\date{}
\title{\texorpdfstring{Understanding Transformer Optimization via Gradient Heterogeneity}{Understanding Transformer Optimization via Gradient Heterogeneity}}
\begin{document}
    \allowdisplaybreaks

\maketitle

    \begin{abstract}
Transformers are difficult to optimize with stochastic gradient descent (SGD) and largely rely on adaptive optimizers such as Adam. 
Despite their empirical success, the reasons behind Adam's superior performance over SGD remain poorly understood. 
In this study, we analyze the optimization of Transformer models through the lens of \emph{gradient heterogeneity}, defined as the variation in gradient norms across parameter blocks.
We provide a theoretical analysis showing that gradient heterogeneity, together with Hessian heterogeneity, degrades the convergence of gradient-based methods such as SGD, while sign-based methods are substantially less sensitive to this effect. 
Adam's coordinate-wise normalization makes its update directions depend mainly on gradient signs, so Adam can be interpreted as a soft variant of SignSGD.
Our analysis uses the fact that SGD and SignSGD follow steepest descent directions under different norms, and derives upper bounds on the iteration complexity with implications for learning rate scaling in SignSGD.
We further investigate the origin of gradient heterogeneity in Transformer architectures and show that it is strongly influenced by the placement of layer normalization, with Post-LN architectures exhibiting particularly pronounced heterogeneity. 
Experimental results from fine-tuning Transformers in both NLP and vision domains validate our theoretical analysis.
    \end{abstract}

    \section{Introduction}
Transformers~\citep{vaswani2017attention} have achieved significant success across a wide range of tasks, particularly in language models. In practice, training language models largely relies on adaptive optimization methods~\citep{liu2024deepseek,grattafiori2024llama} such as Adam~\citep{kingma2017adammethodstochasticoptimization}.
    In contrast, while stochastic gradient descent (SGD) has long been a standard optimizer in deep learning~\citep{726791,he2016deep}, it often exhibits inferior optimization behavior in Transformer architectures~\citep{schmidt2021descending,choi2019empirical,zhang2020adaptive,kunstner2023noise,zhang2024transformers,kunstner2024heavytailed}.

    However, the underlying reasons for the performance gap are not yet fully understood. In particular, Adam has been shown to outperform SGD even in full-batch settings, while SignSGD~\citep{bernstein2018signsgd}, which serves as an effective proxy for Adam~\citep{xie2024implicit,li2025on}, achieves comparable performance under the same conditions~\citep{kunstner2023noise}.
    These observations suggest that the difference between Adam and SGD cannot be explained solely by stochastic gradient noise, but rather reflects fundamental differences between SGD and adaptive optimization methods.
    Other explanations, such as Adam's robustness to heavy-tailed label distributions~\citep{kunstner2024heavytailed}, capture certain aspects of this gap but do not fully account for the behavior observed in fine-tuning regimes with a small amount of labeled data.
    More recently, \citet{zhang2024transformers} associated the Adam--SGD gap with \emph{Hessian heterogeneity} in Transformers, defined as differences in block-wise Hessian spectra, although the underlying mechanism remains unclear.
    
    In this study, we take a step toward a better understanding of the difference between Adam and SGD through a theoretical analysis.
    Specifically, we compare their optimization behaviors by analyzing their \emph{iteration complexity}, defined as the number of optimization steps required for the gradient norm to become sufficiently small.
    Our analysis reveals that \emph{gradient heterogeneity} and Hessian heterogeneity~\citep{zhang2024transformers} jointly play an important role in shaping these differences.
    Gradient heterogeneity is defined as the variation in gradient norms across parameter blocks and is amenable to empirical analysis.

    \begin{table*}[t]
\centering
\caption{Comparison with prior studies. ✓: Supported; △: Partially supported; --: Not supported.}
\label{tab:comparison}
\begin{tabular}{lcccc}
\toprule
Paper & Transformer & Theoretical complexity & Heterogeneity & Layer normalization\\
\midrule
\citet{zhang2020adaptive}      & ✓ & ✓ & -- & --\\
\citet{crawshaw2022robustness}  & △ & ✓ & -- & --\\
\citet{kunstner2023noise} & ✓ & -- & -- & --\\
\citet{pan2023toward}    & ✓ & -- & -- & -- \\
\citet{kunstner2024heavytailed}  & △ & -- & -- & -- \\
\citet{zhang2024transformers}     & ✓ & -- & ✓ (Hessian) & -- \\
\textbf{Ours}                   & ✓ & ✓ & ✓ (Gradient \& Hessian) & ✓ \\
\bottomrule
\end{tabular}
\end{table*}
    
    We begin by deriving upper bounds on the iteration complexity of gradient-based and sign-based optimization methods in both deterministic and stochastic settings. 
    Our analysis uses the fact that SGD and SignSGD correspond to steepest descent directions under different norms.
    Our results suggest that gradient-based methods are more sensitive to gradient and Hessian heterogeneity than sign-based methods, and also provide implications for the learning rate of SignSGD.
    To further investigate the origin of heterogeneity, we analyze gradient heterogeneity in Transformers and examine how it relates to architectural design choices.
    In particular, we find that applying layer normalization after residual connections amplifies gradient heterogeneity. 

    Our contributions are summarized as follows. \Cref{tab:comparison} compares prior studies with ours.
    { \setlength{\leftmargini}{10pt} \setlength{\itemsep}{0pt} \setlength{\parskip}{0pt} \setlength{\itemindent}{0pt} \setlength{\labelsep}{5pt}\setlength{\parsep}{0pt}
    \begin{itemize}
    
    \item We derive upper bounds for the iteration complexity for optimization algorithms in both deterministic and stochastic settings. Our analysis suggests that SGD is highly sensitive to heterogeneity across parameter blocks, whereas sign-based (Adam-like) methods are less affected (\cref{theorem:complexity,theorem:complexity_stochastic}). The results yield implications for learning rate scaling of SignSGD.

    \item We investigate gradient heterogeneity in Transformers, identifying the position of layer normalization as a factor influencing it (\cref{sec:layer_norm}). 


    \item Overall, we emphasize that the sign-based nature of Adam helps address optimization challenges arising from heterogeneity across parameter blocks, which is a characteristic of Transformer architectures.
    \end{itemize}
    }
    \vspace{-0.5\baselineskip}
    
    \section{Related work}
    
    \paragraph{Adam in deep learning.}
    Adam~\citep{kingma2017adammethodstochasticoptimization} is a widely used optimization algorithm in deep learning with convergence properties~\citep{zhang2022adam}. However, the reasons for its superior performance are not yet fully understood. \citet{jiang2024does} empirically observed that Adam tends to converge to parameter regions with uniform diagonal elements in the Hessian, supported by theoretical analysis based on two-layer linear models. \citet{rosenfeld2023outliers} argued that the ability of Adam to handle outliers in features is a critical factor in its effectiveness. Additionally, \citet{kunstner2024heavytailed} attributed the performance of Adam in language models to its ability to manage heavy-tailed class imbalance. \citet{orvieto2025search} showed that setting $\beta_1=\beta_2$ preserves Adam's performance and enables a mean-field variational interpretation.
    In this study, we provide a theoretical explanation of Adam's advantage by focusing on heterogeneity across parameter blocks.

    \paragraph{Sign-based optimization and variants.}
    SignSGD, also known as sign descent, is an optimization method that is computationally efficient and memory-efficient, making it suitable for distributed training~\citep{bernstein2018signsgd}.
    Adam can be interpreted as a variance-adapted variant of SignSGD~\citep{balles2018dissecting}. For example, \citet{xie2024implicit} analyzed the convergence properties of Adam from this perspective. 
Consistent with this interpretation, \citet{zhao2025deconstructing} found that sign-based optimizers restore the stability and performance of Adam and proposed using adaptive learning rates for each layer.
Several variants of sign-based optimization have been proposed, such as block-wise adaptive learning rates~\citep{zhang2024adamminiusefewerlearning} and error-feedback schemes that mitigate bias and improve convergence~\citep{karimireddy2019error}.
Through program search, a sign-based optimization algorithm called Lion (evolved sign momentum) was discovered~\citep{chen2024symbolic}, and its effectiveness was shown by~\citet{chen2024lion}. 
Our analysis theoretically clarifies why sign-based methods are less sensitive to gradient and Hessian heterogeneity than gradient-based methods.

\paragraph{Optimization challenges in Transformers.}
A key aspect of Transformer optimization is the notable superiority of Adam over SGD. \citet{zhang2020adaptive} attributed this to the heavy-tailed gradient noise, but \citet{kunstner2023noise} later challenged this, arguing that the superior performance of Adam can be attributed to sign-based characteristics rather than gradient noise, supported by full-batch experiments. 
\citet{li2025on} demonstrated the similarity between Adam and SignSGD in the optimization and generalization of two-layer transformers.
\citet{pan2023toward} showed that, in Transformers, Adam updates exhibit lower directional sharpness than SGD.
\citet{ahn2023linear} demonstrated that linear Transformers exhibit optimization behaviors similar to standard Transformers. \citet{zhang2024transformers} revealed that the Hessian spectrum of the loss function in Transformers is heterogeneous and suggested that this is one cause of the Adam-SGD performance gap. 
This heterogeneity was later confirmedｎ by \citet{ormaniec2024does}, who explicitly derived the Hessian of Transformers. 
Our work complements these studies by offering a theoretical analysis that highlights the heterogeneity across parameter blocks in Transformer optimization.

    \section{Preliminaries}
    This section introduces the notation and outlines the optimization methods relevant to our study.
    \subsection{Notation and setup}
    \paragraph{Vectors and matrices.}
    The $k$-th element of a vector $\bm{a}$ is denoted by $\bm{a}_{k}$, and for a matrix $\bm{A}$, we use $\bm{A}_{k,:}$, $\bm{A}_{:,l}$, and $A_{k,l}$ to denote the $k$-th row, $l$-th column, and element at $(k, l)$, respectively. When a vector or matrix is split into blocks, $[\cdot]_{b}$ denotes the $b$-th block. The $\ell_{q}$ norm is denoted by $\|\mathord{\cdot}\|_{q}$ for vectors and represents the operator norm for matrices. The all-ones vector and identity matrix of size $a$ are denoted by $\bm{1}_{a}$ and $\bm{I}_{a}$, respectively. The operator $\operatorname{blockdiag}(\cdot)$ constructs block diagonal matrices. Derivatives are computed using the numerator layout. 
    
    
    \paragraph{Model and training.}
    We consider a classification task with $C$ classes and sample space $\mathcal{X}$. The model $\bm{f}(\cdot; \bm{\theta}): \mathcal{X}\rightarrow \mathbb{R}^{C}$ is parameterized by $\bm{\theta}\in \mathbb{R}^{P}$, which is divided into $B$ blocks, denoted as $[\bm{\theta}]_{b}\in \mathbb{R}^{P_b}$, with $\sum_{b=1}^{B}P_{b}= P$.
    The training dataset $\{ (\bm{x}^{(i)}, y^{(i)}) \}_{i=1}^{N}$ consists of $N$ samples $\bm{x}^{(i)}\in \mathcal{X}$ and the corresponding labels $y^{(i)}\in \{1, \ldots, C\}$. The training objective is to minimize the training loss $L(\bm{\theta}) \coloneqq \frac{1}{N}\sum_{i=1}^{N}\ell(\bm{f}(\bm{x}^{(i)}; \bm{\theta}), y^{(i)})$. Here, $\ell: \mathbb{R}^{C}\times \{1, \ldots, C\} \rightarrow \mathbb{R}$ denotes the loss function. The element-wise sign function is denoted by $\sign(\cdot )$. The mini-batch loss is denoted by $\widehat{L}(\bm{\theta})$, and the learning rate at step $t$ is represented by $\eta_{t}$.
    \subsection{Optimization algorithms}
    \label{prelim:optimization_algorithms}
    \paragraph{Adam.}
    Adam~\citep{kingma2017adammethodstochasticoptimization} is widely used in deep learning.
    It uses the first and second moment estimates of the gradient $\nabla \widehat{L}(\bm{\theta}_{t})$, denoted as $\bm{m}_{t}$ and $\bm{v}_{t}$, computed using an exponential moving average to reduce mini-batch noise. The update is performed coordinate-wise as: 
    \begin{align}
        \bm{\theta}_{t+1} & = \bm{\theta}_{t}- \eta_{t}\frac{\widehat{\bm{m}}_{t}}{\sqrt{\widehat{\bm{v}}_{t}+ \epsilon}},
    \end{align}
    where $\widehat{\bullet}$ denotes bias correction and $\epsilon$ is a small constant for numerical stability.
    
    \paragraph{Adaptive learning rate and SignSGD.}
    A key feature of Adam is its \emph{adaptive learning rate}, which is computed in a coordinate-wise manner. When the hyperparameter $\epsilon$, which is typically set close to zero, is ignored and the ratio $|\widehat{\bm{m}}_{t}/\sqrt{\widehat{\bm{v}}_{t}}|$ is close to $1$, Adam behaves similarly to SignSGD~\citep{balles2018dissecting,bernstein2018signsgd}. SignSGD updates the parameters with momentum $\bm{m}_{t}$ as:
    \begin{align}
        \bm{\theta}_{t+1} & = \bm{\theta}_{t}- \eta_{t}\sign(\bm{m}_{t}).
    \end{align}
    This method has the property that the updates are invariant to the scale of the gradient. In this sense, Adam can be seen as a soft version of SignSGD. Additionally, the optimizer RMSProp~\citep{tieleman2017divide}, which inspired Adam, was originally motivated by the idea of using the sign of the gradient in a mini-batch setting. RMSProp is similar to Adam but without the momentum term.

    \paragraph{SGD and gradient clipping.}
    SGD can also be modified to achieve scale invariance. 
    A simple way to introduce scale invariance is to normalize the learning rate by the gradient norm, a technique known as normalized gradient descent. This method has been shown to be equivalent to gradient clipping up to a constant factor in the learning rate~\citep{zhang2019gradient}. Gradient clipping is commonly used to stabilize training, particularly in cases where large gradient magnitudes cause instability and is often applied alongside other optimizers. However, a key difference between Adam and SGD is that SGD does not adapt the learning rate in a coordinate-wise manner.

    \paragraph{Steepest descent direction.}
    SGD and SignSGD can be interpreted as updating in the direction of \emph{steepest descent}~\citep{boyd2004convex,xie2024implicit,bernstein2024old}:
    \begin{align}
        \Delta_{t}\in \argmin_{\|\Delta\|\leq 1}\nabla \widehat{L}(\bm{\theta}_{t})^{\top}\Delta.
    \end{align}
    The steepest descent direction associated with the norms $\|\mathord{\cdot}\|_{2}$ and $\|\mathord{\cdot}\|_{\infty}$ corresponds to the updates of SGD and SignSGD, respectively.

    The steepest descent direction satisfies
    \begin{align}
        \nabla \widehat{L}(\bm{\theta}_{t})^{\top}\Delta = -\|\nabla \widehat{L}(\bm{\theta}_{t})\|_{*},
    \end{align}
    where $\|\mathord{\cdot}\|_{*}$ denotes the dual norm of $\|\mathord{\cdot}\|$. Thus, evaluating the gradient using the dual norm is natural for analyzing SGD and SignSGD, as it quantifies the steepest decrease in a given descent direction under a unit-norm constraint.

    \section{Main results}
    \label{sec:optimization_complexity}
    In this section, we theoretically analyze optimization methods. 
    We first introduce the setting, assumptions (\cref{sec:assumption}), and complexity measures (\cref{sec:heterogeneity}), and then examine gradient--Hessian correlations (\cref{sec:gradient_hessian_correlation}).
    Next, we derive upper bounds on the iteration complexity in deterministic (\cref{sec:complexity_bound}) and stochastic (\cref{sec:stochastic_complexity}) settings, together with implications for the learning rate of SignSGD.
    Finally, we investigate gradient heterogeneity in Transformers (\cref{sec:layer_norm}).
    Our analysis suggests that heterogeneity across parameter blocks, a characteristic of Transformers, contributes to the Adam--SGD performance gap.

    \subsection{Setting and assumptions}
    \label{sec:assumption}
    \paragraph{Gradient-based and sign-based sequences.}
    \citet{kunstner2023noise} showed that in full-batch settings without gradient noise, SignSGD performs similarly to Adam and outperforms SGD. This suggests that the performance gap between Adam and SGD arises from differences between SignSGD and SGD. Other studies have also used SignSGD as a proxy for Adam in their analyses~\citep{balles2018dissecting,li2025on,kunstner2024heavytailed}.

    On the basis of these insights, we analyze the difference between parameter sequences $\{\bm{\theta}_{t}^{\text{Grad}}\}_{t=0}^{\infty}$ and $\{\bm{\theta}_{t}^{\text{Sign}}\}_{t=0}^{\infty}$, referred to as the gradient-based and sign-based sequences, respectively. These sequences correspond to updates performed by gradient-based and sign-based optimization. In deterministic settings, these updates are defined as follows:
    \begin{align}
        \bm{\theta}_{t+1}^{\text{Grad}} & = \bm{\theta}_{t}^{\text{Grad}}- \eta_{t}\nabla L(\bm{\theta}_{t}^{\text{Grad}}),        \\
        \bm{\theta}_{t+1}^{\text{Sign}} & = \bm{\theta}_{t}^{\text{Sign}}- \eta_{t}\sign(\nabla L(\bm{\theta}_{t}^{\text{Sign}})).
    \end{align}
    In the stochastic setting, the loss $L$ is replaced with the mini-batch loss $\widehat{L}$.
    
    We consider fine-tuning settings, in which the parameter $\bm{\theta}$ can typically be assumed to remain within a region $\mathcal{R}_{\text{FT}}$ throughout training. 
    This assumption restricts $\bm{\theta}$ to the localized region $\mathcal{R}_{\text{FT}}$, allowing further assumptions to be applied within this region.
    \begin{assumption}
        [Fine-tuning]\label{assumption:fine-tuning} The parameter $\bm{\theta}$ remains within the region $\mathcal{R}_{\text{FT}}$ throughout training and there exists $\bm{\theta}_{*}\in \mathcal{R}_{\text{FT}}$  such that $L_{*}\coloneqq L(\bm{\theta}_{*}) = \min_{\bm{\theta} \in \mathcal{R}_{\text{FT}}} L(\bm{\theta})$.
    \end{assumption}
We assume Lipschitz continuity for the Hessian matrix of the loss function, a standard assumption in optimization analysis~\citep{nesterov2013introductory}.
\begin{assumption}
        [Lipschitz continuity~\citep{nesterov2013introductory}]\label{assumption:lipschitz}
        Within the region $\mathcal{R}_{\text{FT}}$, the loss function $L$ is
        twice differentiable, and its Hessian matrix is $\rho_{H}$-Lipschitz
        continuous
        \begin{align}
            \|\nabla^{2}L(\bm{\theta}) - \nabla^{2}L(\bm{\theta}')\|_{2}    & \leq \rho_{H}\|\bm{\theta}- \bm{\theta}'\|_{2}.
        \end{align}
\end{assumption}
    Additionally, empirical studies have shown that Hessian matrices of deep learning models often exhibit a near-block-diagonal structure~\citep{maes2024understanding, kunstner2024heavytailed, collobert2004large,zhang2024transformers,zhao2025deconstructing}. The block-diagonal approximation is also used in optimization methods~\citep{martens2015optimizing,zhang2017block}. Thus, we assume that the Hessian matrix of the loss function is close to block-diagonal. 
    \begin{assumption}
        [Near block-diagonal Hessian] \label{assumption:block-hessian} Within the region $\mathcal{R}_{\text{FT}}$, the Hessian matrix can be approximated by a block-diagonal matrix with an approximation error $\delta_{D}$:
        \begin{align}
            \|\nabla^{2}L(\bm{\theta}) - \nabla^{2}L_{D}(\bm{\theta})\|_{2} & \leq \delta_{D},\label{eq:hessian_approximation}
        \end{align}
        for all $\bm{\theta}\in \mathcal{R}_{\text{FT}}$, where
        \begin{align}
            \nabla^{2}L_{D}(\bm{\theta}) \coloneqq \operatorname{blockdiag}(\{[\nabla^{2}L(\bm{\theta})]_{b}\}_{b=1}^{B})
        \end{align}
        represents the block-diagonal approximation.
    \end{assumption}
Note that in Eq.~\eqref{eq:hessian_approximation}, the left-hand side is bounded above by the sum of squared elements in the non-diagonal blocks, following the relationship between $\|\mathord{\cdot}\|_2$ and the Frobenius norm.

    \subsection{Gradient heterogeneity and complexity measure}
    \label{sec:heterogeneity}
    \paragraph{Gradient heterogeneity.}
    We define \emph{gradient heterogeneity} as follows:
    \begin{definition}[Gradient heterogeneity]
        Gradient heterogeneity is defined as the disparity in gradient norms across different parameter blocks, $\{\|[\nabla L(\bm{\theta})]_{b}\|_{2}\}_{b=1}^{B}$.
    \end{definition}
    This concept complements \emph{Hessian heterogeneity}, introduced by~\citet{zhang2024transformers} (referred to as ``block heterogeneity'' in their paper), which is defined in terms of differences in the Hessian spectrum and is generally more difficult to analyze empirically than gradient heterogeneity.
    We characterize gradient heterogeneity quantitatively through visualizations (\cref{fig:gradient_norm,fig:arch}) and Gini coefficients (\cref{table:gini_coefficient}), offering concrete measures.

    \begin{figure}[tb]
        \centering
        \includegraphics[width=0.80\columnwidth]{
            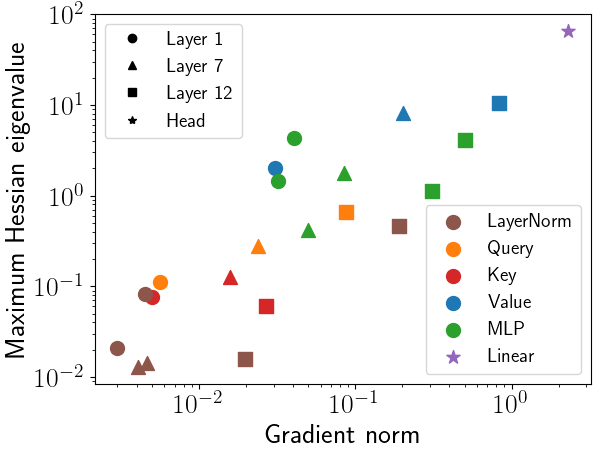
        }
        \caption{\textbf{Correlation between gradient norm and maximum Hessian eigenvalue.} 
Each point denotes the mean value computed over a parameter block (pre-trained RoBERTa on RTE).}
        \label{fig:hessian_gradient}
    \end{figure}

    \paragraph{Weighted Hessian norms.}
    To analyze the complexity of optimization, we define the following two measures.

    \begin{definition}[Weighted Hessian norms]
    \label{whc}
    The gradient-weighted Hessian norm $\Lambda_{G}$ and parameter-weighted Hessian norm $\Lambda_{P}$ are defined as:
        \begin{align}
            \Lambda_{G} & \coloneqq \sup_{\bm{\theta}\in \mathcal{R}_{\text{FT}}^{+}}\sum_{b=1}^{B}\frac{\|[\nabla L(\bm{\theta})]_{b}\|_{2}^{2}}{\|\nabla L(\bm{\theta})\|_{2}^{2}}\|[\nabla^{2}L(\bm{\theta})]_{b}\|_{2}, \\
            \Lambda_{P} & \coloneqq \sup_{\bm{\theta}\in \mathcal{R}_{\text{FT}}}\sum_{b=1}^{B}\frac{P_{b}}{P}\|[\nabla^{2}L(\bm{\theta})]_{b}\|_{2}.
        \end{align}
    Here, we define $\mathcal{R}_{\text{FT}}^{+} \coloneqq \{\bm{\theta}\in \mathcal{R}_{\text{FT}} : \|\nabla L(\bm{\theta})\|_2 > 0 \}$.
    \end{definition}
    We define $\Lambda_G$ over $\mathcal{R}_{\text{FT}}^{+}$ to avoid the degenerate stationary case; this does not affect our iteration-complexity analysis.
    
    In these definitions, $\Lambda_{G}$ weights the operator norm of each Hessian block by the squared gradient norm of the corresponding block, while $\Lambda_{P}$ weights it by the parameter dimension. The definitions ensure that the weights of all Hessian blocks sum to $1$, as shown by the equalities: $\displaystyle\sum_{b=1}^{B}\frac{\|[\nabla L(\bm{\theta})]_{b}\|_{2}^{2}}{\|\nabla L(\bm{\theta})\|_{2}^{2}}= \sum_{b=1}^{B}\frac{P_{b}}{P}= 1$.

    \subsection{Gradient-Hessian correlation}
    \label{sec:gradient_hessian_correlation}
    As shown in~\cref{fig:hessian_gradient}, large Hessian operator norms $\| [\nabla^{2}L(\bm{\theta})]_{b}\|_{2}$ are often associated with large gradient magnitudes $\|[\nabla L(\bm{\theta})]_{b}\|_{2}$. In contrast, no such correlation is observed between the Hessian operator norm $\| [\nabla^{2}L(\bm{\theta})]_{b}\|_{2}$ and the parameter dimension $P_b$, as detailed in~\cref{sec_appendix:hessian_gradient}.
    Under gradient–Hessian correlation, large gradient heterogeneity leads to an increase in $\Lambda_{G}$, whereas $\Lambda_{P}$ remains relatively small.
    
    \paragraph{Approximate explanation.}
    If the loss function $L$ is approximated in the region $\mathcal{R}_{\text{FT}}$ by a second-order Taylor expansion around the optimum $\bm{\theta}_{*}\in \mathcal{R}_{\text{FT}}$, where $\nabla L(\bm{\theta}_{*})$ is close to $\bm{0}$, and the Hessian matrix is assumed to be block-diagonal, the following inequality approximately holds:
    \begin{align}
        \|[\nabla L(\bm{\theta})]_{b}\|_{2} & \leq \|[\nabla^{2}L(\bm{\theta}_{*})]_{b}\|_{2}\|\delta_{\bm{\theta}}\|_{2},
    \end{align} 
    where $\delta_{\bm{\theta}} = \bm{\theta} - \bm{\theta}_{*}$. This inequality suggests a positive correlation between the gradient norm and the Hessian norm.

    \paragraph{Support from prior studies.}
    This gradient-Hessian correlation was observed or assumed in previous studies. For instance, \citet{zhang2024transformers,jiang2024does} demonstrated the relationship between $|\nabla L(\bm{\theta})_{i}|$ and $|\nabla^{2}L(\bm{\theta})_{i,i}|$. Additionally, the $(L_{0}, L_{1})$-smoothness assumption~\citep{zhang2019gradient} and its coordinate-wise generalization~\citep{crawshaw2022robustness} reflect this correlation. 

    \subsection{Complexity bound}
    \label{sec:complexity_bound} 
    To analyze optimization algorithms, we define a complexity measure inspired by~\citet{carmon2020lower,zhang2019gradient,crawshaw2022robustness}. This measure reflects the number of parameter updates needed to achieve a sufficiently small gradient norm, with higher complexity indicating slower convergence.

    \begin{definition}[Iteration complexity]
        We define the iteration complexity of a parameter sequence $\{\bm{\theta}_{t}\}_{t=0}^{\infty}$ for $\bm{\theta}_{t}\in \mathbb{R}^{P}$ with the loss
        function $L$ and the norm $\|\mathord{\cdot}\|_{q}$:
        \begin{align}
            \mathcal{T}_{\varepsilon}(\{\bm{\theta}_{t}\}_{t=0}^{\infty}, L, \|\mathord{\cdot}\|_{q}) \coloneqq \inf\{t \in \mathbb{N}\mid \mathcal{C}_{\varepsilon}(t)\},
        \end{align}
        where the condition $\mathcal{C}_{\varepsilon}(t)$ is defined as follows.

        In the deterministic setting, $\mathcal{C}_{\varepsilon}(t)$ is defined as:
        \begin{align}
            \|\nabla L(\bm{\theta}_{t})\|_{q}\leq P^{\frac{1}{q}}\varepsilon.
        \end{align}

        In the stochastic setting, $\mathcal{C}_{\varepsilon}(t)$ is defined as:
        \begin{align}
            \mathbb{P}\bigl( \forall s \leq t, \|\nabla L(\bm{\theta}_{s})\|_{q}\geq P^{\frac{1}{q}}\varepsilon \bigr) \leq \frac{1}{2}.
        \end{align}
    \end{definition}
    Compared with the complexity definitions in previous studies, we introduce a distinction in the choice of norms and a normalization term $P^{\frac{1}{q}}$ to ensure the consistency of the dimensional differences between different norms.
    
    Using this measure, we derive complexity bounds in deterministic (i.e., full-batch) settings.
    The parameter $\zeta_{0} \in (0,1)$ controls the range of learning rates.
    \begin{theorem}[Deterministic setting]
        \label{theorem:complexity} Assume $\delta_{D}< \min(\Lambda_{G},\Lambda_{P})/3$. Then, the iteration complexities in the deterministic setting are bounded as follows.

        For the gradient-based sequence, suppose that $\varepsilon < \frac{\Lambda_G^2}{\rho_H \sqrt{P}}$ holds and that the learning rate at time $t$ satisfies $\eta_t = \zeta_{t}\min(\frac{1}{\Lambda_{G}}, \frac{1}{\sqrt{\rho_{H}\|\nabla L(\bm{\theta}^{\text{Grad}}_{t})\|_{2}}})$, where $\zeta_{t}\in [\zeta_{0}, 1]$, we have
        \begin{align}
            \mathcal{T}_{\varepsilon}(\{\bm{\theta}^{\text{Grad}}_{t}\}_{t=0}^{\infty}, L, \|\mathord{\cdot}\|_{2}) & \leq \frac{6(L(\bm{\theta}_{0}) - L_{*})}{P\varepsilon^{2}\zeta_{0}}\Lambda_{G}.
        \end{align}
        For the sign-based sequence, suppose that $\varepsilon < \frac{\Lambda_P^2}{\rho_H \sqrt{P}}$ holds and that the learning rate at time $t$ satisfies $\eta_t = \zeta_{t}\min(\frac{\|\nabla L(\bm{\theta}^{\text{Sign}}_{t})\|_{1}}{\Lambda_{P}P}, \sqrt{\frac{\|\nabla L(\bm{\theta}^{\text{Sign}}_{t})\|_{1}}{\rho_{H}P^{3/2}}})$, where $\zeta_{t}\in [\zeta_{0}, 1]$, we have
        \begin{align}
            \mathcal{T}_{\varepsilon}(\{\bm{\theta}^{\text{Sign}}_{t}\}_{t=0}^{\infty}, L, \|\mathord{\cdot}\|_{1}) & \leq \frac{6(L(\bm{\theta}_{0}) - L_{*})}{P\varepsilon^{2}\zeta_{0}}\Lambda_{P}.
        \end{align} 
    \end{theorem}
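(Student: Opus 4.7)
The plan is to apply a cubic descent lemma derived from \cref{assumption:lipschitz}, control the Hessian quadratic form using \cref{assumption:block-hessian}, and then collect the resulting terms into $\Lambda_G$ or $\Lambda_P$ according to the update direction, exactly mirroring the steepest-descent framing of \cref{prelim:optimization_algorithms}. Concretely, for either update $\Delta_t = \bm{\theta}_{t+1}-\bm{\theta}_t$, the Hessian-Lipschitz bound gives
\begin{align*}
L(\bm{\theta}_{t+1}) \le L(\bm{\theta}_t)+\nabla L(\bm{\theta}_t)^\top \Delta_t +\tfrac12 \Delta_t^\top \nabla^2 L(\bm{\theta}_t)\Delta_t + \tfrac{\rho_H}{6}\|\Delta_t\|_2^3.
\end{align*}
Writing $\nabla^2 L = \nabla^2 L_D + R$ with $\|R\|_2\le \delta_D$, the off-diagonal part is bounded by $\delta_D\|\Delta_t\|_2^2$ and the block-diagonal part by $\sum_b \|[\Delta_t]_b\|_2^2\|[\nabla^2 L(\bm{\theta}_t)]_b\|_2$.

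The key identification is what this sum becomes under each update rule. For gradient descent, $\|[\Delta_t]_b\|_2^2 = \eta_t^2\|[\nabla L]_b\|_2^2$, so factoring out $\eta_t^2\|\nabla L\|_2^2$ reproduces exactly the weighting in $\Lambda_G$. For sign descent, $\|[\Delta_t]_b\|_2^2 = \eta_t^2 P_b$, and factoring out $\eta_t^2 P$ reproduces the weighting in $\Lambda_P$; moreover the linear term becomes $\eta_t\|\nabla L(\bm{\theta}_t)\|_1$ and the cubic budget carries a factor $P^{3/2}$. Combining with $\delta_D < \Lambda/3$ (for the appropriate $\Lambda\in\{\Lambda_G,\Lambda_P\}$) lets me replace the block-diagonal factor by $4\Lambda/3$ throughout.

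I would then plug in the stated learning rate $\eta_t=\zeta_t\min(a_t,b_t)$, where $a_t$ is designed to tame the quadratic term and $b_t$ the cubic term. A direct computation shows that the quadratic contribution is at most $\tfrac{2\zeta_t}{3}$ and the cubic contribution at most $\tfrac{\zeta_t^2}{6}$ times the linear gain, so for every $\zeta_t\in[\zeta_0,1]$
\begin{align*}
L(\bm{\theta}_{t+1})-L(\bm{\theta}_t)\le -\tfrac{1}{6}\eta_t G_t,
\end{align*}
with $G_t = \|\nabla L(\bm{\theta}_t)\|_2^2$ in the gradient case and $G_t = \|\nabla L(\bm{\theta}_t)\|_1$ in the sign case.

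Finally, I would telescope this descent over $T-1$ steps and invoke \cref{assumption:fine-tuning}. The remaining task is a uniform lower bound $\eta_t G_t \ge \zeta_0 P\varepsilon^2/\Lambda$ whenever the stopping condition $\mathcal{C}_\varepsilon(t)$ has not been met, i.e.\ whenever $\|\nabla L(\bm{\theta}_t)\|_q \ge P^{1/q}\varepsilon$. When the first branch of the $\min$ is active this is immediate; the role of the assumption $\varepsilon<\Lambda^2/(\rho_H\sqrt{P})$ is precisely to ensure that in the opposite branch (active when $\|\nabla L\|_q$ is large) the same lower bound still holds, which reduces to the single inequality $\Lambda^4 \ge \rho_H^2 P\varepsilon^2$. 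Summing then yields $T\cdot \zeta_0 P\varepsilon^2/(6\Lambda)\le L(\bm{\theta}_0)-L_*$, which is the claimed bound. The main obstacle is this last branch-by-branch verification, since the Taylor expansion and block-diagonal estimate are routine, but the interplay of the two learning-rate branches with the dual-norm stopping threshold $P^{1/q}\varepsilon$ is exactly where the calibrated $\varepsilon$ assumption is needed.
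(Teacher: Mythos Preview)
Your proposal is correct and follows essentially the same route as the paper's proof: the cubic descent inequality from \cref{assumption:lipschitz}, the block-diagonal Hessian split from \cref{assumption:block-hessian} yielding the $\Lambda_G$ or $\Lambda_P$ weighting depending on whether $[\Delta_t]_b$ carries $\|[\nabla L]_b\|_2^2$ or $P_b$, the calibration of $\eta_t$ to absorb the quadratic and cubic terms into $-\tfrac{\eta_t}{6}G_t$, and the telescoping step where the condition $\varepsilon<\Lambda^2/(\rho_H\sqrt P)$ forces $\min(f_1,f_2)\ge P\varepsilon^2/\Lambda$ at the stopping threshold. The only cosmetic difference is that the paper bounds the $\Lambda$ and $\delta_D$ quadratic pieces separately as $\eta_t/2$ and $\eta_t/6$ rather than first combining them into $4\Lambda/3$, but the arithmetic is identical.
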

    The iteration complexity of the gradient-based and sign-based sequences is evaluated using the norms $\|\mathord{\cdot}\|_{2}$ and $\|\mathord{\cdot}\|_{1}$, respectively. This choice of norms is justified because they correspond to the dual norms that determine the steepest descent direction, as discussed in~\cref{prelim:optimization_algorithms}. We provide the proof and its intuition in~\Cref{sec:proof}.

    \paragraph{Gradient heterogeneity can increase the iteration complexity of the gradient-based sequence.}
    The theorem indicates that the iteration complexity of the gradient-based and sign-based sequences is characterized by $\Lambda_{G}$ and $\Lambda_{P}$, respectively.
    As discussed earlier, under gradient–Hessian correlation, large gradient heterogeneity leads to a large $\Lambda_{G}$.
    Consequently, the iteration complexity of the gradient-based sequence can surpass that of the sign-based sequence under such conditions.

    \paragraph{Connection to \citet{zhang2024transformers}.}
    \citet{zhang2024transformers} show that the Adam–SGD gap arises from Hessian heterogeneity.
    This finding is consistent with our theoretical results (\Cref{theorem:complexity}), and our analysis further explains this gap by taking gradient heterogeneity into account.

    \begin{figure}[tb]
        \centering
        \includegraphics[width=0.80\columnwidth]{
            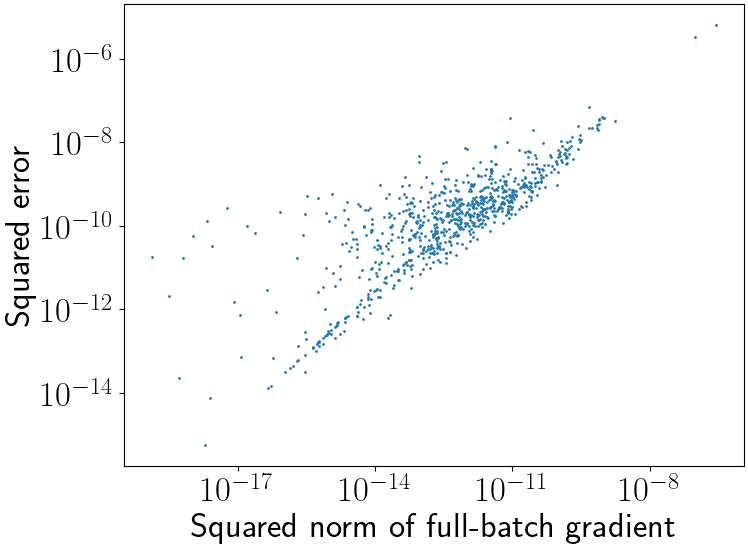
        }
        \caption{\textbf{Correlation between the full-batch gradient and gradient error.} Each point represents the absolute values of a coordinate (pre-trained RoBERTa on RTE).}
        \label{fig:error_gradient}
    \end{figure}

    \subsection{Stochastic setting}
    \label{sec:stochastic_complexity} 
    In practice, optimization is performed in a stochastic setting, where the gradient is estimated using a mini-batch. In this setting, we add the assumptions about noise, defined as the difference between the full-batch and mini-batch gradient.
    \begin{assumption}
        [Noise]\label{assumption:stochastic} For all $\bm{\theta}\in  \mathcal{R}_{\text{FT}}$, there exist constants $\sigma_{3}, \sigma_{2}\geq 0$ such that:
        \begin{align}
            \mathbb{E}[\nabla \widehat{L}(\bm{\theta})]                                     & = \nabla L(\bm{\theta}), \label{eq:error_mean}                           \\
            \mathbb{E}[\|\nabla \widehat{L}(\bm{\theta}) - \nabla L(\bm{\theta})\|_{2}^{3}] & \leq \sigma_{3}\|\nabla L(\bm{\theta})\|_{2}^{3}, \label{eq:error_3}
        \end{align}
        and for all $i \in \{ 1, \ldots , P\}$,
        \begin{align}
            \mathbb{E}[|\nabla \widehat{L}(\bm{\theta})_{i}- \nabla L(\bm{\theta})_{i}|^{2}] \leq \sigma_{2}|\nabla L(\bm{\theta})_{i}|^2. \label{eq:error_2}
        \end{align}
    \end{assumption}
    The assumption in Eq.~\eqref{eq:error_mean} is standard in stochastic optimization~\citep{bernstein2018signsgd}. We introduce Eq.~\eqref{eq:error_3} to bound the third-order moment of the gradient noise norm and Eq.~\eqref{eq:error_2} to model its coordinate-wise correlation with the gradient. This correlation is supported by~\cref{fig:error_gradient} (additional settings in~\cref{sec_appendix:gradient_error}). The coordinate-wise assumption is needed for analyzing errors in the gradient sign and block-wise gradient. Additionally, bounding the noise is a common practice in stochastic optimization~\citep{crawshaw2022robustness,zhang2019gradient}.

    Using these assumptions, we establish the complexity bounds for the stochastic setting, where $\zeta_{0} \in (0,1)$ controls the range of learning rates as in the deterministic setting.
    \begin{theorem}[Stochastic setting]
        \label{theorem:complexity_stochastic} Assume $\delta_{D}< \min(\Lambda_{G},\Lambda_{P})/3$. Then, the iteration complexities in the stochastic setting are bounded as follows.

        For the gradient-based sequence, suppose that $\varepsilon< \frac{(1+\sigma_{2})^{2}\Lambda_{G}^{2}}{4(1+\sigma_{3})\rho_{H}\sqrt{P}}$ holds and that the learning rate at time $t$ satisfies $\eta_t =\zeta_{t} \min(\frac{1}{(1+\sigma_{2})\Lambda_{G}}, \frac{1}{2\sqrt{(1+\sigma_{3})\rho_{H}\|\nabla
            L(\bm{\theta}^{\text{Grad}}_{t})\|_{2}}})$, where $\zeta_{t}\in [\zeta_{0}, 1]$, we have
        \begin{align}
            \mathcal{T}_{\varepsilon}(\{\bm{\theta}^{\text{Grad}}_{t}\}_{t=0}^{\infty}, L,\|\mathord{\cdot}\|_{2}) & \leq \frac{12(1+\sigma_{2})(L(\bm{\theta}_{0})-L_{*})}{P\varepsilon^{2}\zeta_{0}}\Lambda_{G}.
        \end{align}
        For the sign-based sequence, suppose that $\varepsilon< \frac{\Lambda_{P}^{2}}{\rho_{H}\sqrt{P}}$ and $\sigma_{2}\leq \frac{1}{24}$ hold and that the learning rate at time $t$ satisfies $\eta_t= \zeta_{t} \min(\frac{\|\nabla L(\bm{\theta}^{\text{Sign}}_{t})\|_{1}}{\Lambda_{P}P}
        , \sqrt{\frac{\|\nabla L(\bm{\theta}^{\text{Sign}}_{t})\|_{1}}{\rho_{H}P^{3/2}}})$, where $\zeta_{t}\in [\zeta_{0}, 1]$,
        we have
        \begin{align}
            \mathcal{T}_{\varepsilon}(\{\bm{\theta}^{\text{Sign}}_{t}\}_{t=0}^{\infty}, L, \|\mathord{\cdot}\|_{1}) \leq \frac{12(1+24\sigma_{2})(L(\bm{\theta}_{0})-L_{*})}{P\varepsilon^{2}\zeta_{0}}\Lambda_{P}.
        \end{align}
    \end{theorem}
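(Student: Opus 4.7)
The plan is to extend the deterministic analysis of \Cref{theorem:complexity} by taking conditional expectations in the third-order descent inequality implied by \Cref{assumption:lipschitz},
\begin{align*}
L(\bm{\theta}_{t+1}) \leq L(\bm{\theta}_{t}) + \langle \nabla L(\bm{\theta}_{t}), \Delta_{t} \rangle + \tfrac{1}{2}\Delta_{t}^{\top}\nabla^{2}L(\bm{\theta}_{t})\Delta_{t} + \tfrac{\rho_{H}}{6}\|\Delta_{t}\|_{2}^{3},
\end{align*}
with $\Delta_{t} = -\eta_{t}\nabla\widehat{L}(\bm{\theta}_{t}^{\text{Grad}})$ or $\Delta_{t} = -\eta_{t}\sign(\nabla\widehat{L}(\bm{\theta}_{t}^{\text{Sign}}))$, controlling the three resulting expectations using the noise bounds in \Cref{assumption:stochastic}, and finally converting the per-step expected descent into a probabilistic complexity bound via a Markov-type argument on the event that the gradient never drops below threshold.

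For the gradient-based sequence I would take $\mathbb{E}[\cdot\mid\bm{\theta}_{t}]$: Eq.~\eqref{eq:error_mean} produces the linear term $-\eta_{t}\|\nabla L\|_{2}^{2}$; the coordinate-wise bound Eq.~\eqref{eq:error_2} yields $\mathbb{E}[\|[\nabla\widehat{L}]_{b}\|_{2}^{2}\mid\bm{\theta}_{t}]\leq(1+\sigma_{2})\|[\nabla L]_{b}\|_{2}^{2}$, so that \Cref{assumption:block-hessian} and \Cref{whc} combine to give
\[
\mathbb{E}\bigl[\nabla\widehat{L}^{\top}\nabla^{2}L\,\nabla\widehat{L}\mid\bm{\theta}_{t}\bigr] \leq (1+\sigma_{2})(\Lambda_{G}+\delta_{D})\|\nabla L\|_{2}^{2} \leq \tfrac{4}{3}(1+\sigma_{2})\Lambda_{G}\|\nabla L\|_{2}^{2}
\]
under $\delta_{D}<\Lambda_{G}/3$; and the third-moment bound Eq.~\eqref{eq:error_3}, together with $(a+b)^{3}\leq 4(a^{3}+b^{3})$, yields $\mathbb{E}[\|\nabla\widehat{L}\|_{2}^{3}\mid\bm{\theta}_{t}]\leq 4(1+\sigma_{3})\|\nabla L\|_{2}^{3}$. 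Substituting the prescribed $\eta_{t}$ bounds the quadratic contribution by $\tfrac{2}{3}\eta_{t}\|\nabla L\|_{2}^{2}$ and the cubic one by $\tfrac{1}{6}\eta_{t}\|\nabla L\|_{2}^{2}$, producing the per-step descent $\mathbb{E}[L(\bm{\theta}_{t+1}^{\text{Grad}})\mid\bm{\theta}_{t}^{\text{Grad}}] - L(\bm{\theta}_{t}^{\text{Grad}}) \leq -\tfrac{1}{6}\eta_{t}\|\nabla L(\bm{\theta}_{t}^{\text{Grad}})\|_{2}^{2}$.

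For the sign-based sequence, the quadratic and cubic terms use $\|\sign(\cdot)\|_{2}^{2}=P$ and $\|\sign(\cdot)\|_{2}^{3}=P^{3/2}$; \Cref{whc} then turns the former into the bound $\tfrac{4}{3}P\Lambda_{P}$. The linear term is the novel ingredient: a sign flip at coordinate $i$ forces $|\nabla\widehat{L}_{i}-\nabla L_{i}|\geq|\nabla L_{i}|$, so Markov on Eq.~\eqref{eq:error_2} gives $\mathbb{P}(\sign(\nabla\widehat{L}_{i})\neq\sign(\nabla L_{i})\mid\bm{\theta}_{t})\leq\sigma_{2}$, and summing coordinate-wise yields $\mathbb{E}[\langle\nabla L,\sign(\nabla\widehat{L})\rangle\mid\bm{\theta}_{t}]\geq(1-2\sigma_{2})\|\nabla L\|_{1}$. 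Plugging in the prescribed $\eta_{t}$ yields $\mathbb{E}[L(\bm{\theta}_{t+1}^{\text{Sign}})\mid\bm{\theta}_{t}^{\text{Sign}}] - L(\bm{\theta}_{t}^{\text{Sign}}) \leq -\tfrac{1-12\sigma_{2}}{6}\eta_{t}\|\nabla L(\bm{\theta}_{t}^{\text{Sign}})\|_{1}$; the hypothesis $\sigma_{2}\leq 1/24$ keeps this drift strictly negative, and the elementary inequality $\tfrac{1}{1-x}\leq 1+2x$ for $x\in[0,1/2]$ converts the final constant into $(1+24\sigma_{2})$.

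To finish, I telescope the per-step descents and use $\mathbb{E}[L(\bm{\theta}_{T})]\geq L_{*}$ to obtain $\sum_{t=0}^{T-1}\mathbb{E}[\eta_{t}\|\nabla L(\bm{\theta}_{t})\|^{r}] \leq c^{-1}(L(\bm{\theta}_{0})-L_{*})$, with $(r,c)=(2,\tfrac{1}{6})$ for the gradient case and $(1,\tfrac{1-12\sigma_{2}}{6})$ for the sign case. The hypothesis on $\varepsilon$ guarantees that at the stopping threshold $\|\nabla L\|_{q}=P^{1/q}\varepsilon$ the first branch of the $\min$ defining $\eta_{t}$ is active, so on the event $A_{T}=\{\forall s\leq T,\ \|\nabla L(\bm{\theta}_{s})\|_{q}\geq P^{1/q}\varepsilon\}$ the quantity $\eta_{t}\|\nabla L\|^{r}$ is uniformly lower-bounded by $\zeta_{0}P\varepsilon^{2}/((1+\sigma_{2})\Lambda_{G})$ (gradient) or $\zeta_{0}P\varepsilon^{2}/\Lambda_{P}$ (sign). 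Combining $\mathbb{E}[\mathbf{1}_{A_{T}}\sum\eta_{t}\|\nabla L\|^{r}]\geq T\cdot(\text{lower bound})\cdot\mathbb{P}(A_{T})$ with the telescoped inequality and enforcing $\mathbb{P}(A_{T})\leq 1/2$ delivers the claimed bounds. The main obstacles are that the stochastic cubic Hessian-Lipschitz remainder is what forces the third-moment assumption Eq.~\eqref{eq:error_3}, and that in the sign case the linear descent barely survives the sign-flip loss, which is exactly why the stringent condition $\sigma_{2}\leq 1/24$ appears and produces the factor $(1+24\sigma_{2})$ in the final bound.
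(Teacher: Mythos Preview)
Your proposal is correct and follows essentially the same approach as the paper's proof: the third-order descent lemma, the block-diagonal Hessian splitting with the $(1+\sigma_2)$ and $4(1+\sigma_3)$ noise inflation factors, the Chebyshev/Markov bound on sign flips giving the $(1-2\sigma_2)$ linear term, and the event-based telescoping argument with $\mathbb{P}(A_T)\leq 1/2$ all match the paper step for step, including the constants.
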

    This theorem shows that the dependence on the noise is the same for both sequences up to a constant, so the difference in noise dependence may be minor. Therefore, the performance gap is more likely due to the difference between $\Lambda_{G}$ and $\Lambda_{P}$, as in the deterministic setting. We further analyze the setting with learning rates adapted to the noise in \Cref{appendix:more_sign}.

\subsection{Implication for learning rates of SignSGD.}
\label{sec:imp-lr}
\Cref{theorem:complexity} requires the learning rate to satisfy
\begin{align}
    \eta_t
    = \zeta_t \min\!\left(
        \frac{\|\nabla L(\bm{\theta}_t^{\text{Sign}})\|_{1}}{\Lambda_{P} P},
        \sqrt{\frac{\|\nabla L(\bm{\theta}_t^{\text{Sign}})\|_{1}}{\rho_{H} P^{3/2}}}
    \right),
\end{align}
where both terms scale monotonically with $\|\nabla L(\bm{\theta}_t^{\text{Sign}})\|_{1}$.
This implies that SignSGD should adapt its step size according to the $\ell_{1}$-norm of the gradient.

In the fine-tuning regime, where gradients are typically small, the linear term dominates whenever
$\|\nabla L(\bm{\theta}_t^{\text{Sign}})\|_{1} \le (\Lambda_{P}^{2}/\rho_{H})\sqrt{P}$.
Thus, using the mini-batch loss $\hat{L}$ in practice, the learning rate condition for SignSGD effectively reduces to $\eta_{t} \coloneqq \gamma_{t}\|\nabla \hat{L}(\bm{\theta}_t^{\text{Sign}})\|_{1}$, where $\gamma_t$ is a hyperparameter and we call this method, SignSGD with $\ell_1$-scaled learning rates, as SignSGD (S).
This scaling corresponds to the steepest descent with respect to the $\ell_{\infty}$-norm
(\Cref{sec:steep})~\citep{balles2020geometry,bernstein2024old},
and is also recovered as the optimal learning rate in our quadratic analysis
(\cref{sec:app_quadratic}).

    \subsection{Optimization of Transformers}
    \label{sec:layer_norm} 
    Transformers show much greater parameter heterogeneity than other models~\citep{zhang2024transformers,cui2024cherry}, as confirmed by our experiments (\cref{fig:gradient_norm}). On the basis of~\cref{theorem:complexity,theorem:complexity_stochastic}, we identify gradient heterogeneity as a key factor in the performance gap between Adam and SGD in Transformers. Here, we discuss the role of layer normalization in Transformers.

    \paragraph{Post-LN and Pre-LN.}
    In Transformers, residual connections and layer normalizations are combined with multi-head attention and feed-forward networks. The two main Transformer architectures are post-layer normalization (Post-LN), where the residual connection is followed by the layer normalization, and pre-layer normalization (Pre-LN), where the layer normalization precedes the residual connection. Pre-LN is known for greater stability~\citep{wang2019learning,xiong2020layer,takase2022layer}.

    \paragraph{Jacobian of Transformers.}
    The Jacobians of a Transformer layer with Pre-LN and Post-LN are expressed as:
    \begin{align}
        \bm{J}_{\text{Pre-LN}} & = \left(\bm{J}_{\text{FFN}}\bm{J}_{\text{LN}}+\bm{I}_{nd}\right)\left(\bm{J}_{\text{ATT}}\bm{J}_{\text{LN}}+\bm{I}_{nd}\right) \label{eq:jac_pre-ln}   \\
        \bm{J}_{\text{Post-LN}} & = \bm{J}_{\text{LN}}\left(\bm{J}_{\text{FFN}}+\bm{I}_{nd}\right)\bm{J}_{\text{LN}}\left(\bm{J}_{\text{ATT}}+\bm{I}_{nd}\right), \label{eq:jac_post-ln}
    \end{align}
    where $\bm{J}_{\text{ATT}}$ and $\bm{J}_{\text{FFN}}$ denote the Jacobians of the self-attention and feed-forward network modules, respectively. For simplicity, the evaluation points of the Jacobians are omitted. The Jacobian of the layer normalization is represented by $\bm{J}_{\text{LN}}$, calculated for an input $\bm{X}\in \mathbb{R}^{n \times d}$ as:
    \begin{align}
        \bm{J}_{\text{LN}}(\bm{X}) = \operatorname{blockdiag}(\{\bm{L}_{i}(\bm{X})\}_{i=1}^{n}),\label{eq:jac_ln}
    \end{align}
    where each block $\bm{L}_{i}\in \mathbb{R}^{d \times d}$ is defined as:
    \begin{align}
        \bm{L}_{i}(\bm{X})\coloneqq \frac{\sqrt{d}}{\|\widetilde{\bm{X}_{i,:}}\|_{2}}\bigg( \bm{I}_{d}- \frac{\widetilde{\bm{X}_{i,:}}\widetilde{\bm{X}_{i,:}}^{\top}}{\|\widetilde{\bm{X}_{i,:}}\|_{2}^{2}}\bigg) \bigg( \bm{I}_{d}- \frac{\bm{1}\bm{1}^{\top}}{d}\bigg),
    \end{align}
    and $\widetilde{\bm{X}_{i,:}}\coloneqq \bm{X}_{i,:}(\bm{I}_{d}-\frac{\bm{1}\bm{1}^{\top}}{d}
    )$. These derivations are provided in~\cref{appendix:jacobian}.

    \paragraph{Greater gradient heterogeneity in Post-LN.}
    Equation~\eqref{eq:jac_ln} shows that the Jacobian of layer normalization, $\bm{J}_{\text{LN}}$, depends on the input, causing variations in its scale across layers. From Eqs.~\eqref{eq:jac_pre-ln} and~\eqref{eq:jac_post-ln}, we observe that in Post-LN, $\bm{J}_{\text{LN}}$ appears in a multiplicative form and thus proportionally scales the entire Jacobian, leading to greater gradient heterogeneity across layers than Pre-LN. Further discussion of gradient heterogeneity in Transformers, particularly in the attention mechanism, is provided in~\cref{appendix:more_transformer}.

    \begin{figure}[htb]
        \centering
        \begin{minipage}{0.99\columnwidth}
            \centering
            \includegraphics[width=0.8\columnwidth]{
                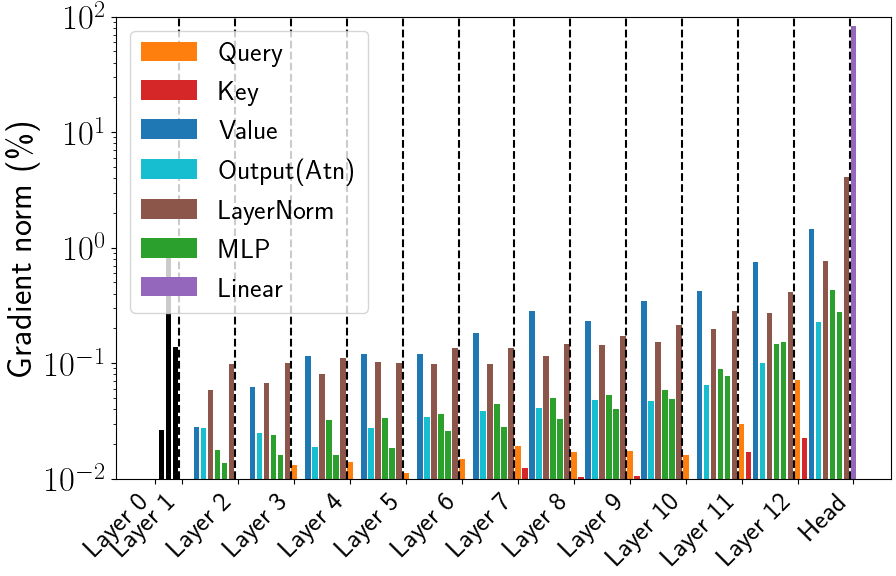
            }
            \subcaption{RoBERTa on RTE}
        \end{minipage}
        \begin{minipage}{0.99\columnwidth}
            \centering
            \includegraphics[width=0.8\columnwidth]{
                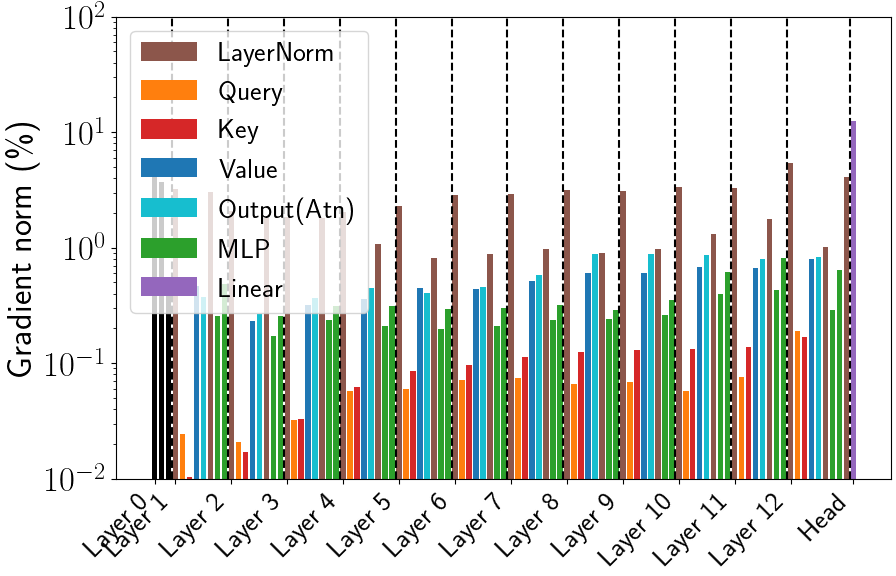
            }
            \subcaption{ViT on Flowers102}
        \end{minipage}
        \begin{minipage}{0.99\columnwidth}
            \centering
            \includegraphics[width=0.8\columnwidth]{
                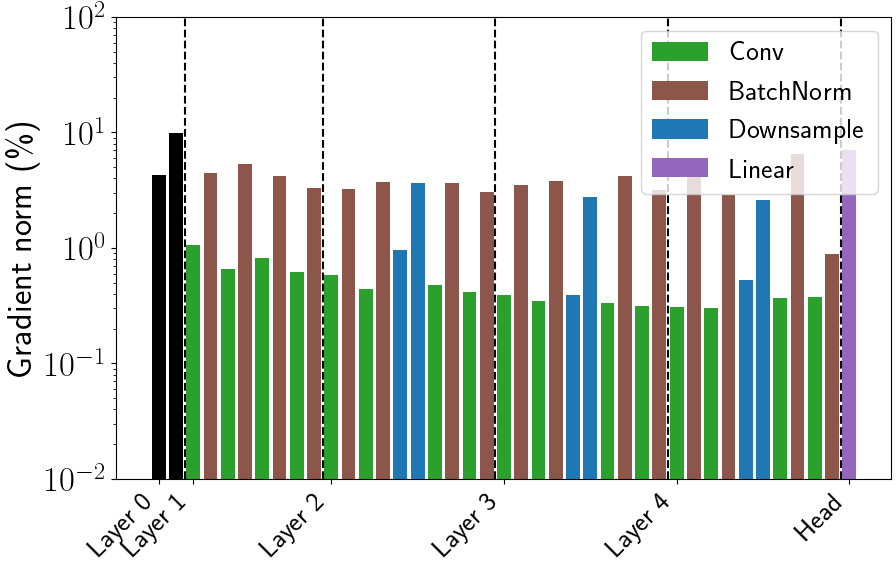
            }
            \subcaption{ResNet18 on Flowers102}
        \end{minipage}
        \caption{{\bf Transformers exhibit large gradient heterogeneity.} Gradient norms of individual parameters in pre-trained models.}
        \label{fig:gradient_norm}
    \end{figure}

\begin{figure*}[htb]
        \begin{minipage}{0.54\textwidth}
            \centering
            \includegraphics[width=0.99\textwidth]{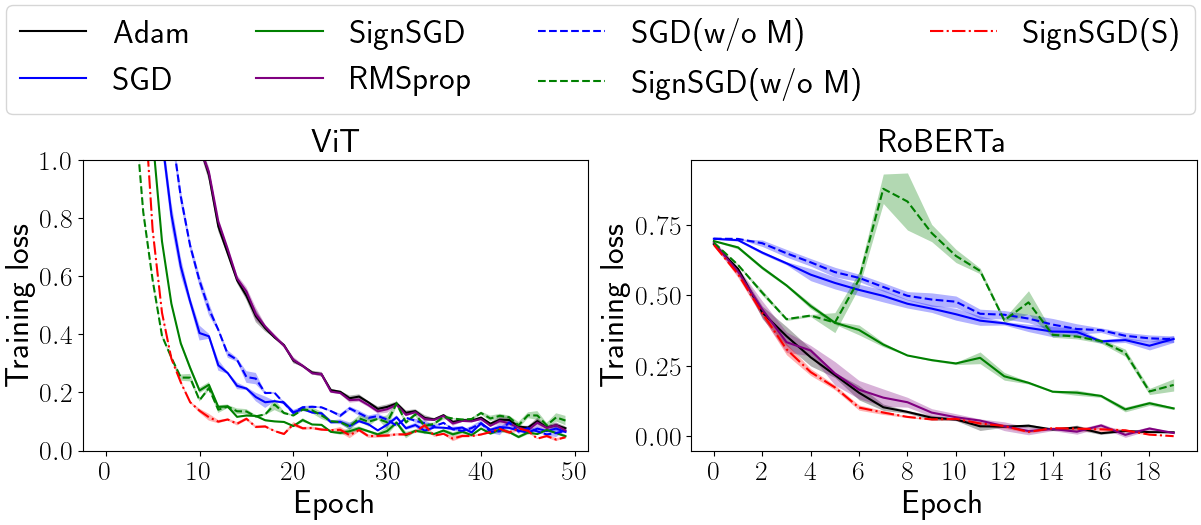}
            \caption{{\bf RoBERTa is difficult to optimize with SGD, and SignSGD (S) achieves lower training loss.} Training loss curves for ViT on Flowers102 (left) and RoBERTa on RTE (right). Shaded regions denote interquartile ranges across random seeds.}
            \label{fig:optimization}
        \end{minipage}
        \hfill
        \begin{minipage}{0.44\textwidth}
            \centering
            \includegraphics[width=0.99\textwidth]{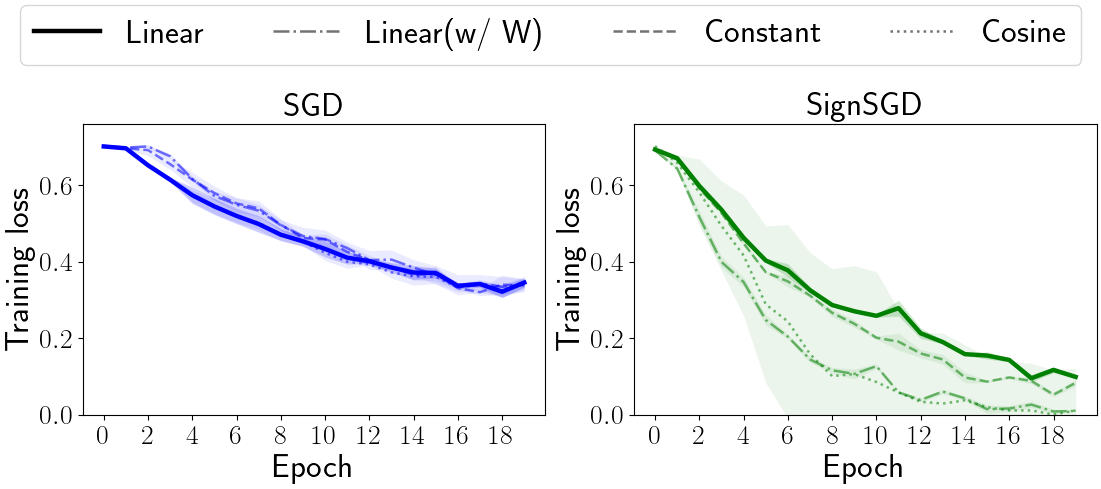}
        \caption{{\bf Learning-rate schedulers improve SignSGD but have limited effect on SGD.} Training loss curves for RoBERTa on RTE. ``w/ W'' denotes with warmup.}
        \label{fig:sch}
        \end{minipage}
    \end{figure*}

    \section{Numerical evaluation}
    \label{sec:numerical_evaluation} 
    We numerically evaluate the following claims.
    \vspace{-0.\baselineskip}
    { \setlength{\leftmargini}{10pt} \setlength{\itemsep}{0pt} \setlength{\parskip}{0pt} \setlength{\itemindent}{0pt} \setlength{\labelsep}{5pt}\setlength{\parsep}{0pt}\begin{itemize}\item Gradient heterogeneity is pronounced in Transformers and is influenced by the position of layer normalization (\Cref{experiment:gradient_heterogeneity}). 
    \vspace{-0.5\baselineskip}

    \item SGD encounters greater difficulty in optimization under gradient heterogeneity compared with adaptive optimizers such as Adam (\Cref{experiment:training_curves}).

    \end{itemize}}
    \vspace{-0.5\baselineskip}
    We provide details of the experimental setup and figures in~\cref{appendix:experiment} and additional results in~\cref{appendix:additional_results}.

    \subsection{Experimental setup}
    \paragraph{Datasets and models.}
    We used a total of nine datasets and three pre-trained models obtained from public sources. For NLP tasks, we used four datasets from SuperGLUE~\citep{wang2019superglue} (BoolQ, CB, RTE, and WiC) and three datasets from GLUE~\citep{wang2018glue} (CoLA, MRPC, and SST-2) with the RoBERTa-Base model~\citep{liu2020roberta}. For vision tasks, we used the Flowers102~\citep{nilsback2008automated} and FGVC-Aircraft (Aircraft)~\citep{maji2013fine} datasets with ViT-Base~\citep{dosovitskiy2020image} and ResNet18~\citep{he2016deep}.

    \paragraph{Training.}
    We compared optimizers with momentum (default) and without momentum (w/o M), as well as SignSGD with $\ell_1$-scaled learning rates (SignSGD (S); \cref{sec:imp-lr}). Gradient clipping was applied, and the learning-rate schedule was fixed within each domain.
    All models were fine-tuned from pre-trained weights.

\begin{table}[tb]
\centering
\caption{{\bf Post-LN increases gradient heterogeneity.} Higher Gini coefficient indicates greater heterogeneity. ``No-LN'' means no layer normalization.}
\label{tab:gini_layer_norm_rte}
\begin{tabular}{lll}
\toprule
Norm Type & Initialization  & Gini Coeff. \\
\midrule
No-LN     & Scratch     & $0.867 \pm 0.006$ \\
Pre-LN    & Scratch     & $0.880 \pm 0.004$ \\
Post-LN   & Scratch     & $0.941 \pm 0.012$ \\
Post-LN   & Pre-trained   & $0.944 \pm 0.005$ \\
\bottomrule
\end{tabular}
\vspace{-10pt} 
\end{table}  
    
    \subsection{Gradient heterogeneity}
    \label{experiment:gradient_heterogeneity}
    \Cref{fig:gradient_norm} shows that RoBERTa exhibits the highest gradient heterogeneity among the models, followed by ViT and ResNet18, indicating that Transformers have more pronounced gradient heterogeneity. In RoBERTa, gradients are smaller near input layers compared to output layers, consistent with our analysis in~\cref{sec:layer_norm}.

    \paragraph{Effect of layer normalization.} 
\Cref{tab:gini_layer_norm_rte} shows Gini coefficients for different normalization placements in RoBERTa on RTE. 
Post-LN shows the highest heterogeneity, followed by Pre-LN and No-LN, consistent with our analysis (\Cref{sec:layer_norm}). 
Pre-trained weights are only available for Post-LN.
  
    \subsection{Training curves}
    \label{experiment:training_curves}
    \paragraph{Limitations of SGD under gradient heterogeneity.} As shown in~\cref{fig:optimization}, all optimizers successfully train ViT (and ResNet; see~\cref{fig_appendix:optimization}), whereas SGD fails to optimize RoBERTa, highlighting the challenge posed by gradient heterogeneity.
In contrast, SignSGD (S) reliably trains both ViT and RoBERTa.
These observations are consistent with our theoretical analysis in~\cref{theorem:complexity,theorem:complexity_stochastic}.
Additionally, the final training losses are similar between momentum and no-momentum variants of SGD and SignSGD, and Adam performs similarly to RMSProp, suggesting that adaptive learning rates, rather than momentum, are the primary cause of the performance gap~\citep{kunstner2023noise}.
Note that the RTE dataset, which has two almost balanced classes, rules out the heavy-tailed class imbalance as an explanation for the Adam--SGD gap~\citep{kunstner2024heavytailed}.
    
    \paragraph{Effectiveness of learning-rate schedules.}
    In NLP tasks, we use linear learning-rate scheduling by default. To assess whether SGD's poor performance is due to its schedule, we train RoBERTa with various schedules.
    In~\cref{fig:sch}, learning-rate schedules do not improve SGD, while SignSGD benefits significantly from the appropriate schedules, achieving performance comparable to Adam. 

    \section{Conclusion}
    We derive upper bounds on the iteration complexity of gradient-based and sign-based optimization methods.
    Our results suggest gradient and Hessian heterogeneity as key factors underlying the performance gap between Adam and SGD in Transformer models.
    Our analysis leverages the fact that SGD and SignSGD correspond to steepest descent under different norms, yielding implications for learning rate scaling of SignSGD.
    We further show that gradient heterogeneity is particularly pronounced in Post-LN Transformers.
    Empirically, SGD degrades under large gradient heterogeneity, whereas SignSGD, with appropriate learning-rate scheduling, achieves performance comparable to Adam.




    \bibliography{ref}
    \bibliographystyle{plainnat}
    \clearpage
    \appendix
    \onecolumn
    \renewcommand{\thetable}{S.\arabic{table}}
\renewcommand{\thefigure}{S.\arabic{figure}}
\setcounter{table}{0}
\setcounter{figure}{0}
\renewcommand{\theHtable}{S.\arabic{table}}
\renewcommand{\theHfigure}{S.\arabic{figure}}
\begin{figure}[H]
        \centering
        \includegraphics[width=0.99\columnwidth]{
                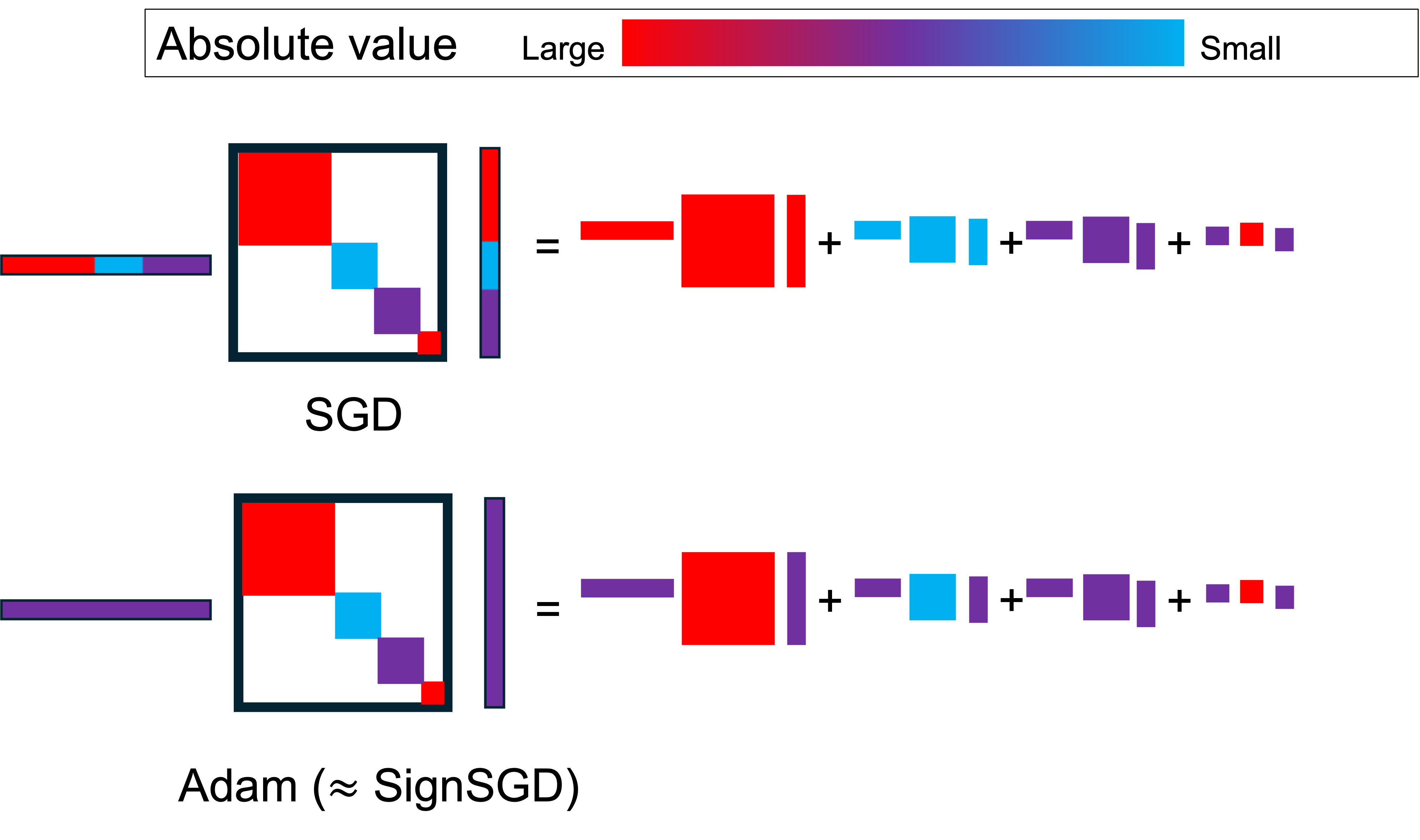
            }
            \caption{Proof intuition for~\Cref{theorem:complexity}. This figure illustrates the key quantity $|\Delta_{t}^{\top} \nabla^{2}L_{D}(\bm{\theta}_{t}) \Delta_{t}|$ discussed in \cref{sec:intuition}.
Under gradient–Hessian correlation, gradient-based sequences (top) align large gradient norms with large Hessian operator norms, amplifying the block-diagonal quadratic form.
Sign-based updates (bottom), which use unit-magnitude directions, mitigate this effect and yield more uniform contributions across blocks.}
\label{fig:intuition}
\end{figure}

\begin{figure}[H]
        \centering
        \begin{minipage}{0.49\columnwidth}
            \centering
            \includegraphics[height=75mm]{
                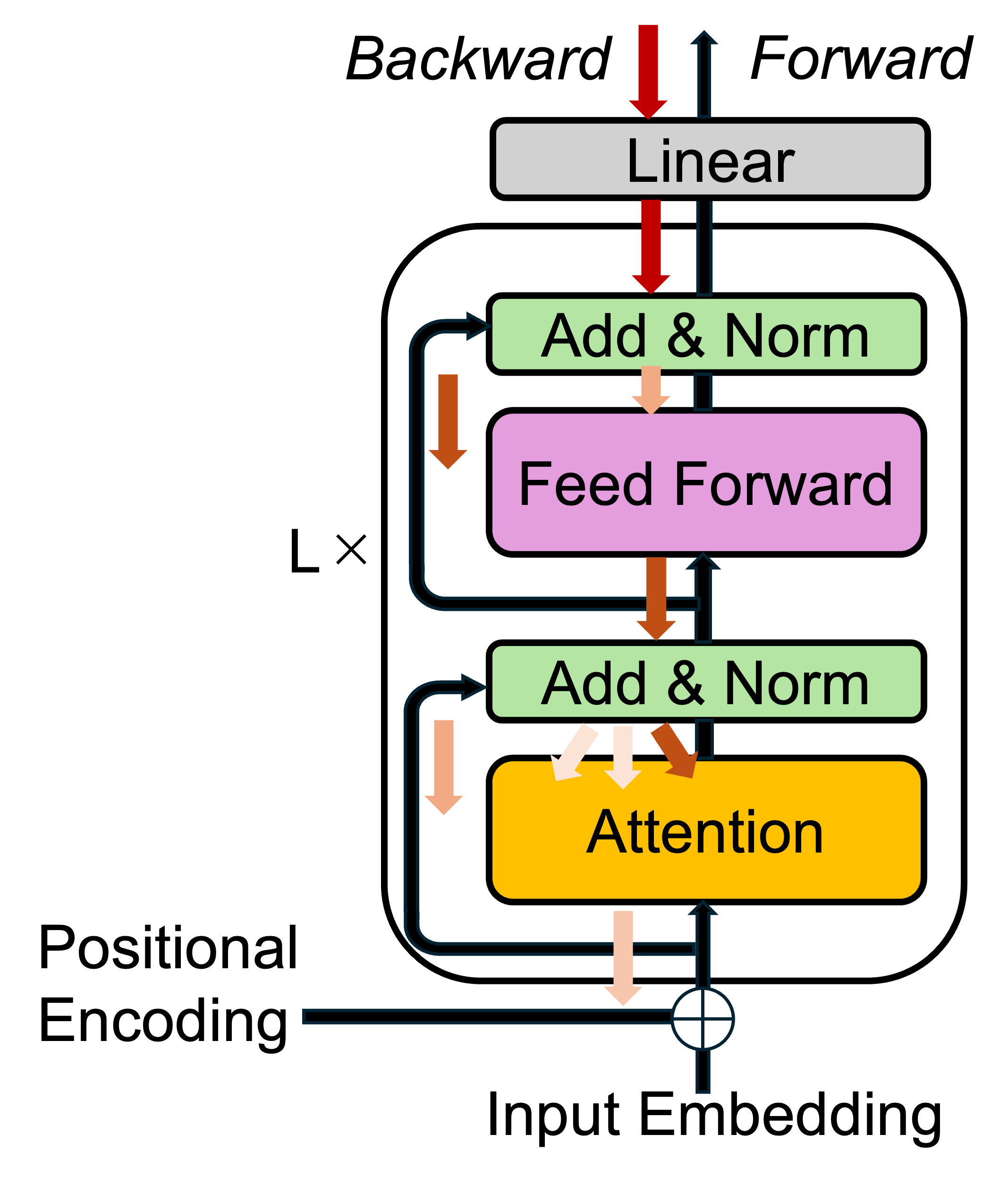
            }
            \subcaption{Transformer}
        \end{minipage}
        \begin{minipage}{0.49\columnwidth}
            \centering
            \includegraphics[height=75mm]{
                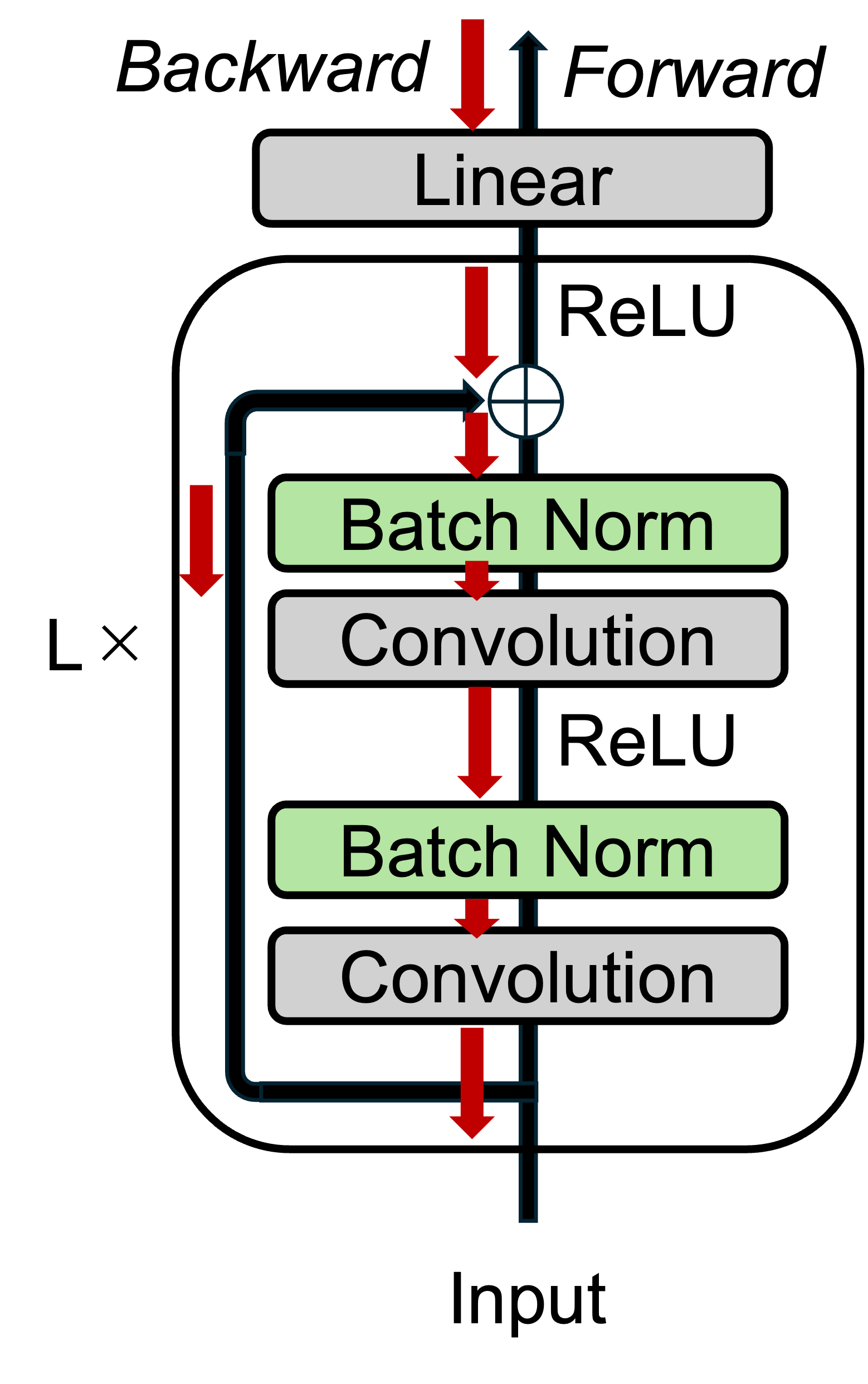
            }
            \subcaption{ResNet}
        \end{minipage}
    \caption{Architecture and gradient heterogeneity across models.
    ResNets, as a representative CNN architecture, are constructed by the repetitive stacking of homogeneous parameter blocks (convolutional layers), which promotes relatively uniform gradient propagation.
    In contrast, Transformers involve stacking of heterogeneous parameter blocks, such as the Query, Key, Value, and output projection layers in attention as well as MLP layers, leading to uneven gradient propagation across modules and pronounced gradient heterogeneity.
}
\label{fig:arch}

\end{figure}

    \section{Proof}
    \label{sec:proof}

\subsection{Proof intuition}
    \label{sec:intuition}
    Here, we outline the core analysis underlying~\Cref{theorem:complexity}.
    We apply the descent lemma~\citep{bertsekas1999nonlinear} in our setting and construct an upper bound for the term 
    \begin{align}
        L(\bm{\theta}_{t+1}) - L(\bm{\theta}_{t}).    
    \end{align}
    Under~\Cref{assumption:lipschitz}, the key quantity to be controlled is
    \begin{align}
        |\Delta_{t}^{\top} \nabla^{2}L(\bm{\theta}_{t}) \Delta_{t}|,
    \end{align}
    where
    \begin{align}
        \Delta_{t} &\coloneqq \begin{cases} \nabla L(\bm{\theta}_{t}^{\text{Grad}}) & (\text{gradient-based sequence})\\
        \sign(\nabla L(\bm{\theta}_{t}^{\text{Sign}})) & (\text{sign-based sequence})
        \end{cases}
    \end{align}
    is the update term without learning rate.
    
    With \Cref{assumption:block-hessian}, the dominant contribution to this quadratic form can be characterized by the block-diagonal component
    \begin{align}
        |\Delta_{t}^{\top} \nabla^{2}L_{D}(\bm{\theta}_{t}) \Delta_{t}|,
    \end{align}
    while the contribution of the off-diagonal blocks is controlled by the approximation error $\delta_D$.

    For the gradient-based sequence, $\Delta_t$ coincides with the gradient itself, so parameter blocks with large gradient norms are naturally aligned with Hessian blocks that have large operator norms, resulting in a large contribution to the quadratic form.
    In contrast, for the sign-based sequence, each coordinate of $\Delta_t$ has unit magnitude, which suppresses such alignment and prevents large contributions from individual parameter blocks.
    These effects are illustrated in~\Cref{fig:intuition}.
    The complete proof is provided in~\Cref{proof:complexity}.

    \subsection{Technical lemma}
    \begin{lemma}
        \label{lemma:base_inequality} Under~\cref{assumption:lipschitz}, for
        any $\bm{\theta}, \bm{\theta}'\in \mathbb{R}^{P}$, the following inequality
        holds:
        \begin{align}
            L(\bm{\theta}')- L(\bm{\theta}) \leq \nabla L(\bm{\theta})^{\top}(\bm{\theta}'-\bm{\theta}) + \frac{1}{2}(\bm{\theta}'- \bm{\theta})^{\top}\nabla^{2}L(\bm{\theta})(\bm{\theta}'-\bm{\theta}) +\frac{\rho_{H}}{6}\|\bm{\theta}'-\bm{\theta}\|_{2}^{3}\label{eq:base_inequality}.
        \end{align}
    \end{lemma}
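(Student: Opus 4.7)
The plan is to derive the inequality as a standard Taylor-with-remainder estimate along the straight segment joining $\bm{\theta}$ and $\bm{\theta}'$. Setting $\bm{d}\coloneqq\bm{\theta}'-\bm{\theta}$, I would apply the fundamental theorem of calculus twice: first to write $L(\bm{\theta}')-L(\bm{\theta}) = \int_0^1 \nabla L(\bm{\theta}+t\bm{d})^\top \bm{d}\,dt$, and then to expand $\nabla L(\bm{\theta}+t\bm{d}) = \nabla L(\bm{\theta}) + \int_0^t \nabla^2 L(\bm{\theta}+s\bm{d})\,\bm{d}\,ds$. Substituting produces the exact identity $L(\bm{\theta}')-L(\bm{\theta}) = \nabla L(\bm{\theta})^\top \bm{d} + \int_0^1\int_0^t \bm{d}^\top \nabla^2 L(\bm{\theta}+s\bm{d})\,\bm{d}\,ds\,dt$.

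Next, I would add and subtract $\bm{d}^\top\nabla^2 L(\bm{\theta})\bm{d}$ inside the double integral. Since $\int_0^1\int_0^t ds\,dt = \tfrac{1}{2}$, the constant piece contributes exactly $\tfrac{1}{2}\bm{d}^\top \nabla^2 L(\bm{\theta})\bm{d}$, giving the desired second-order term in~\eqref{eq:base_inequality}. The remainder then takes the form $\int_0^1\int_0^t \bm{d}^\top \bigl[\nabla^2 L(\bm{\theta}+s\bm{d}) - \nabla^2 L(\bm{\theta})\bigr]\bm{d}\,ds\,dt$.

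To control this remainder I would apply the operator-norm bound $|\bm{d}^\top \bm{M}\bm{d}|\le \|\bm{M}\|_2\|\bm{d}\|_2^2$ and then invoke the Hessian-Lipschitz property from~\Cref{assumption:lipschitz}, obtaining the pointwise estimate $\rho_H s\|\bm{d}\|_2^3$. Integrating gives $\rho_H\|\bm{d}\|_2^3 \int_0^1 \tfrac{t^2}{2}\,dt = \tfrac{\rho_H}{6}\|\bm{d}\|_2^3$, which matches the cubic remainder in the statement.

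There is no real obstacle here---this is a textbook cubic-regularization estimate that appears in the analysis of Newton-type and cubic-regularized methods. The only mild subtlety is that~\Cref{assumption:lipschitz} is stated on $\mathcal{R}_{\text{FT}}$, so strictly speaking the argument needs the entire segment $\{\bm{\theta}+t\bm{d}:t\in[0,1]\}$ to lie inside $\mathcal{R}_{\text{FT}}$; in the downstream applications of the lemma this is ensured by~\Cref{assumption:fine-tuning}, since the lemma will be applied between successive iterates that stay in the fine-tuning region.
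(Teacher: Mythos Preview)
Your proposal is correct and follows essentially the same approach as the paper: integrate along the segment, separate out the Hessian at the base point, and bound the remainder via the Hessian-Lipschitz assumption to obtain the $\tfrac{1}{2}$ and $\tfrac{\rho_H}{6}$ constants. The only cosmetic difference is that the paper first proves the intermediate bound $(\nabla L(\bm{\theta}')-\nabla L(\bm{\theta}))^{\top}(\bm{\theta}'-\bm{\theta}) \le (\bm{\theta}'-\bm{\theta})^{\top}\nabla^{2}L(\bm{\theta})(\bm{\theta}'-\bm{\theta}) + \tfrac{\rho_H}{2}\|\bm{\theta}'-\bm{\theta}\|_2^3$ and then inserts it into a single integral, whereas you go directly through the double integral; the computations are identical, and your remark about the segment needing to lie in $\mathcal{R}_{\text{FT}}$ is a valid caveat that the paper leaves implicit.
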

    \begin{proof}
        Define $\bm{\nu}(\alpha) \coloneqq \bm{\theta}+\alpha(\bm{\theta}'-\bm{\theta}
        )$. By the $\rho_H$-Lipschitz continuity of the Hessian (\cref{assumption:lipschitz}), we have:
        \begin{align}
             & \quad (\nabla L(\bm{\theta}')-\nabla L(\bm{\theta}))^{\top}(\bm{\theta}'-\bm{\theta})                                                                                                                                                           \\
             & = \int_{0}^{1}(\bm{\theta}'- \bm{\theta})^{\top}\nabla^{2}L(\bm{\nu}(\alpha))(\bm{\theta}'-\bm{\theta})d\alpha \notag                                                                                                                           \\
             & = (\bm{\theta}'- \bm{\theta})^{\top}\nabla^{2}L(\bm{\theta})(\bm{\theta}'-\bm{\theta}) + \int_{0}^{1}(\bm{\theta}'- \bm{\theta})^{\top}(\nabla^{2}L(\bm{\nu}(\alpha))- \nabla^{2}L(\bm{\theta}))(\bm{\theta}'-\bm{\theta})d\alpha               \\
             & \leq (\bm{\theta}'- \bm{\theta})^{\top}\nabla^{2}L(\bm{\theta})(\bm{\theta}'-\bm{\theta}) +\int_{0}^{1}\|\nabla^{2}L(\bm{\nu}(\alpha))- \nabla^{2}L(\bm{\theta})\|_{2}\|\bm{\theta}'-\bm{\theta}\|_{2}^{2}d\alpha                               \\
             & \leq (\bm{\theta}'- \bm{\theta})^{\top}\nabla^{2}L(\bm{\theta})(\bm{\theta}'-\bm{\theta}) +\int_{0}^{1}\rho_{H}\alpha\|\bm{\theta}'-\bm{\theta}\|_{2}^{3}d\alpha  \\
             & = (\bm{\theta}'- \bm{\theta})^{\top}\nabla^{2}L(\bm{\theta})(\bm{\theta}'-\bm{\theta}) +\frac{\rho_{H}}{2}\|\bm{\theta}'-\bm{\theta}\|_{2}^{3}.\label{eq:product}
        \end{align}
        Using this inequality, we obtain:
        \begin{align}
             & \quad L(\bm{\theta}')- L(\bm{\theta})                                                                                                                                                                                                                                                                     \\
             & = \int_{0}^{1}\nabla L(\bm{\nu}(\alpha))^{\top}(\bm{\theta}'-\bm{\theta})d\alpha \notag                                                                                                                                                                                                                   \\
             & = \nabla L(\bm{\theta})^{\top}(\bm{\theta}'-\bm{\theta}) + \int_{0}^{1}(\nabla L(\bm{\nu}(\alpha))-\nabla L(\bm{\theta}))^{\top}(\bm{\theta}'-\bm{\theta})d\alpha                                                                                                                                         \\
             & = \nabla L(\bm{\theta})^{\top}(\bm{\theta}'-\bm{\theta}) + \int_{0}^{1}(\nabla L(\bm{\nu}(\alpha))-\nabla L(\bm{\theta}))^{\top}\frac{1}{\alpha}(\bm{\nu}(\alpha)-\bm{\theta})d\alpha                                                                                                                     \\
             & \leq \nabla L(\bm{\theta})^{\top}(\bm{\theta}'-\bm{\theta}) + \int_{0}^{1}\frac{1}{\alpha}\left((\bm{\nu}(\alpha)- \bm{\theta})^{\top}\nabla^{2}L(\bm{\theta})(\bm{\nu}(\alpha)-\bm{\theta}) +\frac{\rho_{H}}{2}\|\bm{\nu}(\alpha)-\bm{\theta}\|_{2}^{3}\right)d\alpha \\
             & = \nabla L(\bm{\theta})^{\top}(\bm{\theta}'-\bm{\theta}) + \int_{0}^{1}\left((\bm{\theta}'- \bm{\theta})^{\top}\nabla^{2}L(\bm{\theta})(\bm{\theta}'-\bm{\theta})\alpha +\frac{\rho_{H}}{2}\|\bm{\theta}'-\bm{\theta}\|_{2}^{3}\alpha^{2}\right)d\alpha                                                   \\
             & = \nabla L(\bm{\theta})^{\top}(\bm{\theta}'-\bm{\theta}) + \frac{1}{2}(\bm{\theta}'- \bm{\theta})^{\top}\nabla^{2}L(\bm{\theta})(\bm{\theta}'-\bm{\theta}) +\frac{\rho_{H}}{6}\|\bm{\theta}'-\bm{\theta}\|_{2}^{3}.
        \end{align}
    \end{proof}
    \begin{lemma}
        \label{lemma:cubic_inequality} For any $a, b\geq 0$, the following
        inequality holds:
        \begin{align}
            (a+b)^{3}\leq 4(a^{3}+b^{3}).
        \end{align}
        \begin{proof}
            Calculating the difference between the right-hand and left-hand side,
            we obtain:
            \begin{align}
                4(a^{3}+b^{3}) - (a+b)^{3} & = 4(a^{3}+b^{3}) - (a^{3}+ 3a^{2}b + 3ab^{2}+ b^{3}) \\
                                           & = 3(a^{3}+b^{3}) - 3a^{2}b - 3ab^{2}                 \\
                                           & = 3(a+b)(a-b)^{2}\geq 0.
            \end{align}
        \end{proof}
    \end{lemma}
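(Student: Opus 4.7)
The plan is to prove the elementary inequality $(a+b)^3 \le 4(a^3+b^3)$ by showing that the difference $4(a^3+b^3) - (a+b)^3$ is nonnegative for all $a,b \ge 0$. First I would expand the cube on the right, obtaining $(a+b)^3 = a^3 + 3a^2b + 3ab^2 + b^3$, and subtract it from $4(a^3+b^3)$ to get $3a^3 + 3b^3 - 3a^2b - 3ab^2$. Then I would pull out the common factor of $3$ and try to factor the remaining expression into a manifestly nonnegative form.

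The key algebraic step is to recognize the factorization $a^3 + b^3 - a^2 b - a b^2 = a^2(a-b) - b^2(a-b) = (a-b)(a^2 - b^2) = (a-b)^2(a+b)$. Once this identity is written down, the difference becomes $3(a+b)(a-b)^2$, which is a product of nonnegative quantities under the hypothesis $a,b \ge 0$, so it is nonnegative and the desired inequality follows.

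There is no substantive obstacle here; the only thing to be careful about is choosing the correct factorization and verifying the sign conditions. An alternative route would be to appeal to the convexity of $x \mapsto x^3$ on $[0,\infty)$ via Jensen's inequality, giving $\bigl(\tfrac{a+b}{2}\bigr)^3 \le \tfrac{a^3+b^3}{2}$ and hence $(a+b)^3 \le 4(a^3+b^3)$, but the direct factoring approach above is more self-contained and matches the elementary style used elsewhere in the appendix.
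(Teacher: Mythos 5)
Your proposal is correct and follows essentially the same route as the paper: expand $(a+b)^{3}$, subtract from $4(a^{3}+b^{3})$, and factor the difference as $3(a+b)(a-b)^{2}\geq 0$. The convexity alternative you mention is also valid, but the paper uses exactly the direct factoring argument you give.
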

    \subsection{Proof of Theorem \ref{theorem:complexity}}
    \label{proof:complexity}
    \begin{em}
        \paragraph{\Cref{theorem:complexity} is restated.} Assume $\delta_{D}< \min(\Lambda_{G},\Lambda_{P})/3$. Then, the iteration complexities in the deterministic setting are bounded as follows.

        For the gradient-based sequence, suppose that $\varepsilon < \frac{\Lambda_G^2}{\rho_H \sqrt{P}}$ holds and that the learning rate at time $t$ satisfies $\eta_t = \zeta_{t}\min(\frac{1}{\Lambda_{G}}, \frac{1}{\sqrt{\rho_{H}\|\nabla L(\bm{\theta}^{\text{Grad}}_{t})\|_{2}}})$, where $\zeta_{t}\in [\zeta_{0}, 1]$, we have
        \begin{align}
            \mathcal{T}_{\varepsilon}(\{\bm{\theta}^{\text{Grad}}_{t}\}_{t=0}^{\infty}, L, \|\mathord{\cdot}\|_{2}) & \leq \frac{6(L(\bm{\theta}_{0}) - L_{*})}{P\varepsilon^{2}\zeta_{0}}\Lambda_{G}.
        \end{align}
        For the sign-based sequence, suppose that $\varepsilon < \frac{\Lambda_P^2}{\rho_H \sqrt{P}}$ holds and that the learning rate at time $t$ satisfies $\eta_t = \zeta_{t}\min(\frac{\|\nabla L(\bm{\theta}^{\text{Sign}}_{t})\|_{1}}{\Lambda_{P}P}, \sqrt{\frac{\|\nabla L(\bm{\theta}^{\text{Sign}}_{t})\|_{1}}{\rho_{H}P^{3/2}}})$, where $\zeta_{t}\in [\zeta_{0}, 1]$, we have
        \begin{align}
            \mathcal{T}_{\varepsilon}(\{\bm{\theta}^{\text{Sign}}_{t}\}_{t=0}^{\infty}, L, \|\mathord{\cdot}\|_{1}) & \leq \frac{6(L(\bm{\theta}_{0}) - L_{*})}{P\varepsilon^{2}\zeta_{0}}\Lambda_{P}.
        \end{align} 
    \end{em}
    \begin{proof}[Proof of gradient-based sequence]
        The update rule of the gradient-based sequence in deterministic setting is
        $\bm{\theta}^{\text{Grad}}_{t+1}=\bm{\theta}^{\text{Grad}}_{t}-\eta_{t}\nabla
        L (\bm{\theta}^{\text{Grad}}_{t})$. Thus, we obtain:
        \begin{align}
             & \quad L(\bm{\theta}^{\text{Grad}}_{t+1})- L(\bm{\theta}^{\text{Grad}}_{t})                                                                                                                                                                                                                                                                                                                                                                                 \\
             & \leq \nabla L(\bm{\theta}^{\text{Grad}}_{t})^{\top}(\bm{\theta}^{\text{Grad}}_{t+1}-\bm{\theta}^{\text{Grad}}_{t}) + \frac{1}{2}(\bm{\theta}^{\text{Grad}}_{t+1}- \bm{\theta}^{\text{Grad}}_{t})^{\top}\nabla^{2}L(\bm{\theta}^{\text{Grad}}_{t})(\bm{\theta}^{\text{Grad}}_{t+1}-\bm{\theta}^{\text{Grad}}_{t}) +\frac{\rho_{H}}{6}\|\bm{\theta}^{\text{Grad}}_{t+1}-\bm{\theta}^{\text{Grad}}_{t}\|_{2}^{3} \\
             & = -\eta_{t}\|\nabla L(\bm{\theta}^{\text{Grad}}_{t})\|_{2}^{2}+\frac{\eta_{t}^{2}}{2}\nabla L(\bm{\theta}^{\text{Grad}}_{t})^{\top}\nabla^{2}L(\bm{\theta}^{\text{Grad}}_{t})\nabla L(\bm{\theta}^{\text{Grad}}_{t})+\eta_{t}^{3}\frac{\rho_{H}}{6}\|\nabla L(\bm{\theta}^{\text{Grad}}_{t})\|_{2}^{3}                                                                                                                                                     \\
             & = -\eta_{t}\|\nabla L(\bm{\theta}^{\text{Grad}}_{t})\|_{2}^{2}+\frac{\eta_{t}^{2}}{2}\nabla L(\bm{\theta}^{\text{Grad}}_{t})^{\top}\nabla^{2}L_{D}(\bm{\theta}^{\text{Grad}}_{t})\nabla L(\bm{\theta}^{\text{Grad}}_{t})                                                                                                                                                                                                                                   \\
             & \quad +\frac{\eta_{t}^{2}}{2}\nabla L(\bm{\theta}^{\text{Grad}}_{t})^{\top}(\nabla^{2}L(\bm{\theta}^{\text{Grad}}_{t})-\nabla^{2}L_{D}(\bm{\theta}^{\text{Grad}}_{t}))\nabla L(\bm{\theta}^{\text{Grad}}_{t})+\eta_{t}^{3}\frac{\rho_{H}}{6}\|\nabla L(\bm{\theta}^{\text{Grad}}_{t})\|_{2}^{3}                                                                                                                                                            \\
             & = -\eta_{t}\|\nabla L(\bm{\theta}^{\text{Grad}}_{t})\|_{2}^{2}+\frac{\eta_{t}^{2}}{2}\sum_{b}[\nabla L(\bm{\theta}^{\text{Grad}}_{t})]_{b}^{\top}[\nabla^{2}L(\bm{\theta}^{\text{Grad}}_{t})]_{b}[\nabla L(\bm{\theta}^{\text{Grad}}_{t})]_{b}                                                                                                                                                                                                             \\
             & \quad +\frac{\eta_{t}^{2}}{2}\nabla L(\bm{\theta}^{\text{Grad}}_{t})^{\top}(\nabla^{2}L(\bm{\theta}^{\text{Grad}}_{t})-\nabla^{2}L_{D}(\bm{\theta}^{\text{Grad}}_{t}))\nabla L(\bm{\theta}^{\text{Grad}}_{t})+\eta_{t}^{3}\frac{\rho_{H}}{6}\|\nabla L(\bm{\theta}^{\text{Grad}}_{t})\|_{2}^{3}                                                                                                                                                            \\
             & \leq -\eta_{t}\|\nabla L(\bm{\theta}^{\text{Grad}}_{t})\|_{2}^{2}+\frac{\eta_{t}^{2}}{2}\sum_{b}\|[\nabla^{2}L(\bm{\theta}^{\text{Grad}}_{t})]_{b}\|_{2}\|[\nabla L(\bm{\theta}^{\text{Grad}}_{t})]_{b}\|_{2}^{2}                                                                                                                                                                                                                                          \\
             & \quad +\frac{\eta_{t}^{2}}{2}\|\nabla^{2}L(\bm{\theta}^{\text{Grad}}_{t})-\nabla^{2}L_{D}(\bm{\theta}^{\text{Grad}}_{t})\|_{2}\|\nabla L(\bm{\theta}^{\text{Grad}}_{t})\|_{2}^{2}+\eta_{t}^{3}\frac{\rho_{H}}{6}\|\nabla L(\bm{\theta}^{\text{Grad}}_{t})\|_{2}^{3}                                                                                                                                                                                        \\
             & \leq -\eta_{t}\|\nabla L(\bm{\theta}^{\text{Grad}}_{t})\|_{2}^{2}+\frac{\eta_{t}^{2}}{2}\Lambda_{G}\|\nabla L(\bm{\theta}^{\text{Grad}}_{t})\|_{2}^{2}+\frac{\eta_{t}^{2}}{2}\delta_{D}\|\nabla L(\bm{\theta}^{\text{Grad}}_{t})\|_{2}^{2}+\eta_{t}^{3}\frac{\rho_{H}}{6}\|\nabla L(\bm{\theta}^{\text{Grad}}_{t})\|_{2}^{3}\label{eq:grad-dtm}                                           \\
             & \leq -\eta_{t}\|\nabla L(\bm{\theta}^{\text{Grad}}_{t})\|_{2}^{2}+\frac{\eta_{t}}{2}\|\nabla L(\bm{\theta}^{\text{Grad}}_{t})\|_{2}^{2}+\frac{\eta_{t}}{6}\|\nabla L(\bm{\theta}^{\text{Grad}}_{t})\|_{2}^{2}+\frac{\eta_{t}}{6}\|\nabla L(\bm{\theta}^{\text{Grad}}_{t})\|_{2}^{2}\\
             & = -\frac{\eta_{t}}{6}\|\nabla L(\bm{\theta}^{\text{Grad}}_{t})\|_{2}^{2},
        \end{align}
        where we use \cref{lemma:base_inequality}, $\eta_{t}\leq \min\!\left(\frac{1}{\Lambda_{G}}, \frac{1}{\sqrt{\rho_{H}\|\nabla L(\bm{\theta}^{\text{Grad}}_{t})\|_{2}}}\right)$, and $\delta_{D}< \Lambda_{G}/3$.
        
       Taking a telescoping sum and noting that $\bm{\theta}_{0}= \bm{\theta}^{\text{Grad}}_{0}$ and $\eta_{t}\geq \zeta_{0} \min\!\left(\frac{1}{\Lambda_{G}}, \frac{1}{\sqrt{\rho_{H} \|\nabla L(\bm{\theta}^{\text{Grad}}_{t})\|_{2}}}\right)$, we have:
        \begin{align}
            L(\bm{\theta}^{\text{Grad}}_{T})- L(\bm{\theta}_{0}) & \leq -\frac{1}{6}\sum_{t=0}^{T-1}\eta_{t}\|\nabla L(\bm{\theta}^{\text{Grad}}_{t})\|_{2}^{2}                                                                                               \\
                                                                 & \leq -\frac{\zeta_{0}}{6}\sum_{t=0}^{T-1}\min(\frac{\|\nabla L(\bm{\theta}^{\text{Grad}}_{t})\|_{2}^{2}}{\Lambda_{G}}, \frac{\|\nabla L(\bm{\theta}^{\text{Grad}}_{t})\|_{2}^{3/2}}{\sqrt{\rho_{H}}} ).
        \end{align}
        Assume that
        $\|\nabla L(\bm{\theta}^{\text{Grad}}_{t})\|_{2}\geq \sqrt{P}\varepsilon$
        holds for all $0\leq t < T$. Then, using $\varepsilon< \frac{\Lambda_{G}^{2}}{\rho_{H}\sqrt{P}}$, we have:
        \begin{align}
            L(\bm{\theta}^{\text{Grad}}_{T})- L(\bm{\theta}_{0}) & \leq -\frac{T\zeta_{0}}{6}\min(\frac{P\varepsilon^{2}}{\Lambda_{G}}, \frac{P^{3/4}\varepsilon^{3/2}}{\sqrt{\rho_{H}}})    \\
                                                                 & = -\frac{TP\varepsilon^{2}\zeta_{0}}{6\Lambda_{G}}.
        \end{align}
        Therefore, we have
        \begin{align}
            T & \leq \frac{6(L(\bm{\theta}_{0})-L(\bm{\theta}^{\text{Grad}}_{T}))}{P\varepsilon^{2}\zeta_{0}}\Lambda_{G} \\
              & \leq \frac{6(L(\bm{\theta}_{0})-L_{*})}{P\varepsilon^{2}\zeta_{0}}\Lambda_{G}.
        \end{align}
        This means
        \begin{align}
            \mathcal{T}_{\varepsilon}(\{\bm{\theta}^{\text{Grad}}_{t}\}_{t=0}^{\infty}, L,\|\mathord{\cdot}\|_{2}) & \leq \frac{6(L(\bm{\theta}_{0}) - L_{*})}{P\varepsilon^{2}\zeta_{0}}\Lambda_{G}.
        \end{align}
    \end{proof}
    \begin{proof}[Proof of sign-based sequence]
        The update rule of the sign-based sequence in deterministic setting is $\bm
        {\theta}^{\text{Sign}}_{t+1}=\bm{\theta}^{\text{Sign}}_{t}-\eta_{t}\sign
        (\nabla L(\bm{\theta}^{\text{Sign}}_{t}))$. Thus, we obtain:
        \begin{align}
             & \quad L(\bm{\theta}^{\text{Sign}}_{t+1})- L(\bm{\theta}^{\text{Sign}}_{t})                                                                                                                                                                                                                                                                                                                                                                                 \\
             & \leq \nabla L(\bm{\theta}^{\text{Sign}}_{t})^{\top}(\bm{\theta}^{\text{Sign}}_{t+1}-\bm{\theta}^{\text{Sign}}_{t}) + \frac{1}{2}(\bm{\theta}^{\text{Sign}}_{t+1}- \bm{\theta}^{\text{Sign}}_{t})^{\top}\nabla^{2}L(\bm{\theta}^{\text{Sign}}_{t})(\bm{\theta}^{\text{Sign}}_{t+1}-\bm{\theta}^{\text{Sign}}_{t}) +\frac{\rho_{H}}{6}\|\bm{\theta}^{\text{Sign}}_{t+1}-\bm{\theta}^{\text{Sign}}_{t}\|_{2}^{3} \\
             & = -\eta_{t}\|\nabla L(\bm{\theta}^{\text{Sign}}_{t})\|_{1}+\frac{\eta_{t}^{2}}{2}\sign(\nabla L(\bm{\theta}^{\text{Sign}}_{t}))^{\top}\nabla^{2}L(\bm{\theta}^{\text{Sign}}_{t})\sign(\nabla L(\bm{\theta}^{\text{Sign}}_{t}))+\eta_{t}^{3}\frac{\rho_{H}}{6}\|\sign(\nabla L(\bm{\theta}^{\text{Sign}}_{t}))\|_{2}^{3}                                                                                                                                    \\
             & = -\eta_{t}\|\nabla L(\bm{\theta}^{\text{Sign}}_{t})\|_{1}+\frac{\eta_{t}^{2}}{2}\sign(\nabla L(\bm{\theta}^{\text{Sign}}_{t}))^{\top}\nabla^{2}L_{D}(\bm{\theta}^{\text{Sign}}_{t})\sign(\nabla L(\bm{\theta}^{\text{Sign}}_{t}))                                                                                                                                                                                                                         \\
             & \quad+\frac{\eta_{t}^{2}}{2}\sign(\nabla L(\bm{\theta}^{\text{Sign}}_{t}))^{\top}(\nabla^{2}L(\bm{\theta}^{\text{Sign}}_{t})-\nabla^{2}L_{D}(\bm{\theta}^{\text{Sign}}_{t}))\sign(\nabla L(\bm{\theta}^{\text{Sign}}_{t}))+\eta_{t}^{3}\frac{\rho_{H}}{6}P^{3/2}                                                                                                                                                                                           \\
             & = -\eta_{t}\|\nabla L(\bm{\theta}^{\text{Sign}}_{t})\|_{1}+\frac{\eta_{t}^{2}}{2}\sum_{b}[\sign(\nabla L(\bm{\theta}^{\text{Sign}}_{t}))]_{b}^{\top}[\nabla^{2}L(\bm{\theta}^{\text{Sign}}_{t})]_{b}[\sign(\nabla L(\bm{\theta}^{\text{Sign}}_{t}))]_{b}                                                                                                                                                                                                   \\
             & \quad +\frac{\eta_{t}^{2}}{2}\sign(\nabla L(\bm{\theta}^{\text{Sign}}_{t}))^{\top}(\nabla^{2}L(\bm{\theta}^{\text{Sign}}_{t})-\nabla^{2}L_{D}(\bm{\theta}^{\text{Sign}}_{t}))\sign(\nabla L(\bm{\theta}^{\text{Sign}}_{t}))+\eta_{t}^{3}\frac{\rho_{H}}{6}P^{3/2}                                                                                                                                                                                          \\
             & \leq -\eta_{t}\|\nabla L(\bm{\theta}^{\text{Sign}}_{t})\|_{1}+\frac{\eta_{t}^{2}}{2}\sum_{b}\|[\nabla^{2}L(\bm{\theta}^{\text{Sign}}_{t})]_{b}\|_{2}P_{b}+\frac{\eta_{t}^{2}}{2}\|\nabla^{2}L(\bm{\theta}^{\text{Sign}}_{t})-\nabla^{2}L_{D}(\bm{\theta}^{\text{Sign}}_{t})\|_{2}P+\eta_{t}^{3}\frac{\rho_{H}}{6}P^{3/2}                                                                                                                                   \\
             & \leq -\eta_{t}\|\nabla L(\bm{\theta}^{\text{Sign}}_{t})\|_{1}+\frac{\eta_{t}^{2}}{2}\Lambda_{P}P+\frac{\eta_{t}^{2}}{2}\delta_{D}P+\eta_{t}^{3}\frac{\rho_{H}}{6}P^{3/2}      \label{eq:sign-dtm}                                                                                                                                                                                                                                                                             \\
             & \leq -\eta_{t}\|\nabla L(\bm{\theta}^{\text{Sign}}_{t})\|_{1}+\frac{\eta_{t}}{2}\|\nabla L(\bm{\theta}^{\text{Sign}}_{t})\|_{1}+\frac{\eta_{t}}{6}\|\nabla L(\bm{\theta}^{\text{Sign}}_{t})\|_{1}+\frac{\eta_{t}}{6}\|\nabla L(\bm{\theta}^{\text{Sign}}_{t})\|_{1} \\
             & = -\frac{\eta_{t}}{6}\|\nabla L(\bm{\theta}^{\text{Sign}}_{t})\|_{1},
        \end{align}
where we used \cref{lemma:base_inequality}, $\eta_{t}\leq \min(\frac{\|\nabla L(\bm{\theta}^{\text{Sign}}_{t})\|_{1}}{\Lambda_{P}P}, \sqrt{\frac{\|\nabla L(\bm{\theta}^{\text{Sign}}_{t})\|_{1}}{\rho_{H}P^{3/2}}})$, and $\delta_{D}< \Lambda_{P}/3$.

        Taking the telescoping sum, and noting that
        $\bm{\theta}_{0}= \bm{\theta}^{\text{Sign}}_{0}$ and $\eta_{t}\geq \zeta_{0}\min(\frac{\|\nabla L(\bm{\theta}^{\text{Sign}}_{t})\|_{1}}{\Lambda_{P}P}, \sqrt{\frac{\|\nabla L(\bm{\theta}^{\text{Sign}}_{t})\|_{1}}{\rho_{H}P^{3/2}}})$, we have:
        \begin{align}
            L(\bm{\theta}^{\text{Sign}}_{T})- L(\bm{\theta}_{0}) & \leq -\frac{1}{6}\sum_{t=0}^{T-1}\eta_{t}\|\nabla L(\bm{\theta}^{\text{Sign}}_{t})\|_{1}                                                                                                                                                \\
                                                                 & \leq -\frac{\zeta_{0}}{6}\sum_{t=0}^{T-1}\min(\frac{\|\nabla L(\bm{\theta}^{\text{Sign}}_{t})\|_{1}}{P\Lambda_{P}}, \sqrt{\frac{\|\nabla L(\bm{\theta}^{\text{Sign}}_{t})\|_{1}}{\rho_{H}P^{3/2}}})\|\nabla L(\bm{\theta}^{\text{Sign}}_{t})\|_{1} \label{eq:sign_ineq_proof}.
        \end{align}
        Assume that $\|\nabla L(\bm{\theta}^{\text{Sign}}_{t})\|_{1}\geq P\varepsilon$ holds
        for all $0\leq t < T$. Then, using $\varepsilon< \frac{\Lambda_{P}^{2}}{\rho_{H}\sqrt{P}}$, we have
        \begin{align}
            L(\bm{\theta}^{\text{Sign}}_{T})- L(\bm{\theta}_{0}) & \leq -\frac{TP\varepsilon\zeta_{0}}{6}\min(\frac{\varepsilon}{\Lambda_{P}}, \sqrt{\frac{\varepsilon}{\rho_{H}P^{1/2}}})   \\
                                                                 & = -\frac{TP\varepsilon^{2}\zeta_{0}}{6\Lambda_{P}}.
        \end{align}
        Therefore, we have:
        \begin{align}
            T & \leq \frac{6(L(\bm{\theta}_{0})-L(\bm{\theta}^{\text{Sign}}_{T}))}{P\varepsilon^{2}\zeta_{0}}\Lambda_{P} \\
              & \leq \frac{6(L(\bm{\theta}_{0})-L_{*})}{P\varepsilon^{2}\zeta_{0}}\Lambda_{P}.
        \end{align}
        This means:
        \begin{align}
            \mathcal{T}_{\varepsilon}(\{\bm{\theta}^{\text{Sign}}_{t}\}_{t=0}^{\infty}, L, \|\mathord{\cdot}\|_{1}) & \leq \frac{6(L(\bm{\theta}_{0}) - L_{*})}{P\varepsilon^{2}\zeta_{0}}\Lambda_{P}.
        \end{align}
    \end{proof}
    \subsection{Proof of Theorem \ref{theorem:complexity_stochastic}}
    \label{proof:complexity_stochastic}
    \begin{em}
        \paragraph{\Cref{theorem:complexity_stochastic} is restated.} Assume $\delta_{D}< \min(\Lambda_{G},\Lambda_{P})/3$. Then, the iteration complexities the stochastic setting are bounded as follows.

        For the gradient-based sequence, suppose that $\varepsilon< \frac{(1+\sigma_{2})^{2}\Lambda_{G}^{2}}{4(1+\sigma_{3})\rho_{H}\sqrt{P}}$ holds and that the learning rate at time $t$ satisfies $\eta_t =\zeta_{t} \min(\frac{1}{(1+\sigma_{2})\Lambda_{G}}, \frac{1}{2\sqrt{(1+\sigma_{3})\rho_{H}\|\nabla
            L(\bm{\theta}^{\text{Grad}}_{t})\|_{2}}})$, where $\zeta_{t}\in [\zeta_{0}, 1]$, we have
        \begin{align}
            \mathcal{T}_{\varepsilon}(\{\bm{\theta}^{\text{Grad}}_{t}\}_{t=0}^{\infty}, L,\|\mathord{\cdot}\|_{2}) & \leq \frac{12(1+\sigma_{2})(L(\bm{\theta}_{0})-L_{*})}{P\varepsilon^{2}\zeta_{0}}\Lambda_{G}.
        \end{align}
        For the sign-based sequence, suppose that $\varepsilon< \frac{\Lambda_{P}^{2}}{\rho_{H}\sqrt{P}}$ and $\sigma_{2}\leq \frac{1}{24}$ hold and that the learning rate at time $t$ satisfies $\eta_t= \zeta_{t} \min(\frac{\|\nabla L(\bm{\theta}^{\text{Sign}}_{t})\|_{1}}{\Lambda_{P}P}
        , \sqrt{\frac{\|\nabla L(\bm{\theta}^{\text{Sign}}_{t})\|_{1}}{\rho_{H}P^{3/2}}})$, where $\zeta_{t}\in [\zeta_{0}, 1]$,
        we have
        \begin{align}
            \mathcal{T}_{\varepsilon}(\{\bm{\theta}^{\text{Sign}}_{t}\}_{t=0}^{\infty}, L, \|\mathord{\cdot}\|_{1}) \leq \frac{12(1+24\sigma_{2})(L(\bm{\theta}_{0})-L_{*})}{P\varepsilon^{2}\zeta_{0}}\Lambda_{P}.
        \end{align}
    \end{em}
    \begin{proof}[Proof of gradient-based sequence]
        The update rule of the gradient-based sequence in stochastic setting is $\bm
        {\theta}^{\text{Grad}}_{t+1}=\bm{\theta}^{\text{Grad}}_{t}-\eta_{t}\nabla
        \widehat{L}(\bm{\theta}^{\text{Grad}}_{t})$. Thus, using \cref{lemma:base_inequality}, we obtain:
        \begin{align}
             & \quad \mathbb{E}\left[L(\bm{\theta}^{\text{Grad}}_{t+1})- L(\bm{\theta}^{\text{Grad}}_{t})\mid \bm{\theta}^{\text{Grad}}_{t}\right]                                                                                                                                                                                                                                                                                                                                    \\
             & \leq \mathbb{E}\left[\nabla L(\bm{\theta}^{\text{Grad}}_{t})^{\top}(\bm{\theta}^{\text{Grad}}_{t+1}-\bm{\theta}^{\text{Grad}}_{t}) + \frac{1}{2}(\bm{\theta}^{\text{Grad}}_{t+1}- \bm{\theta}^{\text{Grad}}_{t})^{\top}\nabla^{2}L(\bm{\theta}^{\text{Grad}}_{t})(\bm{\theta}^{\text{Grad}}_{t+1}-\bm{\theta}^{\text{Grad}}_{t}) +\frac{\rho_{H}}{6}\|\bm{\theta}^{\text{Grad}}_{t+1}-\bm{\theta}^{\text{Grad}}_{t}\|_{2}^{3}\mid \bm{\theta}^{\text{Grad}}_{t}\right] \\
             & = -\eta_{t}\|\nabla L(\bm{\theta}^{\text{Grad}}_{t})\|_{2}^{2}+ \mathbb{E}\left[\frac{\eta_{t}^{2}}{2}\nabla \widehat{L}(\bm{\theta}^{\text{Grad}}_{t})^{\top}\nabla^{2}L(\bm{\theta}^{\text{Grad}}_{t})\nabla \widehat{L}(\bm{\theta}^{\text{Grad}}_{t}) +\eta_{t}^{3}\frac{\rho_{H}}{6}\|\nabla \widehat{L}(\bm{\theta}^{\text{Grad}}_{t})\|_{2}^{3}\mid \bm{\theta}^{\text{Grad}}_{t}\right]                                                                                                                                                                                                                                                                                                                               \\
             & = -\eta_{t}\|\nabla L(\bm{\theta}^{\text{Grad}}_{t})\|_{2}^{2}+ \mathbb{E}\left[\frac{\eta_{t}^{2}}{2}\nabla \widehat{L}(\bm{\theta}^{\text{Grad}}_{t})^{\top}\nabla^{2}L_{D}(\bm{\theta}^{\text{Grad}}_{t})\nabla \widehat{L}(\bm{\theta}^{\text{Grad}}_{t}) \mid \bm{\theta}^{\text{Grad}}_{t}\right]                                                                                                                                                                \\
             & \quad + \mathbb{E}\left[\frac{\eta_{t}^{2}}{2}\nabla \widehat{L}(\bm{\theta}^{\text{Grad}}_{t})^{\top}(\nabla^{2}L(\bm{\theta}^{\text{Grad}}_{t})-\nabla^{2}L_{D}(\bm{\theta}^{\text{Grad}}_{t}))\nabla \widehat{L}(\bm{\theta}^{\text{Grad}}_{t})+\eta_{t}^{3}\frac{\rho_{H}}{6}\|\nabla \widehat{L}(\bm{\theta}^{\text{Grad}}_{t})\|_{2}^{3}\mid \bm{\theta}^{\text{Grad}}_{t}\right]                                                                                \\
             & = -\eta_{t}\|\nabla L(\bm{\theta}^{\text{Grad}}_{t})\|_{2}^{2}+ \mathbb{E}\left[\frac{\eta_{t}^{2}}{2}\sum_{b}[\nabla \widehat{L}(\bm{\theta}^{\text{Grad}}_{t})]_{b}^{\top}[\nabla^{2}L(\bm{\theta}^{\text{Grad}}_{t})]_{b}[\nabla \widehat{L}(\bm{\theta}^{\text{Grad}}_{t})]_{b}\mid \bm{\theta}^{\text{Grad}}_{t}\right]                                                                                                                                           \\
             & \quad + \mathbb{E}\left[\frac{\eta_{t}^{2}}{2}\nabla \widehat{L}(\bm{\theta}^{\text{Grad}}_{t})^{\top}(\nabla^{2}L(\bm{\theta}^{\text{Grad}}_{t})-\nabla^{2}L_{D}(\bm{\theta}^{\text{Grad}}_{t}))\nabla \widehat{L}(\bm{\theta}^{\text{Grad}}_{t})+\eta_{t}^{3}\frac{\rho_{H}}{6}\|\nabla \widehat{L}(\bm{\theta}^{\text{Grad}}_{t})\|_{2}^{3}\mid \bm{\theta}^{\text{Grad}}_{t}\right]                                                                                \\
             & \leq -\eta_{t}\|\nabla L(\bm{\theta}^{\text{Grad}}_{t})\|_{2}^{2}+ \mathbb{E}\left[\frac{\eta_{t}^{2}}{2}\sum_{b}\|[\nabla^{2}L(\bm{\theta}^{\text{Grad}}_{t})]_{b}\|_{2}\|[\nabla \widehat{L}(\bm{\theta}^{\text{Grad}}_{t})]_{b}\|_{2}^{2}\mid \bm{\theta}^{\text{Grad}}_{t}\right]                                                                                                                                                                                  \\
             & \quad + \mathbb{E}\left[\frac{\eta_{t}^{2}}{2}\|\nabla^{2}L(\bm{\theta}^{\text{Grad}}_{t})-\nabla^{2}L_{D}(\bm{\theta}^{\text{Grad}}_{t})\|_{2}\|\nabla \widehat{L}(\bm{\theta}^{\text{Grad}}_{t})\|_{2}^{2}\mid \bm{\theta}^{\text{Grad}}_{t}\right] + \mathbb{E}\left[\eta_{t}^{3}\frac{\rho_{H}}{6}\|\nabla \widehat{L}(\bm{\theta}^{\text{Grad}}_{t})\|_{2}^{3}\mid \bm{\theta}^{\text{Grad}}_{t}\right] \label{eq:grad_stoch_inequality}.
        \end{align}
        For the second and third term, using Eqs.\eqref{eq:error_mean}\eqref{eq:error_2}, and $\delta_{D}< \Lambda_{G}/3$, we can derive an upper bound as follows:
        \begin{align}
             & \quad \mathbb{E}\left[\frac{\eta_{t}^{2}}{2}\sum_{b}\|[\nabla^{2}L(\bm{\theta}^{\text{Grad}}_{t})]_{b}\|_{2}\|[\nabla \widehat{L}(\bm{\theta}^{\text{Grad}}_{t})]_{b}\|_{2}^{2}\mid \bm{\theta}^{\text{Grad}}_{t}\right] + \mathbb{E}\left[\frac{\eta_{t}^{2}}{2}\|\nabla^{2}L(\bm{\theta}^{\text{Grad}}_{t})-\nabla^{2}L_{D}(\bm{\theta}^{\text{Grad}}_{t})\|_{2}\|\nabla \widehat{L}(\bm{\theta}^{\text{Grad}}_{t})\|_{2}^{2}\mid \bm{\theta}^{\text{Grad}}_{t}\right] \\
             & \leq \mathbb{E}\left[\frac{\eta_{t}^{2}}{2}\sum_{b}\|[\nabla^{2}L(\bm{\theta}^{\text{Grad}}_{t})]_{b}\|_{2}\|[\nabla \widehat{L}(\bm{\theta}^{\text{Grad}}_{t})]_{b}\|_{2}^{2}\mid \bm{\theta}^{\text{Grad}}_{t}\right] + \mathbb{E}\left[\frac{\eta_{t}^{2}}{2}\delta_{D}\|\nabla \widehat{L}(\bm{\theta}^{\text{Grad}}_{t})\|_{2}^{2}\mid \bm{\theta}^{\text{Grad}}_{t}\right]                                                                                         \\
             & = \frac{\eta_{t}^{2}}{2}\sum_{b}\|[\nabla^{2}L(\bm{\theta}^{\text{Grad}}_{t})]_{b}\|_{2}\sum_{i}\mathbb{E}\left[(([\nabla L(\bm{\theta}^{\text{Grad}}_{t})]_{b})_{i}+([\nabla \widehat{L}(\bm{\theta}^{\text{Grad}}_{t})]_{b})_{i}-([\nabla L(\bm{\theta}^{\text{Grad}}_{t})]_{b})_{i})^{2}\mid \bm{\theta}^{\text{Grad}}_{t}\right]                                                                                                                                     \\
             & \quad + \frac{\eta_{t}^{2}}{2}\delta_{D}\sum_{i}\mathbb{E}\left[(\nabla L(\bm{\theta}^{\text{Grad}}_{t})_{i}+ \nabla \widehat{L}(\bm{\theta}^{\text{Grad}}_{t})_{i}-\nabla L(\bm{\theta}^{\text{Grad}}_{t})_{i})^{2}\mid \bm{\theta}^{\text{Grad}}_{t}\right]                                                                                                                                                                                                            \\
             & \leq \frac{\eta_{t}^{2}}{2}\sum_{b}\|[\nabla^{2}L(\bm{\theta}^{\text{Grad}}_{t})]_{b}\|_{2}(1+\sigma_{2})([\nabla L(\bm{\theta}^{\text{Grad}}_{t})]_{b})_{i}^{2}+ \frac{\eta_{t}^{2}}{2}\delta_{D}\sum_{i}(1+\sigma_{2})\nabla L(\bm{\theta}^{\text{Grad}}_{t})_{i}^{2}                                                                                                                     \\
             & \leq \frac{\eta_{t}^{2}}{2}(1+\sigma_{2})\Lambda_{G}\|\nabla L(\bm{\theta}^{\text{Grad}}_{t})\|_{2}^{2}+ \frac{\eta_{t}^{2}}{2}(1+\sigma_{2})\delta_{D}\|\nabla L(\bm{\theta}^{\text{Grad}}_{t})\|_{2}^{2}\\
             & \leq \frac{2\eta_{t}^{2}}{3}(1+\sigma_{2})\Lambda_{G}\|\nabla L(\bm{\theta}^{\text{Grad}}_{t})\|_{2}^{2}\quad .\label{eq:second_third_gradient}
        \end{align}
        For the fourth term, using \cref{lemma:cubic_inequality} and Eq.\eqref{eq:error_3}, we can derive an upper bound as follows:
        \begin{align}
             & \quad \mathbb{E}\left[\eta_{t}^{3}\frac{\rho_{H}}{6}\|\nabla \widehat{L}(\bm{\theta}^{\text{Grad}}_{t})\|_{2}^{3}\mid \bm{\theta}^{\text{Grad}}_{t}\right]                                                                                                                                            \\
             & \leq \eta_{t}^{3}\frac{\rho_{H}}{6}\mathbb{E}\left[(\|\nabla L(\bm{\theta}^{\text{Grad}}_{t})\|_{2}+\|\nabla \widehat{L}(\bm{\theta}^{\text{Grad}}_{t})-\nabla L(\bm{\theta}^{\text{Grad}}_{t})\|_{2})^{3}\mid \bm{\theta}^{\text{Grad}}_{t}\right]                                                   \\
             & \leq \frac{2\eta_{t}^{3}\rho_{H}}{3}\mathbb{E}\left[\|\nabla L(\bm{\theta}^{\text{Grad}}_{t})\|_{2}^{3}+\|\nabla \widehat{L}(\bm{\theta}^{\text{Grad}}_{t})-\nabla L(\bm{\theta}^{\text{Grad}}_{t})\|_{2}^{3}\mid \bm{\theta}^{\text{Grad}}_{t}\right] \\
             & \leq \frac{2\eta_{t}^{3}\rho_{H}}{3}(1+\sigma_{3})\|\nabla L(\bm{\theta}^{\text{Grad}}_{t})\|_{2}^{3}.\label{eq:fourth_gradient}
        \end{align}
        Combining~Eqs.\eqref{eq:grad_stoch_inequality}\eqref{eq:second_third_gradient} \eqref{eq:fourth_gradient} and $\eta_{t}\leq \min(\frac{1}{(1+\sigma_{2})\Lambda_{G}}, \frac{1}{2\sqrt{(1+\sigma_{3})\rho_{H}\|\nabla
        L(\bm{\theta}^{\text{Grad}}_{t})\|_{2}}})$,
        we have:
        \begin{align}
             & \quad \mathbb{E}\left[L(\bm{\theta}^{\text{Grad}}_{t+1})- L(\bm{\theta}^{\text{Grad}}_{t})\mid \bm{\theta}^{\text{Grad}}_{t}\right]                                                                                                                                                                                                                                                    \\
             & \leq -\eta_{t}\|\nabla L(\bm{\theta}^{\text{Grad}}_{t})\|_{2}^{2}+ \frac{2\eta_{t}^{2}}{3}(1+\sigma_{2})\Lambda_{G}\|\nabla L(\bm{\theta}^{\text{Grad}}_{t})\|_{2}^{2}+ \frac{2\eta_{t}^{3}\rho_{H}}{3}(1+\sigma_{3})\|\nabla L(\bm{\theta}^{\text{Grad}}_{t})\|_{2}^{3}\\
             & \leq -\frac{\eta_{t}}{6}\|\nabla L(\bm{\theta}^{\text{Grad}}_{t})\|_{2}^{2}
            \end{align}
            Assume that the probability of the event $\mathcal{E}(T)=\{\forall s\leq T,\;\|\nabla L(\bm{\theta}^{\text{Grad}}_{s})\|_{2}\geq \sqrt{P}\varepsilon\}$ satisfies $\mathbb{P}\left( \mathcal{E}(T)\right)\geq \frac{1}{2}$. By applying the telescoping sum and taking expectations, and noting that
        $\bm{\theta}_{0}= \bm{\theta}^{\text{Grad}}_{0}$, $\eta_{t}\geq \zeta_{0} \min(\frac{1}{(1+\sigma_{2})\Lambda_{G}}, \frac{1}{2\sqrt{(1+\sigma_{3})\rho_{H}\|\nabla L(\bm{\theta}^{\text{Grad}}_{t})\|_{2}}})$, and $\varepsilon< \frac{(1+\sigma_{2})^{2}\Lambda_{G}^{2}}{4(1+\sigma_{3})\rho_{H}\sqrt{P}}$, we have:
        \begin{align}
            &\quad \mathbb{E}\left[L(\bm{\theta}^{\text{Grad}}_{T})\right]- L(\bm{\theta}_{0}) \\
            &\leq -\frac{1}{6}\sum_{t=0}^{T-1}\mathbb{E}\left[\eta_{t}\|\nabla L(\bm{\theta}^{\text{Grad}}_{t})\|_{2}^{2}\right]                                   \\
            & = -\frac{1}{6}\sum_{t=0}^{T-1}\left(\mathbb{E}\left[\eta_{t}\|\nabla L(\bm{\theta}^{\text{Grad}}_{t})\|_{2}^{2} \mid \mathcal{E}(T)\right]\mathbb{P}\left( \mathcal{E}(T)\right)+\mathbb{E}\left[\eta_{t}\|\nabla L(\bm{\theta}^{\text{Grad}}_{t})\|_{2}^{2} \mid \overline{\mathcal{E}(T)}\right]\mathbb{P}\left( \overline{\mathcal{E}(T)}\right)\right)                                  \\
            & \leq  -\frac{1}{6}\sum_{t=0}^{T-1}\mathbb{E}\left[\eta_{t}\|\nabla L(\bm{\theta}^{\text{Grad}}_{t})\|_{2}^{2} \mid \mathcal{E}(T)\right]\mathbb{P}\left( \mathcal{E}(T)\right) \\
            & \leq -\frac{1}{12}\sum_{t=0}^{T-1}\mathbb{E}\left[\eta_{t}\|\nabla L(\bm{\theta}^{\text{Grad}}_{t})\|_{2}^{2} \mid \mathcal{E}(T)\right]\\
            & \leq -\frac{\zeta_{0}}{12}\sum_{t=0}^{T-1}\mathbb{E}\left[\min(\frac{\|\nabla L(\bm{\theta}^{\text{Grad}}_{t})\|_{2}^{2}}{(1+\sigma_{2})\Lambda_{G}}, \frac{\|\nabla L(\bm{\theta}^{\text{Grad}}_{t})\|_{2}^{3/2}}{2\sqrt{(1+\sigma_{3})\rho_{H}}}) \mid \mathcal{E}(T)\right]\\
            & \leq -\frac{T\zeta_{0}}{12}\min(\frac{P\varepsilon^{2}}{(1+\sigma_{2})\Lambda_{G}}, \frac{P^{3/4}\varepsilon^{3/2}}{2\sqrt{(1+\sigma_{3})\rho_{H}}})                          \\
            & = -\frac{TP\varepsilon^{2}\zeta_{0}}{12(1+\sigma_{2})\Lambda_{G}}.
        \end{align}
        Therefore, we have
        \begin{align}
            T & \leq \frac{12(1+\sigma_{2})(L(\bm{\theta}_{0})-\mathbb{E}\left[L(\bm{\theta}^{\text{Grad}}_{T})\right])}{P\varepsilon^{2}\zeta_{0}}\Lambda_{G} \\
              & \leq \frac{12(1+\sigma_{2})(L(\bm{\theta}_{0})-L_{*})}{P\varepsilon^{2}\zeta_{0}}\Lambda_{G}.
        \end{align}
        This means that when we take $T > \frac{12(1+\sigma_{2})(L(\bm{\theta}_{0})-L_{*})}{P\varepsilon^{2}\zeta_{0}}
        \Lambda_{G}$, we have $\mathbb{P}\left( \mathcal{E}(T)\right) < \frac{1}{2}$.
        Therefore, we have
        \begin{align}
            \mathcal{T}_{\varepsilon}(\{\bm{\theta}^{\text{Grad}}_{t}\}_{t=0}^{\infty}, L,\|\mathord{\cdot}\|_{2}) & \leq \frac{12(1+\sigma_{2})(L(\bm{\theta}_{0})-L_{*})}{P\varepsilon^{2}\zeta_{0}}\Lambda_{G}.
        \end{align}
    \end{proof}
    \begin{proof}[Proof of sign-based sequence]
        The update rule of the sign-based sequence in stochastic setting is $\bm{\theta}
        ^{\text{Sign}}_{t+1}=\bm{\theta}^{\text{Sign}}_{t}-\eta_{t}\sign (\nabla
        \widehat{L}(\bm{\theta}^{\text{Sign}}_{t}))$. Thus, using \cref{lemma:base_inequality}, we obtain:
        \begin{align}
             & \quad \mathbb{E}\left[L(\bm{\theta}^{\text{Sign}}_{t+1})- L(\bm{\theta}^{\text{Sign}}_{t}) \mid \bm{\theta}^{\text{Sign}}_{t}\right]                                                                                                                                                                                                                                                                                                                                   \\
             & \leq \mathbb{E}\left[\nabla L(\bm{\theta}^{\text{Sign}}_{t})^{\top}(\bm{\theta}^{\text{Sign}}_{t+1}-\bm{\theta}^{\text{Sign}}_{t}) + \frac{1}{2}(\bm{\theta}^{\text{Sign}}_{t+1}- \bm{\theta}^{\text{Sign}}_{t})^{\top}\nabla^{2}L(\bm{\theta}^{\text{Sign}}_{t})(\bm{\theta}^{\text{Sign}}_{t+1}-\bm{\theta}^{\text{Sign}}_{t}) +\frac{\rho_{H}}{6}\|\bm{\theta}^{\text{Sign}}_{t+1}-\bm{\theta}^{\text{Sign}}_{t}\|_{2}^{3}\mid \bm{\theta}^{\text{Sign}}_{t}\right] \\      
             & = -\eta_{t}\|\nabla L(\bm{\theta}^{\text{Sign}}_{t})\|_{1}+\mathbb{E}\left[\frac{\eta_{t}^{2}}{2}\sign(\nabla \widehat{L}(\bm{\theta}^{\text{Sign}}_{t}))^{\top}\nabla^{2}L(\bm{\theta}^{\text{Sign}}_{t})\sign(\nabla \widehat{L}(\bm{\theta}^{\text{Sign}}_{t}))+\eta_{t}^{3}\frac{\rho_{H}}{6}\|\sign(\nabla \widehat{L}(\bm{\theta}^{\text{Sign}}_{t}))\|_{2}^{3}\mid \bm{\theta}^{\text{Sign}}_{t}\right]                                                         \\
             & \quad +\mathbb{E}\left[-\eta_{t}\nabla L(\bm{\theta}^{\text{Sign}}_{t})^{\top}(\sign(\nabla \widehat{L}(\bm{\theta}^{\text{Sign}}_{t}))-\sign(\nabla L(\bm{\theta}^{\text{Sign}}_{t})))\mid \bm{\theta}^{\text{Sign}}_{t}\right].\label{eq:sign_stoch_inequality}
        \end{align}
        For the second term, we can derive an upper bound in the same way as in
        the deterministic case:

        \begin{align}
             & \quad\mathbb{E}\left[\frac{\eta_{t}^{2}}{2}\sign(\nabla \widehat{L}(\bm{\theta}^{\text{Sign}}_{t}))^{\top}\nabla^{2}L(\bm{\theta}^{\text{Sign}}_{t})\sign(\nabla \widehat{L}(\bm{\theta}^{\text{Sign}}_{t}))+\eta_{t}^{3}\frac{\rho_{H}}{6}\|\sign(\nabla \widehat{L}(\bm{\theta}^{\text{Sign}}_{t}))\|_{2}^{3}\mid \bm{\theta}^{\text{Sign}}_{t}\right] \\
             & = \mathbb{E}\left[\frac{\eta_{t}^{2}}{2}\sign(\nabla \widehat{L}(\bm{\theta}^{\text{Sign}}_{t}))^{\top}\nabla^{2}L_{D}(\bm{\theta}^{\text{Sign}}_{t})\sign(\nabla \widehat{L}(\bm{\theta}^{\text{Sign}}_{t}))\mid \bm{\theta}^{\text{Sign}}_{t}\right]                                                                                                   \\
             & \quad+\mathbb{E}\left[\frac{\eta_{t}^{2}}{2}\sign(\nabla \widehat{L}(\bm{\theta}^{\text{Sign}}_{t}))^{\top}(\nabla^{2}L(\bm{\theta}^{\text{Sign}}_{t})-\nabla^{2}L_{D}(\bm{\theta}^{\text{Sign}}_{t}))\sign(\nabla \widehat{L}(\bm{\theta}^{\text{Sign}}_{t}))\mid \bm{\theta}^{\text{Sign}}_{t}\right]+\eta_{t}^{3}\frac{\rho_{H}}{6}P^{3/2}            \\
             & =\mathbb{E}\left[\frac{\eta_{t}^{2}}{2}\sum_{b}[\sign(\nabla \widehat{L}(\bm{\theta}^{\text{Sign}}_{t}))]_{b}^{\top}[\nabla^{2}L(\bm{\theta}^{\text{Sign}}_{t})]_{b}[\sign(\nabla \widehat{L}(\bm{\theta}^{\text{Sign}}_{t}))]_{b}\mid \bm{\theta}^{\text{Sign}}_{t}\right]                                                                              \\
             & \quad +\mathbb{E}\left[\frac{\eta_{t}^{2}}{2}\sign(\nabla \widehat{L}(\bm{\theta}^{\text{Sign}}_{t}))^{\top}(\nabla^{2}L(\bm{\theta}^{\text{Sign}}_{t})-\nabla^{2}L_{D}(\bm{\theta}^{\text{Sign}}_{t}))\sign(\nabla \widehat{L}(\bm{\theta}^{\text{Sign}}_{t}))\mid \bm{\theta}^{\text{Sign}}_{t}\right]+\eta_{t}^{3}\frac{\rho_{H}}{6}P^{3/2}           \\
             & \leq \frac{\eta_{t}^{2}}{2}\sum_{b}\|[\nabla^{2}L(\bm{\theta}^{\text{Sign}}_{t})]_{b}\|_{2}P_{b}+\frac{\eta_{t}^{2}}{2}\|\nabla^{2}L(\bm{\theta}^{\text{Sign}}_{t})-\nabla^{2}L_{D}(\bm{\theta}^{\text{Sign}}_{t})\|_{2}P+\eta_{t}^{3}\frac{\rho_{H}}{6}P^{3/2}                                                                                          \\
             & \leq \frac{\eta_{t}^{2}}{2}\Lambda_{P}P+\frac{\eta_{t}^{2}}{2}\delta_{D}P+\eta_{t}^{3}\frac{\rho_{H}}{6}P^{3/2}\label{eq:second}.
        \end{align}
        For the third term, we can derive an upper bound as follows:
        \begin{align}
             & \quad \mathbb{E}\left[-\eta_{t}\nabla L(\bm{\theta}^{\text{Sign}}_{t})^{\top}(\sign(\nabla \widehat{L}(\bm{\theta}^{\text{Sign}}_{t}))-\sign(\nabla L(\bm{\theta}^{\text{Sign}}_{t})))\mid \bm{\theta}^{\text{Sign}}_{t}\right]                                                                                                                               \\
             & = \eta_{t}\sum_{i=1}^{P}\nabla L(\bm{\theta}^{\text{Sign}}_{t})_{i}\mathbb{E}\left[\sign(\nabla L(\bm{\theta}^{\text{Sign}}_{t}))_{i}- \sign(\nabla \widehat{L}(\bm{\theta}^{\text{Sign}}_{t}))_{i}\mid \bm{\theta}^{\text{Sign}}_{t}\right]                                                                                                                  \\
             & = \eta_{t}\sum_{i=1}^{P}|\nabla L(\bm{\theta}^{\text{Sign}}_{t})_{i}|2\mathbb{E}\left[\mathbbm{1}[\sign(\nabla L(\bm{\theta}^{\text{Sign}}_{t}))_{i}\neq \sign(\nabla \widehat{L}(\bm{\theta}^{\text{Sign}}_{t}))_{i}]\mid \bm{\theta}^{\text{Sign}}_{t}\right]                                                                                            \\
             & = \eta_{t}\sum_{i=1}^{P}|\nabla L(\bm{\theta}^{\text{Sign}}_{t})_{i}|2\mathbb{P}\left(\sign(\nabla L(\bm{\theta}^{\text{Sign}}_{t}))_{i}\neq \sign(\nabla \widehat{L}(\bm{\theta}^{\text{Sign}}_{t}))_{i}\mid \bm{\theta}^{\text{Sign}}_{t}\right)                                                                                                            \\
             & \leq \eta_{t}\sum_{i=1}^{P}|\nabla L(\bm{\theta}^{\text{Sign}}_{t})_{i}|2\mathbb{P}\left(|\nabla L(\bm{\theta}^{\text{Sign}}_{t})_{i}-\nabla \widehat{L}(\bm{\theta}^{\text{Sign}}_{t})_{i}|\geq |\nabla L(\bm{\theta}^{\text{Sign}}_{t})_{i}|\mid \bm{\theta}^{\text{Sign}}_{t}\right)                                                                       \\
             & \leq \eta_{t}\sum_{i=1}^{P}|\nabla L(\bm{\theta}^{\text{Sign}}_{t})_{i}|2\frac{\mathbb{E}\left[|\nabla L(\bm{\theta}^{\text{Sign}}_{t})_{i}-\nabla \widehat{L}(\bm{\theta}^{\text{Sign}}_{t})_{i}|^2\mid \bm{\theta}^{\text{Sign}}_{t}\right]}{|\nabla L(\bm{\theta}^{\text{Sign}}_{t})_{i}|^2}                        \\
             & \leq \eta_{t}\sum_{i=1}^{P}|\nabla L(\bm{\theta}^{\text{Sign}}_{t})_{i}|2\sigma_{2} \\
             & = 2\sigma_{2}\eta_{t}\|\nabla L(\bm{\theta}^{\text{Sign}}_{t})\|_{1}\label{eq:third},
        \end{align}
        where the second inequality follows from Chebyshev's inequality, and the last inequality uses Eq.~\eqref{eq:error_2}.
        
        Combining~Eqs.\eqref{eq:sign_stoch_inequality}\eqref{eq:second}\eqref{eq:third}, $\eta_{t}\leq \min(\frac{\|\nabla L(\bm{\theta}^{\text{Sign}}_{t})\|_{1}}{\Lambda_{P}P}
             , \sqrt{\frac{\|\nabla L(\bm{\theta}^{\text{Sign}}_{t})\|_{1}}{\rho_{H}P^{3/2}}}
             )$, $\delta_{D}< \Lambda_{P}/3$, and $\sigma_{2}\leq \frac{1}{24}$, we have:
        \begin{align}
             & \quad \mathbb{E}\left[L(\bm{\theta}^{\text{Sign}}_{t+1})- L(\bm{\theta}^{\text{Sign}}_{t}) \mid \bm{\theta}^{\text{Sign}}_{t}\right]                                                                                                                                                                                                   \\
             & \leq -\eta_{t}\|\nabla L(\bm{\theta}^{\text{Sign}}_{t})\|_{1}+ \frac{\eta_{t}^{2}}{2}\Lambda_{P}P+\frac{\eta_{t}^{2}}{2}\delta_{D}P+\eta_{t}^{3}\frac{\rho_{H}}{6}P^{3/2}+2\sigma_{2}\eta_{t}\|\nabla L(\bm{\theta}^{\text{Sign}}_{t})\|_{1} \label{eq:combined}                                                                                          \\
             & \leq -\eta_{t}\|\nabla L(\bm{\theta}^{\text{Sign}}_{t})\|_{1}+\frac{\eta_{t}}{2}\|\nabla L(\bm{\theta}^{\text{Sign}}_{t})\|_{1}+\frac{\eta_{t}}{6}\|\nabla L(\bm{\theta}^{\text{Sign}}_{t})\|_{1}+\frac{\eta_{t}}{6}\|\nabla L(\bm{\theta}^{\text{Sign}}_{t})\|_{1}+2\sigma_{2}\eta_{t}\|\nabla L(\bm{\theta}^{\text{Sign}}_{t})\|_{1} \\
             & = -\frac{(1-12\sigma_{2})\eta_{t}}{6}\|\nabla L(\bm{\theta}^{\text{Sign}}_{t})\|_{1} \\
             &\leq -\frac{\eta_{t}}{6(1+24\sigma_{2})}\|\nabla L(\bm{\theta}^{\text{Sign}}_{t})\|_{1}.
        \end{align}
        Assume that the probability of the event $\mathcal{E}(T)=\{\forall s\leq T,\;\|\nabla L(\bm{\theta}^{\text{Sign}}_{s})\|_{1}\geq P\varepsilon\}$ satisfies $\mathbb{P}\left( \mathcal{E}(T)\right)\geq \frac{1}{2}$. By applying the telescoping sum and taking expectations, and noting that
        $\bm{\theta}_{0}= \bm{\theta}^{\text{Sign}}_{0}$, $\eta_{t}\geq \zeta_{0}\min(\frac{\|\nabla L(\bm{\theta}^{\text{Sign}}_{t})\|_{1}}{\Lambda_{P}P}$, $\sqrt{\frac{\|\nabla L(\bm{\theta}^{\text{Sign}}_{t})\|_{1}}{\rho_{H}P^{3/2}}})$, and $\varepsilon< \frac{\Lambda_{P}^{2}}{\rho_{H}\sqrt{P}}$ we have:
        \begin{align}
            &\quad \mathbb{E}\left[L(\bm{\theta}^{\text{Sign}}_{T})\right]- L(\bm{\theta}_{0}) \\
            & \leq -\frac{1}{6(1+24\sigma_{2})}\sum_{t=0}^{T-1}\mathbb{E}\left[\eta_{t}\|\nabla L(\bm{\theta}^{\text{Sign}}_{t})\|_{1}\right]                                 \\
            &= -\frac{1}{6(1+24\sigma_{2})}\sum_{t=0}^{T-1}\left(\mathbb{E}\left[\eta_{t}\|\nabla L(\bm{\theta}^{\text{Sign}}_{t})\|_{1}\mid \mathcal{E}(T)\right]\mathbb{P}\left( \mathcal{E}(T)\right) + \mathbb{E}\left[\overline\eta_{t}\|\nabla L(\bm{\theta}^{\text{Sign}}_{t})\|_{1}\mid \overline{\mathcal{E}(T)}\right]\mathbb{P}\left( \overline{\mathcal{E}(T)}\right) \right)                               \\
            &\leq -\frac{1}{6(1+24\sigma_{2})}\sum_{t=0}^{T-1}\mathbb{E}\left[\eta_{t}\|\nabla L(\bm{\theta}^{\text{Sign}}_{t})\|_{1}\mid \mathcal{E}(T)\right]\mathbb{P}\left( \mathcal{E}(T)\right)                             \\
            &\leq -\frac{1}{12(1+24\sigma_{2})}\sum_{t=0}^{T-1}\mathbb{E}\left[\eta_{t}\|\nabla L(\bm{\theta}^{\text{Sign}}_{t})\|_{1}\mid \mathcal{E}(T)\right]                            \\
           &\leq -\frac{\zeta_{0}}{12(1+24\sigma_{2})}\sum_{t=0}^{T-1}\mathbb{E}\left[\min(\frac{\|\nabla L(\bm{\theta}^{\text{Sign}}_{t})\|_{1}^{2}}{\Lambda_{P}P}, \frac{\|\nabla L(\bm{\theta}^{\text{Sign}}_{t})\|_{1}^{3/2}}{\sqrt{\rho_{H}P^{3/2}}})\mid \mathcal{E}(T)\right]\\
            & \leq -\frac{\zeta_{0}}{12(1+24\sigma_{2})}\sum_{t=0}^{T-1}\min(\frac{P\varepsilon^2}{\Lambda_{P}}, P\varepsilon\sqrt{\frac{\varepsilon}{\rho_{H}P^{1/2}}}) \\
            &= -\frac{TP\varepsilon^2\zeta_{0}}{12(1+24\sigma_{2})\Lambda_{P}}.
        \end{align}
        Therefore, we have:
        \begin{align}
            T & \leq \frac{12(1+24\sigma_{2})(L(\bm{\theta}_{0})-\mathbb{E}\left[L(\bm{\theta}^{\text{Sign}}_{T})\right])}{P\varepsilon^{2}\zeta_{0}}\Lambda_{P} \\
              & \leq \frac{12(1+24\sigma_{2})(L(\bm{\theta}_{0})-L_{*})}{P\varepsilon^{2}\zeta_{0}}\Lambda_{P}.
        \end{align}
        This means that when we take $T > \frac{12(1+24\sigma_{2})(L(\bm{\theta}_{0})-L_{*})}{P\varepsilon^{2}\zeta_{0}}\Lambda_{P}$, we have $\mathbb{P}\left( \mathcal{E}(T)\right) < \frac{1}{2}$.
        Therefore, we have
        \begin{align}
            \mathcal{T}_{\varepsilon}(\{\bm{\theta}^{\text{Sign}}_{t}\}_{t=0}^{\infty}, L, \|\mathord{\cdot}\|_{1}) & \leq \frac{12(1+24\sigma_{2})(L(\bm{\theta}_{0})-L_{*})}{P\varepsilon^{2}\zeta_{0}}\Lambda_{P}.
        \end{align}
    \end{proof}
    
    \section{Derivation of Jacobian matrix in Section~\ref{sec:layer_norm}}
    \label{appendix:jacobian}
    \subsection{Jacobian of Transformer layer}
    The output of a Transformer layer for an input $\bm{X}\in \mathbb{R}^{n\times d}$ is given by $\mathcal{M}(\mathcal{A}(\bm{X}))$, where $\mathcal{A}(\cdot)$ is the attention layer and $\mathcal{M}(\cdot)$ is the feed-forward layer. In the following, we denote the Jacobian of the self-attention module, the feed-forward module, and the layer normalization as $\bm{J}_{\text{ATT}}$, $\bm{J}_{\text{FFN}}$, and $\bm{J}_{\text{LN}}$, respectively.
    
    \paragraph{In Pre-LN.}
    The self-attention and feed-forward layers in the Pre-LN architecture are given by
    \begin{align}
        \mathcal{A}(\bm{X}) & = \operatorname{ATT}(\operatorname{LN}(\bm{X}))+\bm{X}, \\
        \mathcal{M}(\bm{Y}) & = \operatorname{FFN}(\operatorname{LN}(\bm{Y}))+\bm{Y}.
    \end{align}
    The Jacobian of these modules are as follows:
    \begin{align}
        \frac{\partial \mathcal{A}(\bm{X})}{\partial \bm{X}} & = \left.\frac{\partial \operatorname{ATT}(\bm{Z})}{\partial \bm{Z}}\right|_{\bm{Z}=\operatorname{LN}(\bm{X})}\frac{\partial \operatorname{LN}(\bm{X})}{\partial \bm{X}}+ \frac{\partial \bm{X}}{\partial \bm{X}} \\
                                                             & = \bm{J}_{\text{ATT}}(\operatorname{LN}(\bm{X}))\bm{J}_{\text{LN}}(\bm{X})+\bm{I}_{nd},                                                                                                                 \\
        \frac{\partial \mathcal{M}(\bm{Y})}{\partial \bm{Y}} & = \left.\frac{\partial \operatorname{FFN}(\bm{Y})}{\partial \bm{Y}}\right|_{\bm{Y}=\operatorname{LN}(\bm{Y})}\frac{\partial \operatorname{LN}(\bm{Y})}{\partial \bm{Y}}+ \frac{\partial \bm{Y}}{\partial \bm{Y}} \\
                                                             & = \bm{J}_{\text{FFN}}(\operatorname{LN}(\bm{Y}))\bm{J}_{\text{LN}}(\bm{Y})+\bm{I}_{nd}.
    \end{align}
    Therefore, the Jacobian of the Pre-LN layer is given by
    \begin{align}
        \bm{J}_{\text{Pre-LN}}(\bm{X}) & = \left.\frac{\partial \mathcal{M}(\bm{Y})}{\partial \bm{Y}}\right|_{\bm{Y}=\mathcal{A}(\bm{X})}\frac{\partial \mathcal{A}(\bm{X})}{\partial \bm{X}}                                                                           \\
                                       & = \left(\bm{J}_{\text{FFN}}(\operatorname{LN}(\mathcal{A}(\bm{X})))\bm{J}_{\text{LN}}(\mathcal{A}(\bm{X}))+\bm{I}_{nd}\right)\left(\bm{J}_{\text{ATT}}(\operatorname{LN}(\bm{X}))\bm{J}_{\text{LN}}(\bm{X})+\bm{I}_{nd}\right)
    \end{align}
    and with omitting the evaluation point, we can write the Jacobian as
    \begin{align}
        \bm{J}_{\text{Pre-LN}} & = \left(\bm{J}_{\text{FFN}}\bm{J}_{\text{LN}}+\bm{I}_{nd}\right)\left(\bm{J}_{\text{ATT}}\bm{J}_{\text{LN}}+\bm{I}_{nd}\right).
    \end{align}
    \paragraph{In Post-LN.}
    The self-attention and feed-forward layers in the Post-LN layer are given by
    \begin{align}
        \mathcal{A}(\bm{X}) & = \operatorname{LN}(\operatorname{ATT}(\bm{X})+\bm{X}), \\
        \mathcal{M}(\bm{Y}) & = \operatorname{LN}(\operatorname{FFN}(\bm{Y})+\bm{Y}).
    \end{align}
    The Jacobian of these modules are as follows:
    \begin{align}
        \frac{\partial \mathcal{A}(\bm{X})}{\partial \bm{X}} & = \left.\frac{\partial \operatorname{LN}(\bm{Z})}{\partial \bm{Z}}\right|_{\bm{Z}=\operatorname{ATT}(\bm{X})+\bm{X}}\left(\frac{\partial \operatorname{ATT}(\bm{X})}{\partial \bm{X}}+ \frac{\partial \bm{X}}{\partial \bm{X} }\right) \\
                                                             & = \bm{J}_{\text{LN}}(\operatorname{ATT}(\bm{X})+\bm{X})\left(\bm{J}_{\text{ATT}}(\bm{X})+\bm{I}_{nd}\right),                                                                                                                  \\
        \frac{\partial \mathcal{M}(\bm{Y})}{\partial \bm{Y}} & = \left.\frac{\partial \operatorname{LN}(\bm{Z})}{\partial \bm{Z}}\right|_{\bm{Z}=\operatorname{FFN}(\bm{Y})+\bm{Y}}\left(\frac{\partial \operatorname{FFN}(\bm{Y})}{\partial \bm{Y}}+ \frac{\partial \bm{Y}}{\partial \bm{Y}}\right)  \\
                                                             & = \bm{J}_{\text{LN}}(\operatorname{FFN}(\bm{Y})+\bm{Y})\left(\bm{J}_{\text{FFN}}(\bm{Y})+\bm{I}_{nd}\right).
    \end{align}
    Therefore, the Jacobian of the Post-LN layer is given by
    \begin{align}
        \bm{J}_{\text{Post-LN}}(\bm{X}) & = \left.\frac{\partial \mathcal{M}(\bm{Y})}{\partial \bm{Y}}\right|_{\bm{Y}=\mathcal{A}(\bm{X})}\frac{\partial \mathcal{A}(\bm{X})}{\partial \bm{X}}                                                                                                        \\
                                        & =\bm{J}_{\text{LN}}(\operatorname{FFN}(\mathcal{A}(\bm{X}))+\mathcal{A}(\bm{X})) \left(\bm{J}_{\text{FFN}}(\mathcal{A}(\bm{X}))+\bm{I}_{nd}\right)\bm{J}_{\text{LN}}(\operatorname{ATT}(\bm{X})+\bm{X})\left(\bm{J}_{\text{ATT}}(\bm{X})+\bm{I}_{nd}\right)
    \end{align}
    and with omitting the evaluation point, we can write the Jacobian as
    \begin{align}
        \bm{J}_{\text{Post-LN}} & = \bm{J}_{\text{LN}}\left(\bm{J}_{\text{FFN}}+\bm{I}_{nd}\right)\bm{J}_{\text{LN}}\left(\bm{J}_{\text{ATT}}+\bm{I}_{nd}\right).
    \end{align}
    \subsection{Jacobian of layer normalization}
    \label{proof:jacobian}Since the layer normalization is a row-wise operation,
    the Jacobian of the layer normalization for the input matrix $\bm{X}\in \mathbb{R}
    ^{n\times d}$ is given by
    \begin{align}
        \bm{J}_{\text{LN}}(X) = \operatorname{blockdiag}(\{ \frac{\partial \operatorname{LN}(\bm{X})_{i,:}}{\partial \bm{X}_{i,:}}\}_{i=1}^{n}).
    \end{align}
    where $\frac{\partial \operatorname{LN}(\bm{X})_{i,:}}{\partial \bm{X}_{i,:}}$
    is the Jacobian of the layer normalization for the $i$-th row of the input matrix
    $\bm{X}$. The layer normalization for the $i$-th row of the input matrix
    $\bm{X}$ is given by
    \begin{align}
        \operatorname{LN}(\bm{X})_{i,:}= \frac{\sqrt{d}\widetilde{\bm{X}_{i,:}}}{\|\widetilde{\bm{X}_{i,:}}\|},
    \end{align}
    where$\widetilde{\bm{X}_{i,:}}\coloneqq \bm{X}_{i,:}(\bm{I}_{d}-\frac{1}{d}\bm
    {1}\bm{1}^{\top})$. Therefore, the $i$-th block of the Jacobian of the layer normalization is given by
    \begin{align}
        \frac{\partial \operatorname{LN}(\bm{X})_{i,:}}{\partial \bm{X}_{i,:}} & = \frac{\partial \operatorname{LN}(\bm{X})_{i,:}}{\partial\widetilde{\bm{X}_{i,:}}}\frac{\partial \widetilde{\bm{X}_{i,:}}}{\partial \bm{X}_{i,:}}                                                                                       \\
                                                                               & = \sqrt{d}\left(\frac{1}{\|\widetilde{\bm{X}_{i,:}}\|}\bm{I}_{d}-\widetilde{\bm{X}_{i,:}}\frac{\widetilde{\bm{X}_{i,:}}^{\top}}{\|\widetilde{\bm{X}_{i,:}}\|^{3}}\right)(\bm{I}_{d}-\frac{1}{d}\bm{1}\bm{1}^{\top})                      \\
                                                                               & = \frac{\sqrt{d}}{\|\widetilde{\bm{X}_{i,:}}\|_{2}}\bigg( \bm{I}_{d}- \frac{\widetilde{\bm{X}_{i,:}}\widetilde{\bm{X}_{i,:}}^{\top}}{\|\widetilde{\bm{X}_{i,:}}\|_{2}^{2}}\bigg) \bigg( \bm{I}_{d}- \frac{\bm{1}\bm{1}^{\top}}{d}\bigg).
    \end{align}
    Therefore, we can write the Jacobian of the layer normalization as
    \begin{align}
        \bm{J}_{\text{LN}}(\bm{X}) = \operatorname{blockdiag}(\{\bm{L}_{i}(\bm{X})\}_{i=1}^{n}),
    \end{align}
    where
    \begin{align}
        \bm{L}_{i}(\bm{X})= \frac{\sqrt{d}}{\|\widetilde{\bm{X}_{i,:}}\|_{2}}\bigg( \bm{I}_{d}- \frac{\widetilde{\bm{X}_{i,:}}\widetilde{\bm{X}_{i,:}}^{\top}}{\|\widetilde{\bm{X}_{i,:}}\|_{2}^{2}}\bigg) \bigg( \bm{I}_{d}- \frac{\bm{1}\bm{1}^{\top}}{d}\bigg).
    \end{align}

    \subsection{Jacobian of RMS normalization}
\label{proof:jacobian_rmsnorm}
Since the RMS normalization is a row-wise operation,
the Jacobian of the RMS normalization for the input matrix $\bm{X}\in \mathbb{R}^{n\times d}$
is given by
\begin{align}
    \bm{J}_{\text{RMS}}(\bm{X})
    = \operatorname{blockdiag}\Big(\Big\{ \frac{\partial \operatorname{RMS}(\bm{X})_{i,:}}{\partial \bm{X}_{i,:}}\Big\}_{i=1}^{n}\Big).
\end{align}
where $\frac{\partial \operatorname{RMS}(\bm{X})_{i,:}}{\partial \bm{X}_{i,:}}$
is the Jacobian of the RMS normalization for the $i$-th row of the input matrix $\bm{X}$.
The RMS normalization for the $i$-th row of the input matrix $\bm{X}$ is given by
\begin{align}
    \operatorname{RMS}(\bm{X})_{i,:}
    =\operatorname{Diag}(\bm{\gamma}) \frac{\bm{X}_{i,:}}{r_i} ,
    \qquad
    r_i \coloneqq \sqrt{\frac{1}{d}\|\bm{X}_{i,:}\|_2^2},
\end{align}
where $\bm{\gamma}\in\mathbb{R}^d$ is a learnable scale parameter.
Let $\bm{x}\coloneqq \bm{X}_{i,:}\in\mathbb{R}^{d\times 1}$ and $\bm{y}\coloneqq \operatorname{RMS}(\bm{X})_{i,:}\in\mathbb{R}^{d\times 1}$.
Then $\bm{y}= \operatorname{Diag}(\bm{\gamma}) \bm{x}/r$ with $r=\sqrt{\frac{1}{d}\|\bm{x}\|_2^2}$.

First, we compute the derivative of $r$ with respect to $\bm{x}$.
Since $r = \big(\frac{1}{d}\bm{x}^\top\bm{x}\big)^{1/2}$, we have
\begin{align}
    \frac{\partial r}{\partial \bm{x}}
    = \frac{1}{2}\Big(\frac{1}{d}\bm{x}^\top\bm{x}\Big)^{-1/2}\cdot \frac{2}{d}\bm{x}
    = \frac{1}{d\,r}\bm{x}.
\end{align}
Hence,
\begin{align}
    \frac{\partial (r^{-1})}{\partial \bm{x}}
    = -r^{-2}\frac{\partial r}{\partial \bm{x}}
    = -\frac{1}{d\,r^{3}}\bm{x}.
\end{align}

Therefore, the $i$-th block of the Jacobian of the RMS normalization is given by
\begin{align}
    \frac{\partial \operatorname{RMS}(\bm{X})_{i,:}}{\partial \bm{X}_{i,:}}
    &= \operatorname{Diag}(\bm{\gamma}) \frac{\partial (\bm{x}r^{-1})}{\partial \bm{x}} \\
    &= \operatorname{Diag}(\bm{\gamma}) \Big(\frac{1}{r}\bm{I}_{d} + \bm{x}\frac{\partial (r^{-1})}{\partial \bm{x}^\top}\Big) \\
    &= \operatorname{Diag}(\bm{\gamma}) \Big(\frac{1}{r}\bm{I}_{d} - \frac{1}{d\,r^{3}}\bm{x}\bm{x}^\top \Big)\\
    &= \operatorname{Diag}(\bm{\gamma})\Big(\frac{1}{r_i}\bm{I}_{d} - \frac{1}{d\,r_i^{3}}\bm{X}_{i,:}\bm{X}_{i,:}^{\top}\Big) \\
    &= \operatorname{Diag}(\bm{\gamma}) \frac{\sqrt{d}}{\|\bm{X}_{i,:}\|_{2}}\bigg( \bm{I}_{d}- \frac{\bm{X}_{i,:}\bm{X}_{i,:}^{\top}}{\|\bm{X}_{i,:}\|_{2}^{2}}\bigg).
\end{align}

Therefore, we can write the Jacobian of the RMS normalization as
\begin{align}
    \bm{J}_{\text{RMS}}(\bm{X})
    = \operatorname{blockdiag}\big(\{\bm{R}_{i}(\bm{X})\}_{i=1}^{n}\big),
\end{align}
where
\begin{align}
    \bm{R}_{i}(\bm{X})
    = \operatorname{Diag}(\bm{\gamma}) \frac{\sqrt{d}}{\|\bm{X}_{i,:}\|_{2}}\bigg( \bm{I}_{d}- \frac{\bm{X}_{i,:}\bm{X}_{i,:}^{\top}}{\|\bm{X}_{i,:}\|_{2}^{2}}\bigg),
    \qquad
    r_i=\sqrt{\frac{1}{d}\|\bm{X}_{i,:}\|_2^2}.
\end{align}
We note that all the above derivations for Pre-LN and Post-LN architectures remain valid when layer normalization is replaced with RMS normalization, by simply substituting $\bm{J}_{\text{LN}}$ with $\bm{J}_{\text{RMS}}$.

    \clearpage
    \section{Experimental details}
    \label{appendix:experiment}
    \subsection{Implementation and training details}
    Our implementation, based on PyTorch~\citep{paszke2019pytorch}, uses the HuggingFace Transformers library~\citep{wolf-etal-2020-transformers} for NLP tasks and primarily follows~\citet{tomihari2024understanding}. All experiments were conducted on a single NVIDIA A100 GPU. The reported results are averages over five training seeds. We used the cross-entropy loss, defined as $\ell(\bm{f}(\bm{x}), y) \coloneqq -\log \left(\softmax(\bm{f}(\bm{x}))_{y}\right)$, where the function $\softmax: \mathbb{R}^{C} \rightarrow \mathbb{R}^{C}$ represents the softmax operation.

    Following the methodology of~\citet{kunstner2023noise}, we optimized the learning rate via grid search based on the training loss, while keeping other hyperparameters, such as batch size and the number of epochs, fixed. Momentum was set to $0.9$ as the default configuration for both SGD and SignSGD, and gradient clipping with a threshold of $1.0$ was applied. For NLP tasks, we used linear learning rate scheduling, whereas for vision tasks, a warmup schedule was applied.

    Other hyperparameters followed the default values provided by PyTorch, including Adam ($\beta_{1}= 0.9$, $\beta_{2}= 0.999$, $\epsilon = 1e-8$) and RMSProp ($\alpha
    = 0.99$, $\epsilon = 1e-8$). For NLP tasks, the original training set was split into a 9:1 training-to-validation
    ratio, with the original validation set used as the test set, following~\citet{chen-etal-2022-revisiting,tomihari2024understanding}.

    We provide dataset statistics and hyperparameter configurations in~\Cref{tab:dataset}
    and~\Cref{tab:hyperparameters_roberta,tab:hyperparameters_resnet,tab:hyperparameters_vit},
    respectively.
    \subsection{Details of each experiment and figure}
    \paragraph{Correlation between Hessian and gradient.}
    In~\cref{fig:hessian_gradient}, we show the correlation between the Hessian
    and the gradient. The maximum eigenvalue of the Hessian was computed using power
    iteration, as described in~\citet{park2022vision}, with the PyHessian
    implementation~\citep{yao2020pyhessian}. To estimate the maximum
    eigenvectors of the block-diagonal elements of the Hessian, we calculated the
    product of the Hessian and a random vector for each parameter. The batch
    size used for these computations was the same as the training batch size. The
    maximum eigenvalue and the gradient were computed for each batch across all training
    data.

    \paragraph{Correlation between full-batch gradient and gradient error.}
    In~\cref{fig:error_gradient}, we show the correlation between the full-batch
    gradient and the gradient error in a coordinate-wise manner. We randomly
    sampled $1,000$ coordinates from the parameters and computed the squared
    norm of the full-batch gradient and the gradient error for each coordinate. The
    gradient error is defined as the difference between the full-batch gradient and
    the gradient computed with a mini-batch. The batch size was the same as the
    training batch size. The gradient error was computed for each batch across all
    training data.
    \paragraph{Gradient heterogeneity.}
    In~\cref{fig:gradient_norm}, we show the ratio of the gradient norm for each
    parameter relative to the sum of the gradient norms. Specifically, we plot:
    \begin{align}
        \frac{G_{\theta}/ \sqrt{P_{\theta}}}{\sum_{\theta'}G_{\theta'}/ \sqrt{P_{\theta'}}},
    \end{align}
    for each parameter $\bm{\theta}$, where $G_{\theta}$ is the full-batch
    gradient norm of parameter $\bm{\theta}$, and $P_{\theta}$ is its dimension.
    To compare gradient norms across different parameters, we normalize each gradient
    norm by the square root of its parameter dimension. Bias parameters are
    omitted in these plots.
    
    \paragraph{Effect of layer normalization.}
    In~\Cref{tab:gini_layer_norm_rte,tab:gini_layer_norm}, all models share the same RoBERTa backbone and differ only in the placement of the normalization layer. To minimize the effect of initialization, we trained scratch-initialized models for $1000$ iterations. Note that pre-trained weights are available only for the Post-LN variant.
    
    \paragraph{Training Curve.}
    In~\cref{fig:optimization}, we show training runs with the median final loss value among the five training seeds. The shaded area represents the interquartile range across
    the five seeds. This approach is used to reduce the influence of outliers on
    the reported results.
    \if0
    \paragraph{Norm of the linear head.}
    In~\cref{fig:classifier_norm}, we present the norm of the linear head for the
    trained model. This model corresponds to the one with the median final loss
    value among the five training seeds, as shown in~\cref{fig:optimization}. To
    compare the norms of the weight matrix and bias vector on the same scale, we
    normalize each parameter norm by the square root of its dimension, i.e. ,
    \begin{align}
        \frac{\|\bm{\theta}\|_{2}}{\sqrt{P_{\theta}}},
    \end{align}
    where $\bm{\theta}$ denotes the weight matrix or bias vector, and
    $P_{\theta}$ represents the dimension of the parameter.
    \fi
    \begin{table}[htbp]
        \caption{Dataset statistics, including the number of classes and counts of
        training (Train), validation (Val), and test samples for each dataset.}
        \label{tab:dataset}
        \centering
        \small
        \begin{tabular}{c c r r r r}
            \toprule Domain                  & Dataset                                  & \multirow{1}{*}{Classes} & \multirow{1}{*}{Train} & \multirow{1}{*}{Val} & \multirow{1}{*}{Test} \\
            \midrule \multirow{2}{*}{NLP}    & CB~\citep{de2019commitmentbank}          & $3$                      & $225$                  & $25$                 & $57$                  \\
                                             & RTE~\citep{wang2018glue}                 & $2$                      & $2,241$                & $249$                & $277$                 \\
                                             & BoolQ~\citep{clark-etal-2019-boolq}      & $2$                      & $8,484$                & $943$                & $3,270$               \\
                                             & WiC~\citep{burstein2019proceedings}      & $2$                      & $5,400$                & $600$                & $638$                 \\
                                             & CoLA~\citep{warstadt-etal-2019-neural}   & $2$                      & $7,695$                & $855$                & $1,040$               \\
                                             & SST-2~\citep{socher-etal-2013-recursive} & $2$                      & $60,614$               & $6,735$              & $872$                 \\
                                             & MRPC~\citep{dolan2005automatically}      & $2$                      & $3,301$                & $367$                & $408$                 \\
            \midrule \multirow{2}{*}{Vision} & Flowers102~\citep{nilsback2008automated}                               & $102$                    & $1,632$                & $408$
                  & $6,149$               \\
                                             & Aircraft~\citep{maji2013fine}                                 & $100$                    & $5,334$                & $1,333$              & $3,333$               \\
            \bottomrule
        \end{tabular}
    \end{table}
    \begin{table}[htbp]
        \caption{Hyperparameter configurations for RoBERTa-Base. The settings include
        batch size (bs), learning rate (lr), and the number of epochs (epochs).
        ``w/o M'' denotes optimizers without momentum and ``Const'', ``Cos'',
        and ``Lin-W'' denote constant, cosine, and linear with warm-up learning
        rate schedules, respectively.}
        \label{tab:hyperparameters_roberta}
        \centering
        \small
        \setlength{\tabcolsep}{3pt} 
        \begin{tabular}{ccccccccc}
            \toprule Optimizer               & Param                & CB     & RTE    & BoolQ  & WiC    & CoLA   & SST-2  & MRPC   \\
            \midrule \multirow{2}{*}{Common} & bs                   & $8$    & $8$    & $32$   & $32$   & $32$   & $32$   & $32$   \\
                                             & epochs               & $20$   & $20$   & $20$   & $20$   & $20$   & $10$   & $20$   \\
            \midrule \multirow{1}{*}{Adam}   & \multirow{12}{*}{lr} & $1e-4$ & $1e-5$ & $1e-5$ & $1e-5$ & $1e-5$ & $1e-5$ & $1e-5$ \\
            \multirow{1}{*}{SGD}             &                      & $1e-2$ & $1e-3$ & $1e-2$ & $1e-3$ & $1e-3$ & $1e-2$ & $1e-2$ \\
            \multirow{1}{*}{SGD (w/o M)}     &                      & $1e-1$ & $1e-2$ & $1e-1$ & $1e-2$ & $1e-2$ & $1e-1$ & $1e-1$ \\
            \multirow{1}{*}{SignSGD}         &                      & $1e-5$ & $1e-6$ & $1e-5$ & $1e-5$ & $1e-5$ & $1e-5$ & $1e-5$ \\
            \multirow{1}{*}{SignSGD (w/o M)} &                      & $1e-4$ & $1e-5$ & $1e-5$ & $1e-5$ & $1e-4$ & $1e-5$ & $1e-5$ \\
            \multirow{1}{*}{RMSProp}         &                      & $1e-5$ & $1e-5$ & $1e-5$ & $1e-5$ & $1e-5$ & $1e-5$ & $1e-5$ \\
            \multirow{1}{*}{SignSGD (S)}         &                      & $1e-4$ & $1e-4$ & $1e-4$ & $1e-4$ & $5e-4$ & $1e-4$ & $5e-4$ \\
            \multirow{1}{*}{SGD (Const)}     &                      & $1e-2$ & $1e-3$ & -      & -      & -      & -      & -      \\
            \multirow{1}{*}{SGD (Cos)}       &                      & $1e-2$ & $1e-3$ & -      & -      & -      & -      & -      \\
            \multirow{1}{*}{SGD (Lin-W)}     &                      & $1e-2$ & $1e-3$ & -      & -      & -      & -      & -      \\
            \multirow{1}{*}{SignSGD (Const)} &                      & $1e-6$ & $1e-6$ & -      & -      & -      & -      & -      \\
            \multirow{1}{*}{SignSGD (Cos)}   &                      & $1e-5$ & $1e-5$ & -      & -      & -      & -      & -      \\
            \multirow{1}{*}{SignSGD (Lin-W)} &                      & $1e-5$ & $1e-5$ & -      & -      & -      & -      & -      \\
            \bottomrule
        \end{tabular}
    \end{table}
    \begin{table}[htbp]
        \caption{Hyperparameter configurations for ResNet18. The settings include
        batch size (bs), learning rate (lr), and the number of epochs (epochs).
        ``w/o M'' denotes optimizers without momentum.}
        \label{tab:hyperparameters_resnet}
        \centering
        \small
        \setlength{\tabcolsep}{3pt} 
        \begin{tabular}{cccc}
            \toprule Optimizer               & Param               & Flowers102 & Aircraft \\
            \midrule \multirow{2}{*}{Common} & bs                  & $32$       & $32$     \\
                                             & epochs              & $50$       & $100$    \\
            \midrule \multirow{1}{*}{Adam}   & \multirow{6}{*}{lr} & $1e-4$     & $1e-4$   \\
            \multirow{1}{*}{SGD}             &                     & $1e-2$     & $1e-2$   \\
            \multirow{1}{*}{SGD (w/o M)}     &                     & $1e-1$     & $1e-1$   \\
            \multirow{1}{*}{SignSGD}         &                     & $1e-5$     & $1e-5$   \\
            \multirow{1}{*}{SignSGD (w/o M)} &                     & $1e-4$     & $1e-4$   \\
            \multirow{1}{*}{RMSProp}         &                     & $1e-4$     & $1e-4$   \\
            \multirow{1}{*}{SignSGD (S)}         &                     & $5e-4$     & $5e-4$   \\
            \bottomrule
        \end{tabular}
    \end{table}
    \begin{table}[htbp]
        \caption{Hyperparameter configurations for ViT-Base. The settings include
        batch size (bs), learning rate (lr), and the number of epochs (epochs).``w/o
        M'' denotes optimizers without momentum.}
        \label{tab:hyperparameters_vit}
        \centering
        \small
        \setlength{\tabcolsep}{3pt} 
        \begin{tabular}{cccc}
            \toprule Optimizer               & Param               & Flowers102 & Aircraft \\
            \midrule \multirow{2}{*}{Common} & bs                  & $32$       & $32$     \\
                                             & epochs              & $50$       & $100$    \\
            \midrule \multirow{1}{*}{Adam}   & \multirow{6}{*}{lr} & $1e-5$     & $1e-5$   \\
            \multirow{1}{*}{SGD}             &                     & $1e-2$     & $1e-2$   \\
            \multirow{1}{*}{SGD (w/o M)}     &                     & $1e-1$     & $5e-1$   \\
            \multirow{1}{*}{SignSGD}         &                     & $1e-5$     & $1e-5$   \\
            \multirow{1}{*}{SignSGD (w/o M)} &                     & $1e-4$     & $1e-5$   \\
            \multirow{1}{*}{RMSProp}         &                     & $1e-5$     & $1e-5$   \\
            \multirow{1}{*}{SignSGD (S)}         &                     & $5e-4$     & $1e-4$   \\
            \bottomrule
        \end{tabular}
    \end{table}
    \section{Additional experimental results}
    \label{appendix:additional_results}
    \subsection{Correlation between Hessian and gradient}
    \label{sec_appendix:hessian_gradient} We show the correlation between the Hessian
    and the gradient in~\cref{fig_appendix:hessian_gradient}. The Hessian and
    gradient are computed using the pre-trained models or the trained models
    corresponding to the median final loss value among the five training seeds
    shown in~\cref{fig:optimization,fig_appendix:optimization,fig_appendix:optimization_additional}.
    \begin{figure}[htb]
        \centering
        \begin{minipage}{0.32\columnwidth}
            \centering
            \includegraphics[width=0.99\columnwidth]{
                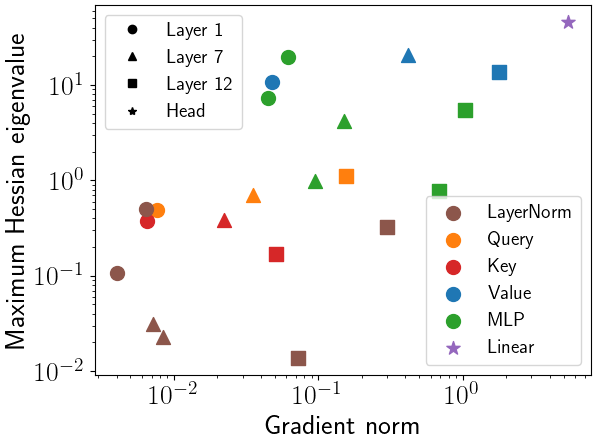
            }
            \subcaption{Pre-trained RoBERTa on CB}
        \end{minipage}
        \begin{minipage}{0.32\columnwidth}
            \centering
            \includegraphics[width=0.99\columnwidth]{
                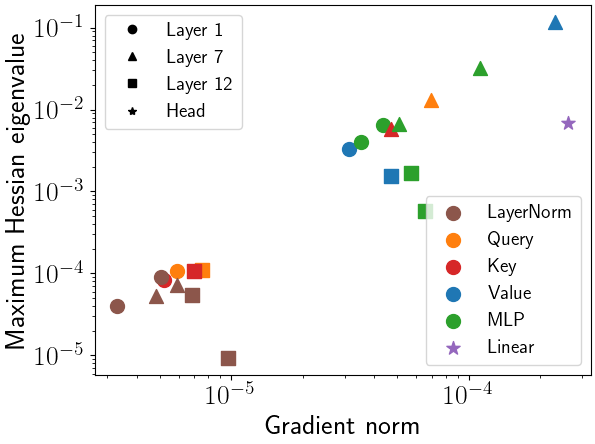
            }
            \subcaption{RoBERTa fine-tuned with Adam on RTE}
        \end{minipage}
        \begin{minipage}{0.32\columnwidth}
            \centering
            \includegraphics[width=0.99\columnwidth]{
                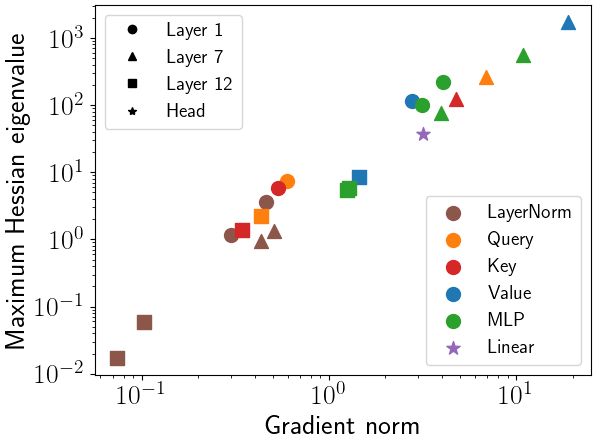
            }
            \subcaption{RoBERTa fine-tuned with SGD on RTE}
        \end{minipage}
        \begin{minipage}{0.32\columnwidth}
            \centering
            \includegraphics[width=0.99\columnwidth]{
                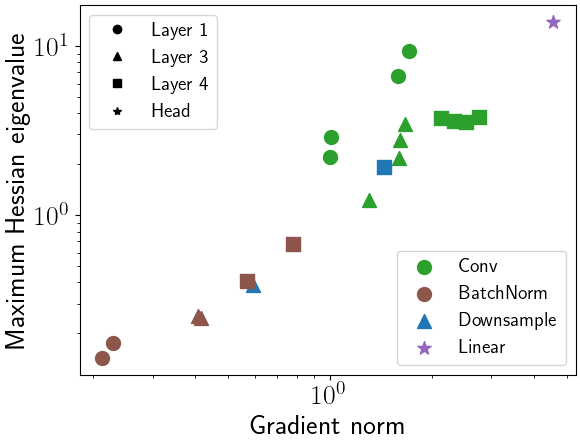
            }
            \subcaption{Pre-trained ResNet18 on Flowers102}
        \end{minipage}
        \begin{minipage}{0.32\columnwidth}
            \centering
            \includegraphics[width=0.99\columnwidth]{
                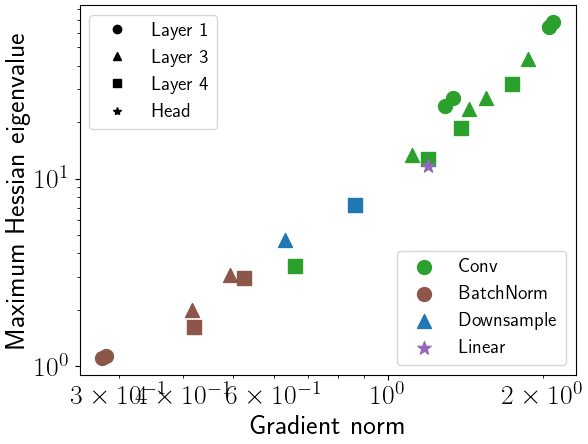
            }
            \subcaption{ResNet18 fine-tuned with Adam on Flowers102}
        \end{minipage}
        \begin{minipage}{0.32\columnwidth}
            \centering
            \includegraphics[width=0.99\columnwidth]{
                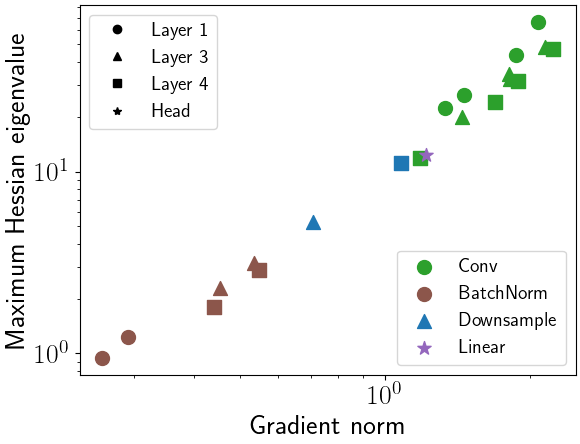
            }
            \subcaption{ResNet18 fine-tuned with SGD on Flowers102}
        \end{minipage}
        \begin{minipage}{0.32\columnwidth}
            \centering
            \includegraphics[width=0.99\columnwidth]{
                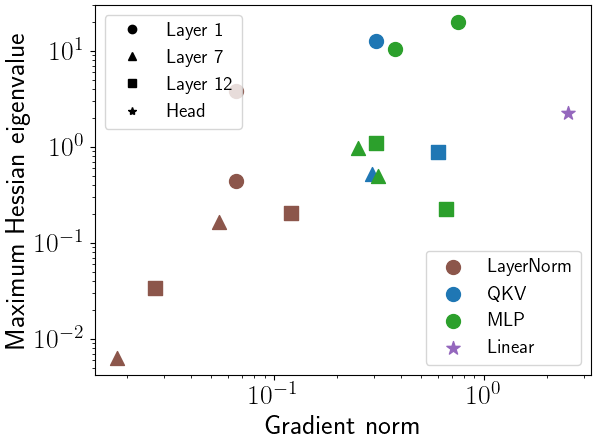
            }
            \subcaption{Pre-trained ViT on Aircraft}
        \end{minipage}
        \begin{minipage}{0.32\columnwidth}
            \centering
            \includegraphics[width=0.99\columnwidth]{
                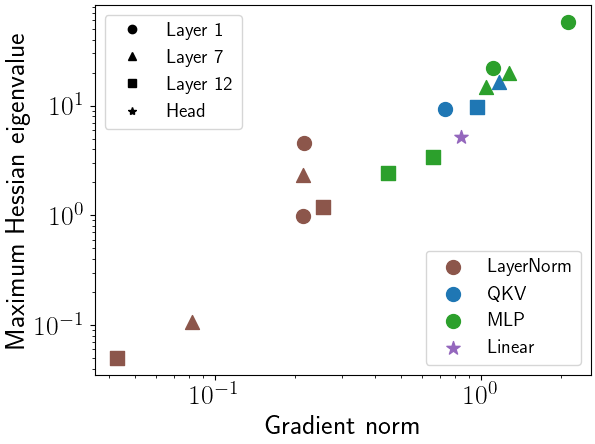
            }
            \subcaption{ViT fine-tuned with Adam on Aircraft}
        \end{minipage}
        \begin{minipage}{0.32\columnwidth}
            \centering
            \includegraphics[width=0.99\columnwidth]{
                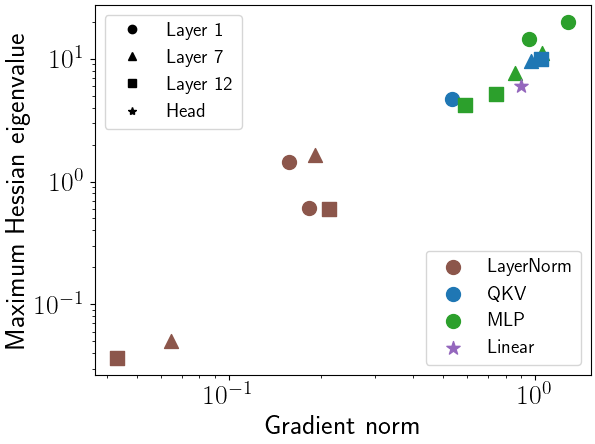
            }
            \subcaption{ViT fine-tuned with SGD on Aircraft}
        \end{minipage}
        \caption{Gradient vs. Hessian matrix.}
        \label{fig_appendix:hessian_gradient}
    \end{figure}
    \clearpage
    \subsection{Correlation between Hessian and parameter dimension}
    \label{sec_appendix:hessian_parameter} We show the correlation between the Hessian
    and the parameter in~\cref{fig_appendix:hessian_parameter}. The Hessian and
    parameter dimension do not show a clear correlation.
    \begin{figure}[htb]
        \centering
        \begin{minipage}{0.32\columnwidth}
            \centering
            \includegraphics[width=0.99\columnwidth]{
                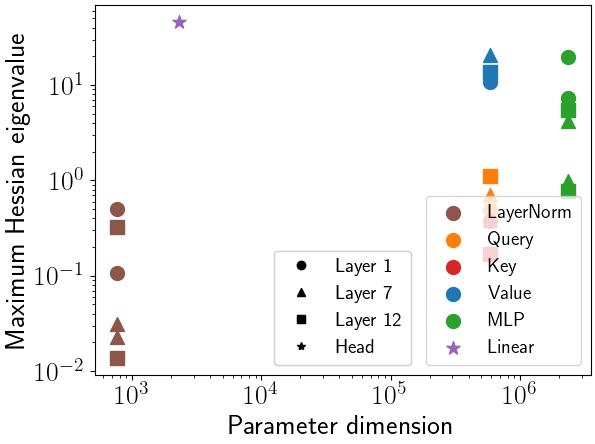
            }
            \subcaption{Pre-trained RoBERTa on CB}
        \end{minipage}
        \begin{minipage}{0.32\columnwidth}
            \centering
            \includegraphics[width=0.99\columnwidth]{
                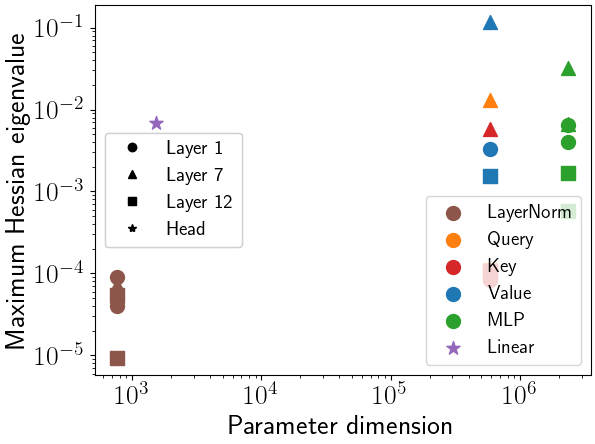
            }
            \subcaption{RoBERTa fine-tuned with Adam on RTE}
        \end{minipage}
        \begin{minipage}{0.32\columnwidth}
            \centering
            \includegraphics[width=0.99\columnwidth]{
                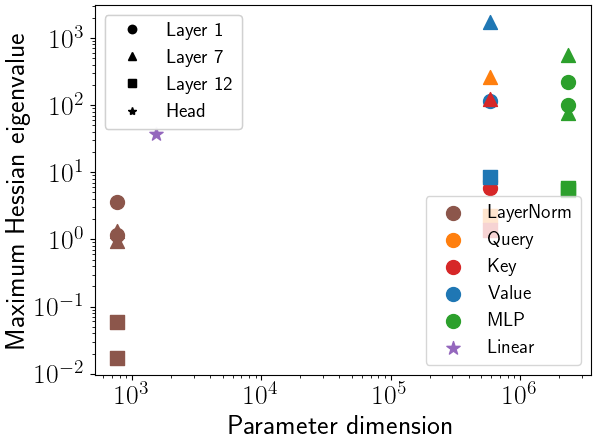
            }
            \subcaption{RoBERTa fine-tuned with SGD on RTE}
        \end{minipage}
        \begin{minipage}{0.32\columnwidth}
            \centering
            \includegraphics[width=0.99\columnwidth]{
                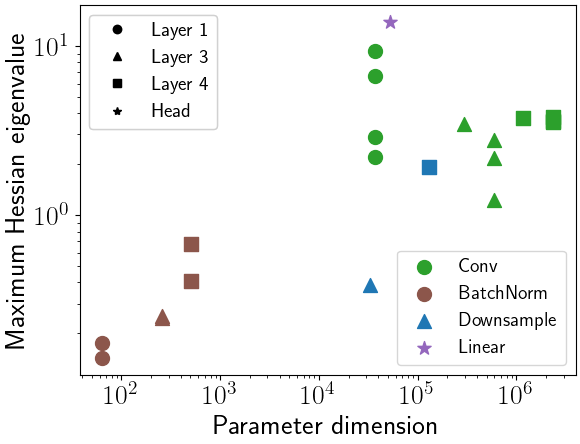
            }
            \subcaption{Pre-trained ResNet18 on Flowers102}
        \end{minipage}
        \begin{minipage}{0.32\columnwidth}
            \centering
            \includegraphics[width=0.99\columnwidth]{
                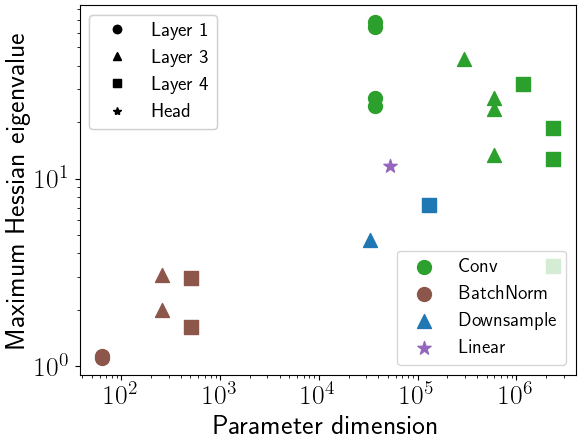
            }
            \subcaption{ResNet18 fine-tuned with Adam on Flowers102}
        \end{minipage}
        \begin{minipage}{0.32\columnwidth}
            \centering
            \includegraphics[width=0.99\columnwidth]{
                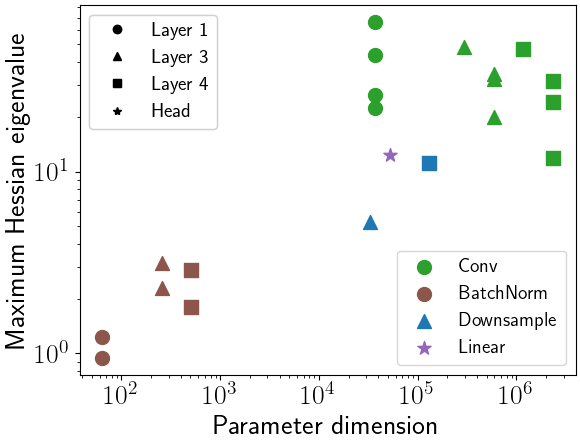
            }
            \subcaption{ResNet18 fine-tuned with SGD on Flowers102}
        \end{minipage}
        \begin{minipage}{0.32\columnwidth}
            \centering
            \includegraphics[width=0.99\columnwidth]{
                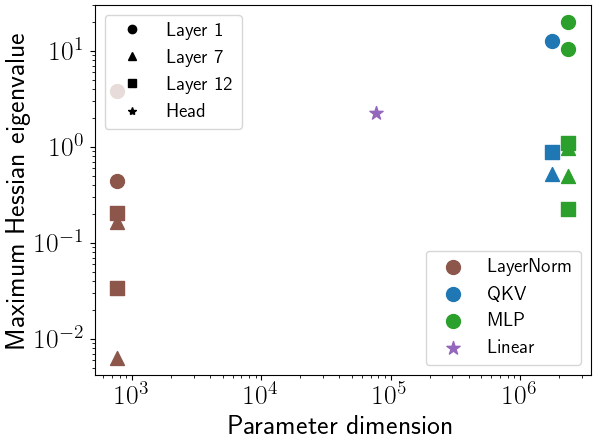
            }
            \subcaption{Pre-trained ViT on Aircraft}
        \end{minipage}
        \begin{minipage}{0.32\columnwidth}
            \centering
            \includegraphics[width=0.99\columnwidth]{
                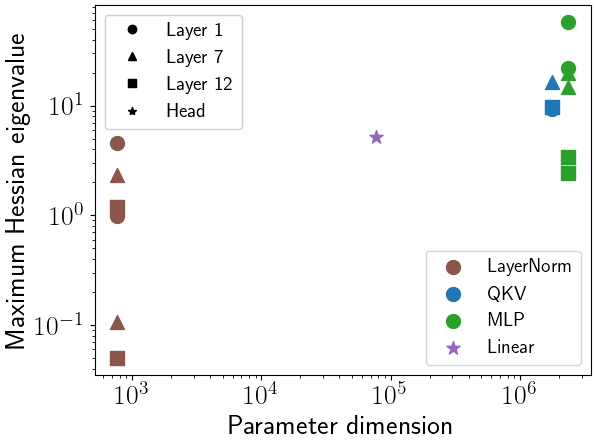
            }
            \subcaption{ViT fine-tuned with Adam on Aircraft}
        \end{minipage}
        \begin{minipage}{0.32\columnwidth}
            \centering
            \includegraphics[width=0.99\columnwidth]{
                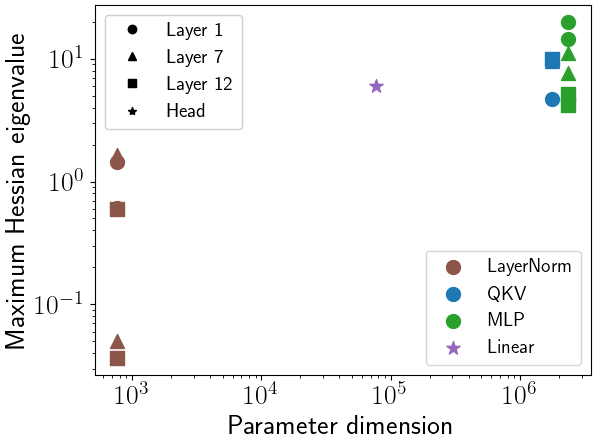
            }
            \subcaption{ViT fine-tuned with SGD on Aircraft}
        \end{minipage}
        \caption{Parameter dimension vs. Hessian matrix.}
        \label{fig_appendix:hessian_parameter}
    \end{figure}
    \clearpage
    \subsection{Correlation between full-batch gradient and gradient error}
    \label{sec_appendix:gradient_error} We show the correlation between the full-batch
    gradient and the gradient error in~\cref{fig_appendix:gradient_error}.
    \begin{figure}[H]
        \centering
        \begin{minipage}{0.49\columnwidth}
            \centering
            \includegraphics[width=0.99\columnwidth]{
                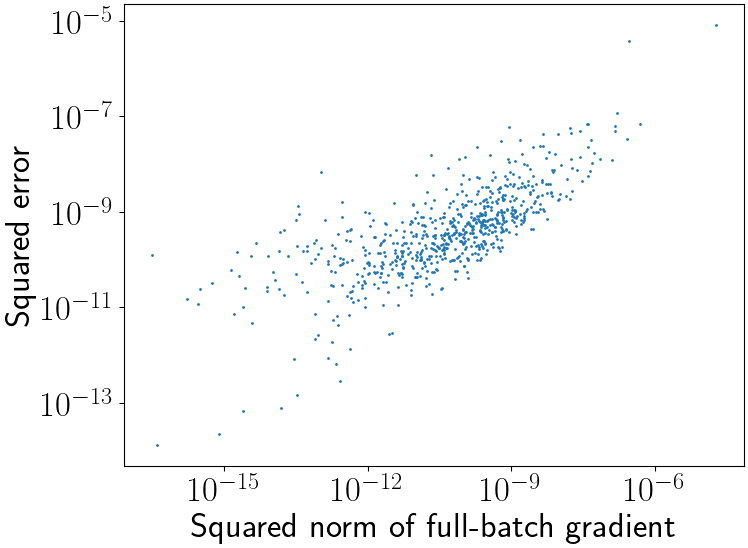
            }
            \subcaption{Pre-trained RoBERTa on CB}
        \end{minipage}
        \begin{minipage}{0.49\columnwidth}
            \centering
            \includegraphics[width=0.99\columnwidth]{
                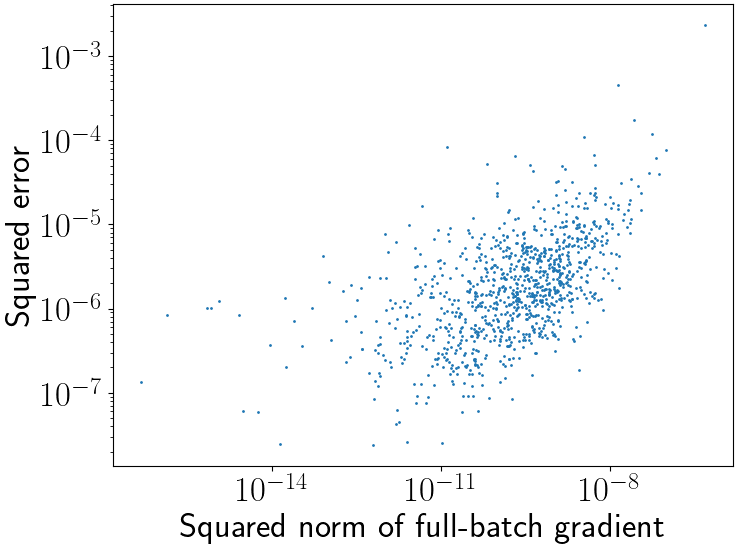
            }
            \subcaption{Pre-trained ResNet18 on Flowers102}
        \end{minipage}
        \begin{minipage}{0.49\columnwidth}
            \centering
            \includegraphics[width=0.99\columnwidth]{
                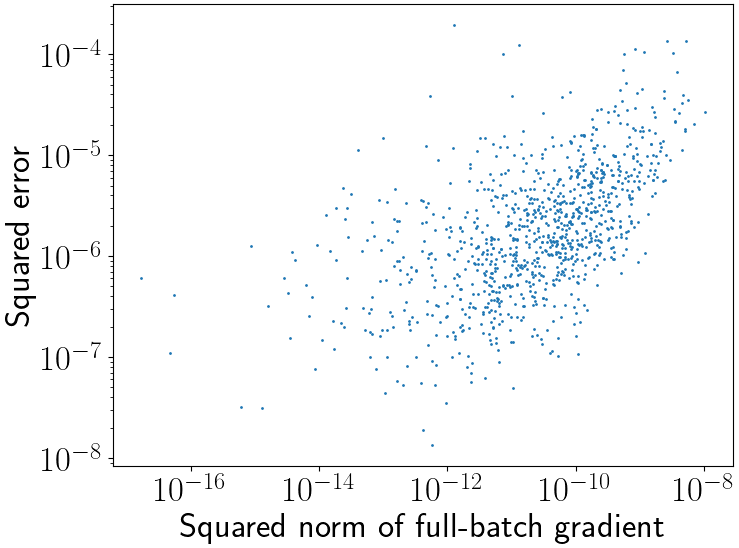
            }
            \subcaption{Pre-trained ResNet18 on Aircraft}
        \end{minipage}
        \begin{minipage}{0.49\columnwidth}
            \centering
            \includegraphics[width=0.99\columnwidth]{
                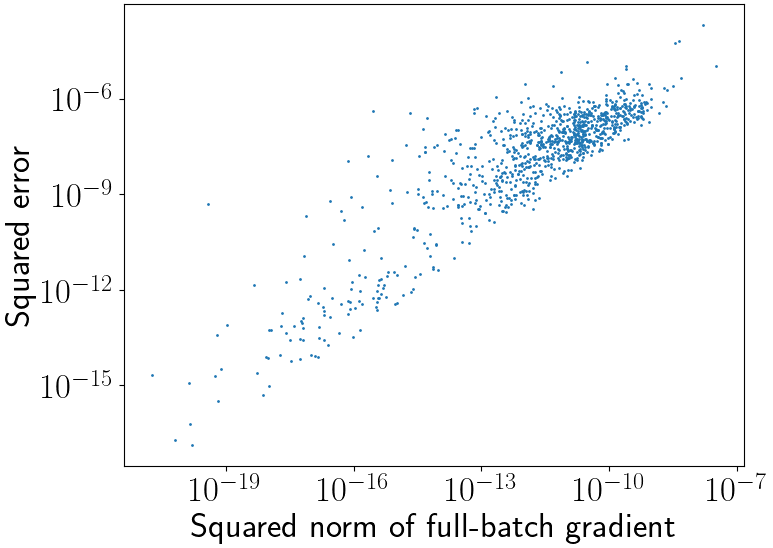
            }
            \subcaption{Pre-trained ViT on Flowers102}
        \end{minipage}
        \begin{minipage}{0.49\columnwidth}
            \centering
            \includegraphics[width=0.99\columnwidth]{
                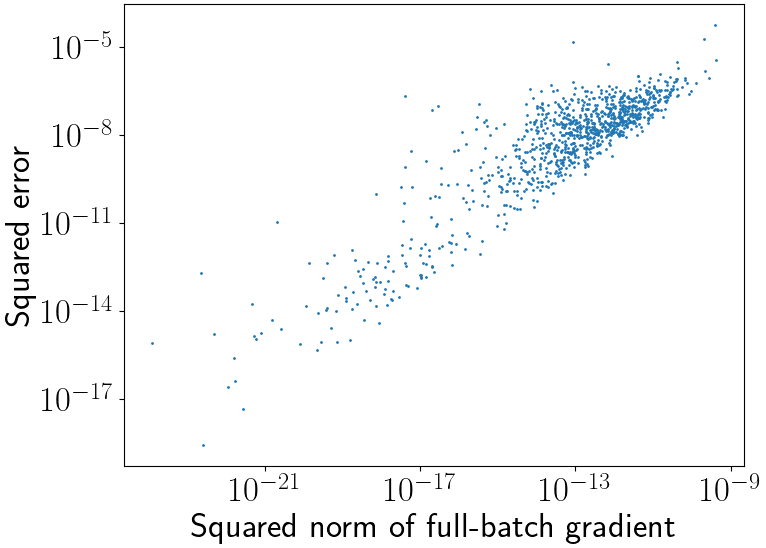
            }
            \subcaption{Pre-trained ViT on Aircraft}
        \end{minipage}
        \caption{coordinate-wise full-batch gradient vs. gradient error.}
        \label{fig_appendix:gradient_error}
    \end{figure}
    \clearpage
    \subsection{Gradient per parameter}
    \begin{figure}[htb]
        \centering
        \begin{minipage}{0.49\columnwidth}
            \centering
            \includegraphics[width=0.68\columnwidth]{
                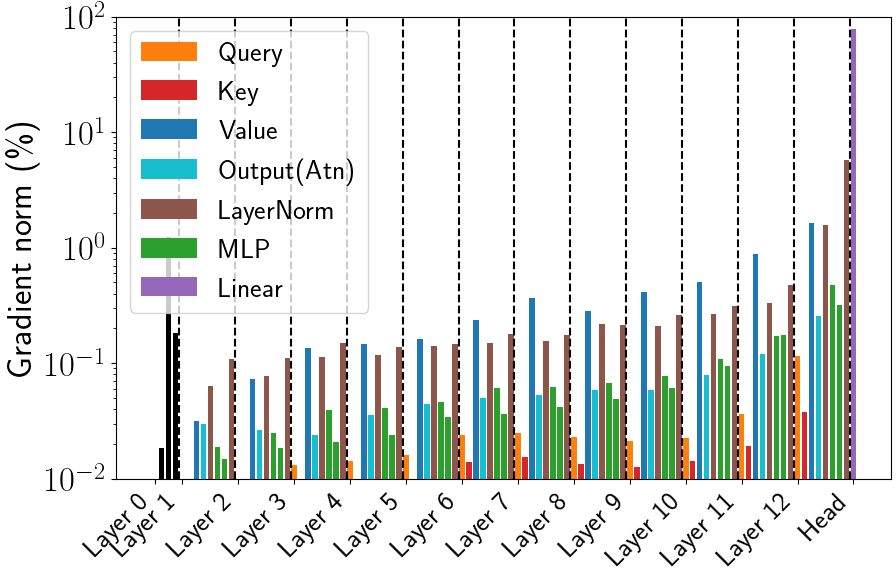
            }
            \subcaption{RoBERTa on CB}
        \end{minipage}
        \begin{minipage}{0.49\columnwidth}
            \centering
            \includegraphics[width=0.68\columnwidth]{
                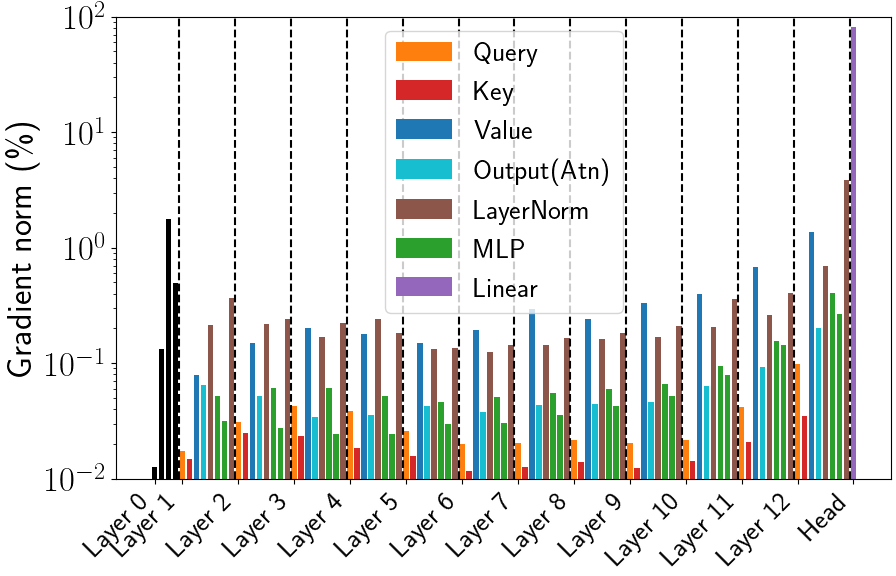
            }
            \subcaption{RoBERTa on WiC}
        \end{minipage}
        \begin{minipage}{0.49\columnwidth}
            \centering
            \includegraphics[width=0.68\columnwidth]{
                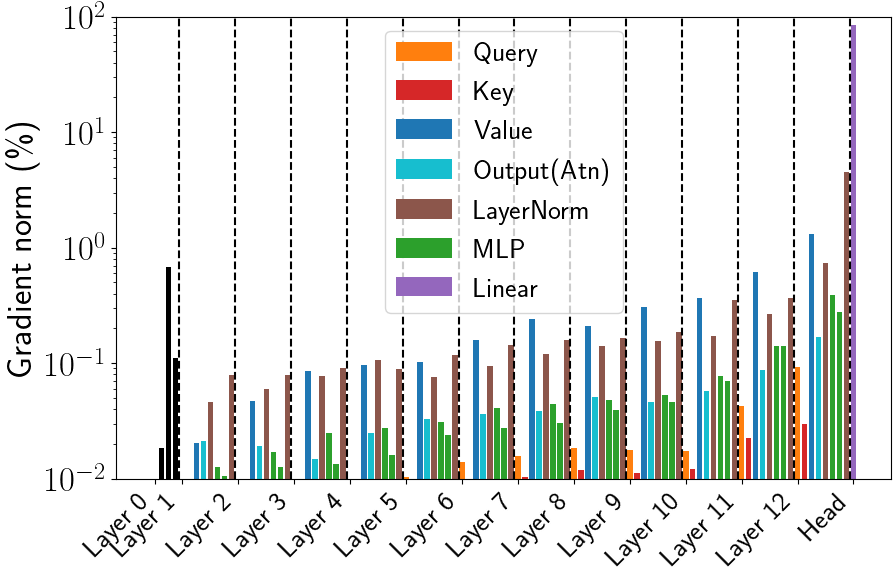
            }
            \subcaption{RoBERTa on MRPC}
        \end{minipage}
        \begin{minipage}{0.49\columnwidth}
            \centering
            \includegraphics[width=0.68\columnwidth]{
                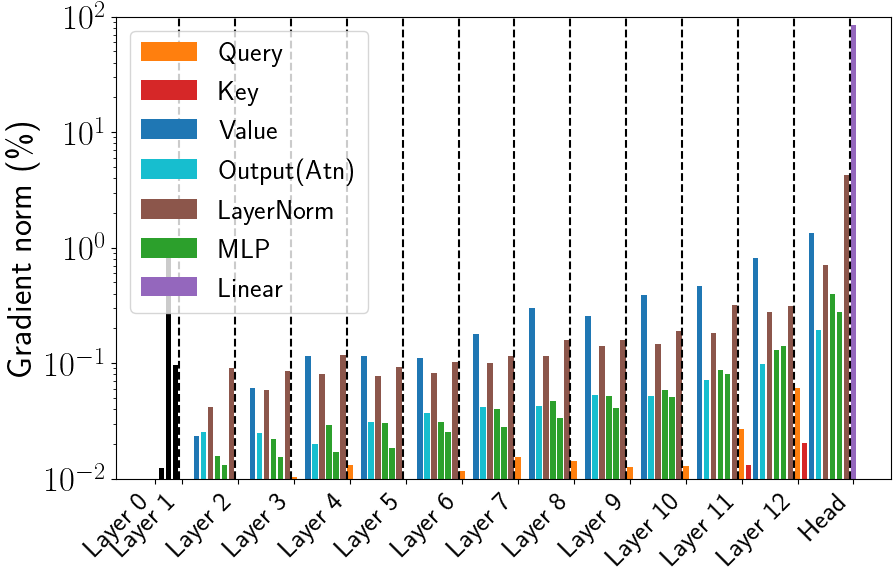
            }
            \subcaption{RoBERTa on BoolQ}
        \end{minipage}
        \begin{minipage}{0.49\columnwidth}
            \centering
            \includegraphics[width=0.68\columnwidth]{
                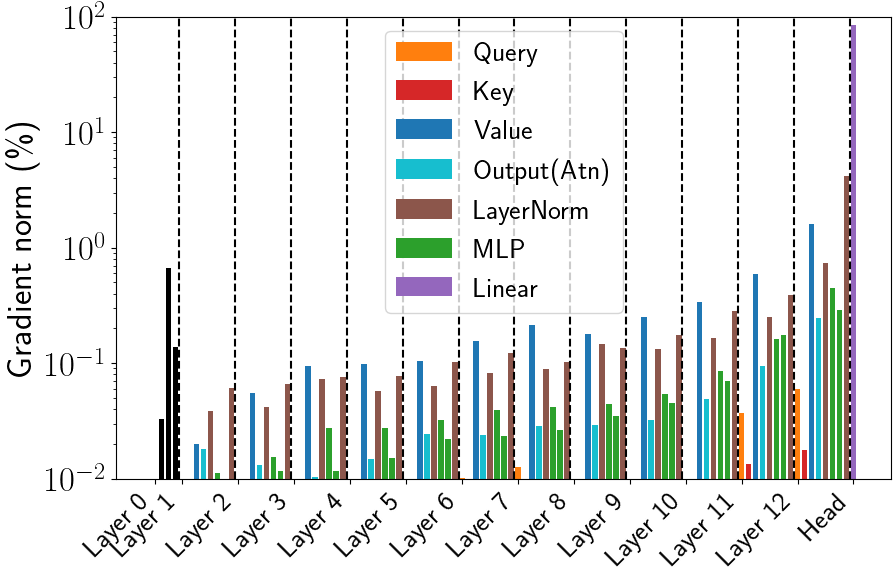
            }
            \subcaption{RoBERTa on CoLA}
        \end{minipage}
        \begin{minipage}{0.49\columnwidth}
            \centering
            \includegraphics[width=0.68\columnwidth]{
                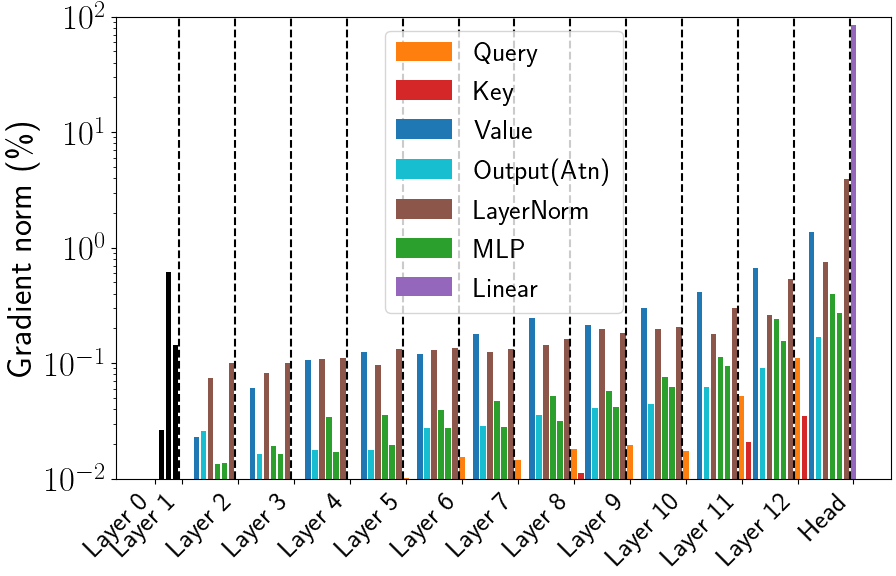
            }
            \subcaption{RoBERTa on SST-2}
        \end{minipage}
        \begin{minipage}{0.49\columnwidth}
            \centering
            \includegraphics[width=0.68\columnwidth]{
                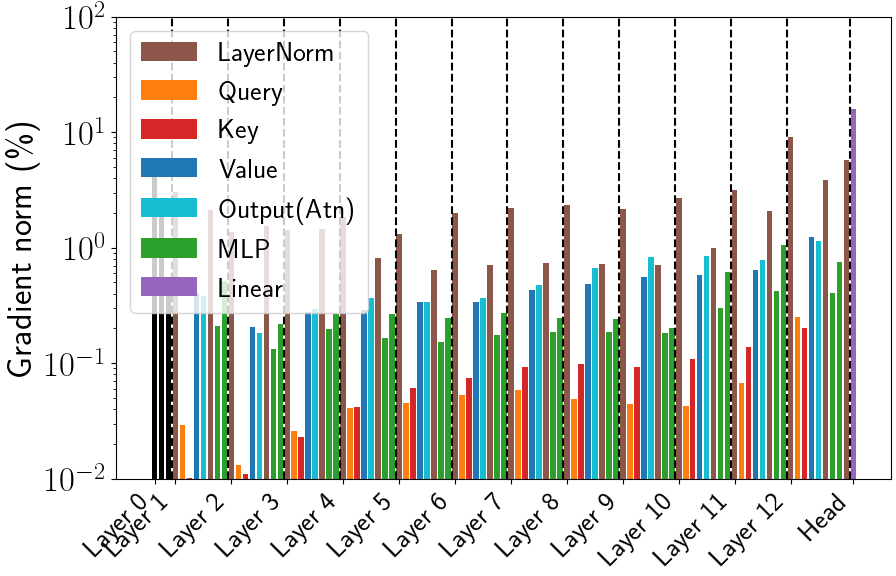
            }
            \subcaption{ViT on Aircraft}
        \end{minipage}
        \begin{minipage}{0.49\columnwidth}
            \centering
            \includegraphics[width=0.68\columnwidth]{
                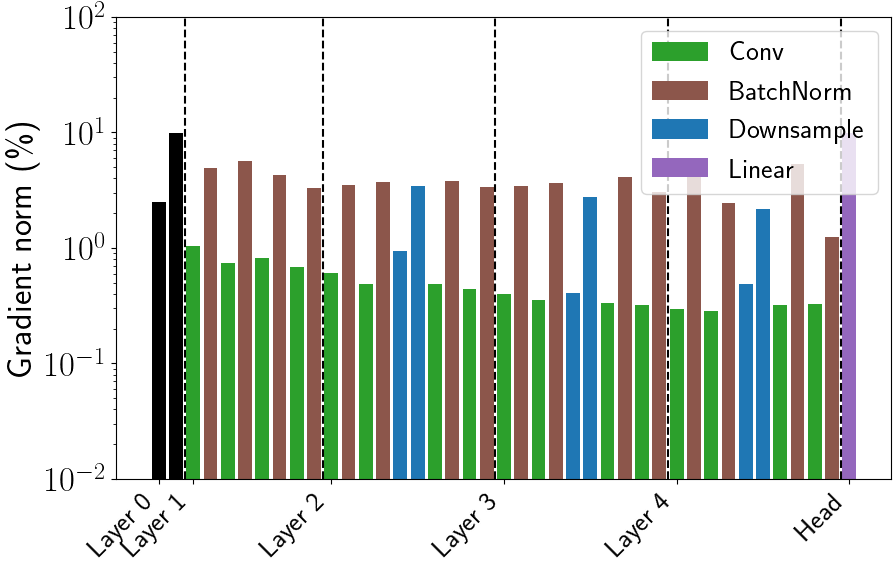
            }
            \subcaption{ResNet18 on Aircraft}
        \end{minipage}
        \caption{Gradient norm of each parameter of pre-trained model.}
        \label{fig_appendix:gradient_norm}
    \end{figure}
    \clearpage
    \subsection{Quantitative measures of gradient heterogeneity}
    \label{sec:quantitative_measures}
    \paragraph{Gini coefficient.}
    In~\cref{table:gini_coefficient}, we provide the Gini coefficient of the
    normalized gradients.

    Gini coefficient is a measure of statistical dispersion intended to
    represent the inequality of a distribution, which ranges from $0$ to $1$ and
    the higher value indicates more heterogeneity.

    Given a set of values $\{x_{1}, x_{2}, \dots, x_{n}\}$ sorted in non-decreasing
    order, the Gini coefficient is defined as:

    \begin{align}
        G = \frac{\sum_{i=1}^{n}\sum_{j=1}^{n}|x_{i}- x_{j}|}{2n^{2}\bar{x}},
    \end{align}
    where $\bar{x}$ is the mean of the values.

    \paragraph{Layer-wise gradient norm ratio.}
    In~\cref{table:layer_ratio}, we present the ratio of the gradient norm for
    each layer, computed as:
    \begin{align}
        \frac{G_{l}}{\sum_{l'}G_{l'}},
    \end{align}
    where $G_{l}$ represents the sum of the normalized full-batch gradient norms
    of the parameters in layer $l$. Since all layers contain the same number of parameters,
    this comparison is valid.

    \begin{table}[h]
        \centering
        \begin{tabular}{l c}
            \toprule Model (Dataset)   & Gini coefficient  \\
            \midrule RoBERTa-Base (CB) & $0.932 \pm 0.006$ \\
            RoBERTa-Base (RTE)         & $0.944 \pm 0.005$ \\
            RoBERTa-Base (WiC)         & $0.931 \pm 0.004$ \\
            RoBERTa-Base (BoolQ)       & $0.944 \pm 0.001$ \\
            RoBERTa-Base (CoLA)        & $0.954 \pm 0.003$ \\
            RoBERTa-Base (MRPC)        & $0.951 \pm 0.001$ \\
            RoBERTa-Base (SST-2)       & $0.930 \pm 0.032$ \\
            ResNet-18 (Flowers102)     & $0.407 \pm 0.013$ \\
            ResNet-18 (Aircraft)       & $0.433 \pm 0.005$ \\
            ViT-Base (Flowers102)      & $0.539 \pm 0.004$ \\
            ViT-Base (Aircraft)        & $0.598 \pm 0.009$ \\
            \bottomrule
        \end{tabular}
        \caption{Gini coefficient of normalized gradients. $\pm$ represents standard
        deviation.}
        \label{table:gini_coefficient}
    \end{table}

    \begin{table}[h]
        \centering
        {\small \renewcommand{\arraystretch}{0.9} 
        \setlength{\tabcolsep}{2pt} 
        \resizebox{\textwidth}{!}{ \begin{tabular}{lcccccccccccc}\toprule Layer & 1 & 2 & 3 & 4 & 5 & 6 & 7 & 8 & 9 & 10 & 11 & 12 \\ \midrule RoBERTa-Base (CB) & $0.021 \pm 0.001$ & $0.022 \pm 0.001$ & $0.027 \pm 0.002$ & $0.031 \pm 0.002$ & $0.036 \pm 0.002$ & $0.045 \pm 0.002$ & $0.054 \pm 0.002$ & $0.060 \pm 0.003$ & $0.070 \pm 0.004$ & $0.092 \pm 0.005$ & $0.156 \pm 0.015$ & $0.387 \pm 0.027$ \\ RoBERTa-Base (RTE) & $0.023 \pm 0.003$ & $0.024 \pm 0.003$ & $0.028 \pm 0.003$ & $0.030 \pm 0.003$ & $0.034 \pm 0.002$ & $0.042 \pm 0.002$ & $0.051 \pm 0.004$ & $0.058 \pm 0.003$ & $0.068 \pm 0.003$ & $0.093 \pm 0.008$ & $0.163 \pm 0.014$ & $0.387 \pm 0.023$ \\ RoBERTa-Base (WiC) & $0.047 \pm 0.014$ & $0.042 \pm 0.010$ & $0.041 \pm 0.005$ & $0.040 \pm 0.003$ & $0.036 \pm 0.002$ & $0.040 \pm 0.003$ & $0.049 \pm 0.004$ & $0.055 \pm 0.004$ & $0.063 \pm 0.003$ & $0.086 \pm 0.006$ & $0.145 \pm 0.009$ & $0.355 \pm 0.035$ \\ RoBERTa-Base (BoolQ) & $0.023 \pm 0.001$ & $0.024 \pm 0.001$ & $0.028 \pm 0.001$ & $0.031 \pm 0.002$ & $0.034 \pm 0.002$ & $0.043 \pm 0.002$ & $0.055 \pm 0.003$ & $0.062 \pm 0.004$ & $0.073 \pm 0.004$ & $0.098 \pm 0.007$ & $0.157 \pm 0.010$ & $0.370 \pm 0.034$ \\ RoBERTa-Base (CoLA) & $0.017 \pm 0.001$ & $0.018 \pm 0.001$ & $0.023 \pm 0.003$ & $0.025 \pm 0.002$ & $0.029 \pm 0.002$ & $0.037 \pm 0.003$ & $0.042 \pm 0.002$ & $0.048 \pm 0.002$ & $0.058 \pm 0.003$ & $0.083 \pm 0.006$ & $0.169 \pm 0.013$ & $0.451 \pm 0.027$ \\ RoBERTa-Base (MRPC) & $0.019 \pm 0.002$ & $0.020 \pm 0.002$ & $0.024 \pm 0.002$ & $0.028 \pm 0.002$ & $0.032 \pm 0.002$ & $0.040 \pm 0.002$ & $0.049 \pm 0.003$ & $0.057 \pm 0.004$ & $0.067 \pm 0.004$ & $0.089 \pm 0.007$ & $0.155 \pm 0.010$ & $0.421 \pm 0.037$ \\ RoBERTa-Base (SST-2) & $0.025 \pm 0.010$ & $0.026 \pm 0.010$ & $0.032 \pm 0.012$ & $0.036 \pm 0.012$ & $0.040 \pm 0.013$ & $0.046 \pm 0.012$ & $0.054 \pm 0.014$ & $0.061 \pm 0.014$ & $0.070 \pm 0.009$ & $0.087 \pm 0.008$ & $0.148 \pm 0.022$ & $0.373 \pm 0.086$ \\ ViT-Base (Flowers102) & $0.093 \pm 0.004$ & $0.065 \pm 0.002$ & $0.073 \pm 0.002$ & $0.071 \pm 0.004$ & $0.069 \pm 0.003$ & $0.071 \pm 0.005$ & $0.075 \pm 0.005$ & $0.079 \pm 0.003$ & $0.083 \pm 0.005$ & $0.094 \pm 0.002$ & $0.105 \pm 0.005$ & $0.122 \pm 0.004$ \\ ViT-Base (Aircraft) & $0.083 \pm 0.005$ & $0.058 \pm 0.003$ & $0.067 \pm 0.003$ & $0.063 \pm 0.003$ & $0.058 \pm 0.002$ & $0.063 \pm 0.003$ & $0.068 \pm 0.001$ & $0.073 \pm 0.002$ & $0.077 \pm 0.003$ & $0.090 \pm 0.001$ & $0.119 \pm 0.005$ & $0.181 \pm 0.011$ \\ \bottomrule\end{tabular} }}
        \caption{Layer-wise ratio of gradient norms in Transformers. $\pm$
        represents standard deviation.}
        \label{table:layer_ratio}
    \end{table}
    \clearpage
    \subsection{Train curves}
    We show the training curves on different datasets from that in the main text.
    On the CB dataset, the final train loss is similar among all optimizers, but
    the convergence speed of SGD is slower than other optimizers. This is consistent
    with our analysis suggesting the difficulty of training of RoBERTa with SGD.
    \begin{figure}[htb]
        \begin{minipage}{0.49\columnwidth}
            \centering
            \includegraphics[width=0.80\textwidth]{
                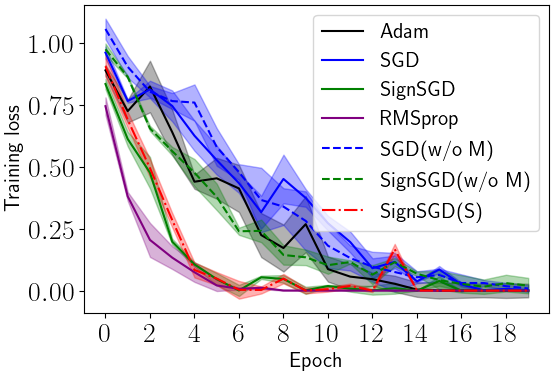
            }
            \subcaption{RoBERTa on CB}
        \end{minipage}
        \begin{minipage}{0.49\columnwidth}
            \centering
            \includegraphics[width=0.80\textwidth]{
                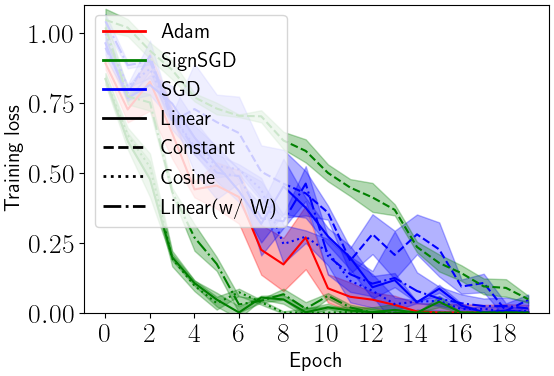
            }
            \subcaption{RoBERTa on CB with scheduler}
        \end{minipage}
        \begin{minipage}{0.49\textwidth}
            \centering
            \includegraphics[width=0.80\textwidth]{
                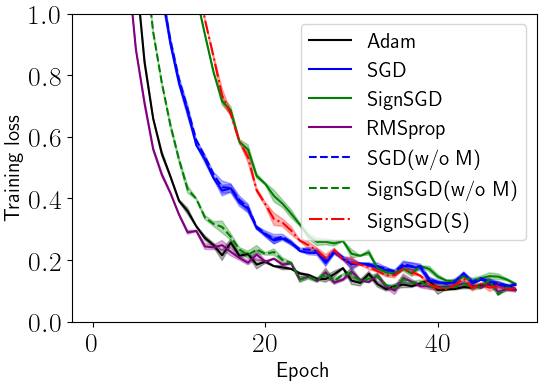
            }
            \subcaption{ResNet18 on Flowers102}
        \end{minipage}
        \begin{minipage}{0.49\columnwidth}
            \centering
            \includegraphics[width=0.80\textwidth]{
                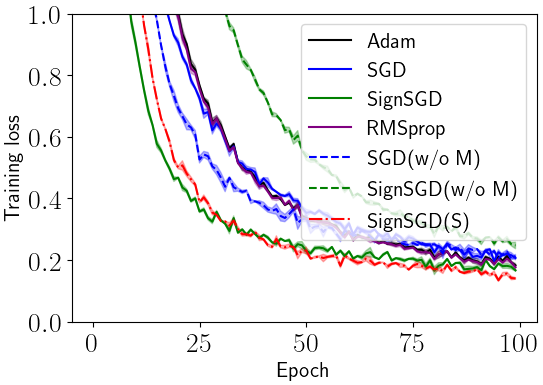
            }
            \subcaption{ViT on Aircraft}
        \end{minipage}
        \begin{minipage}{0.49\columnwidth}
            \centering
            \includegraphics[width=0.80\textwidth]{
                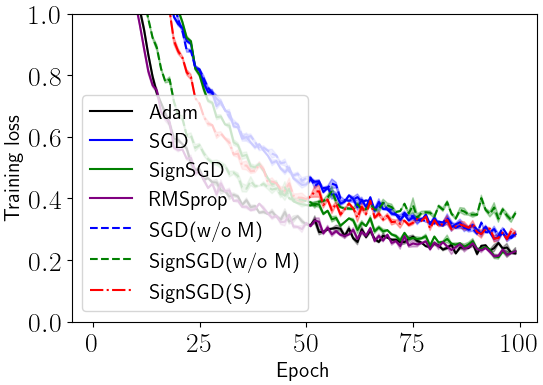
            }
            \subcaption{ResNet18 on Aircraft}
        \end{minipage}
        \caption{Training curve with different optimizers. w/ W indicates ``with
        warmup".}
        \label{fig_appendix:optimization}
    \end{figure}
    \begin{figure}
        \begin{minipage}{0.49\columnwidth}
            \centering
            \includegraphics[width=0.90\textwidth]{
                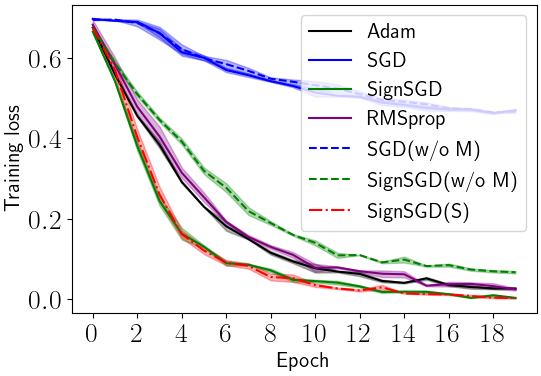
            }
            \subcaption{RoBERTa on WiC}
        \end{minipage}
        \begin{minipage}{0.49\columnwidth}
            \centering
            \includegraphics[width=0.90\textwidth]{
                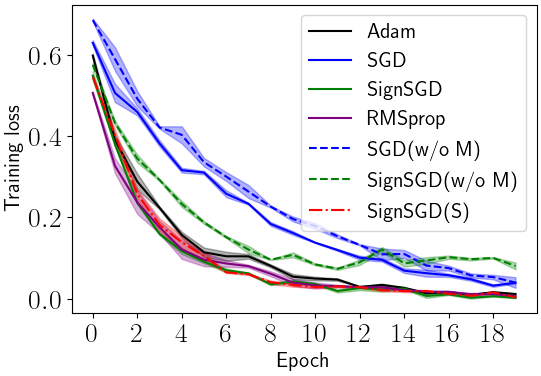
            }
            \subcaption{RoBERTa on MRPC}
        \end{minipage}
        \begin{minipage}{0.49\columnwidth}
            \centering
            \includegraphics[width=0.90\textwidth]{
                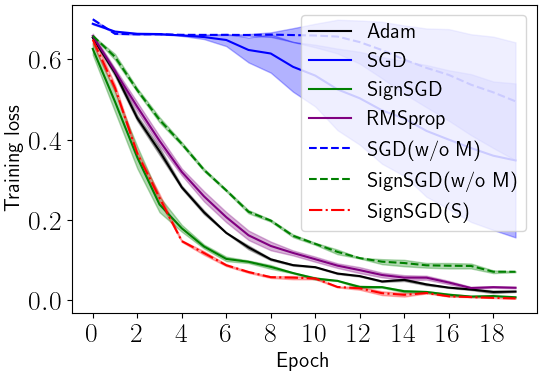
            }
            \subcaption{RoBERTa on BoolQ}
        \end{minipage}
        \begin{minipage}{0.49\columnwidth}
            \centering
            \includegraphics[width=0.90\textwidth]{
                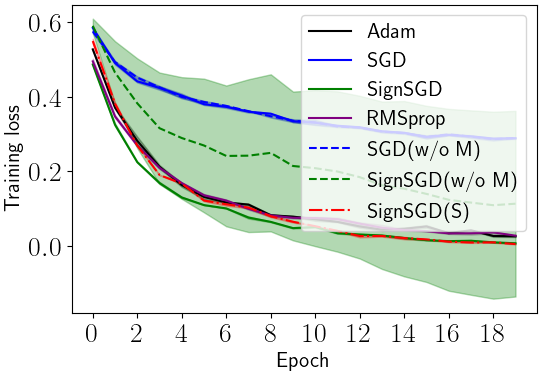
            }
            \subcaption{RoBERTa on CoLA}
        \end{minipage}
        \begin{minipage}{0.49\columnwidth}
            \centering
            \includegraphics[width=0.90\textwidth]{
                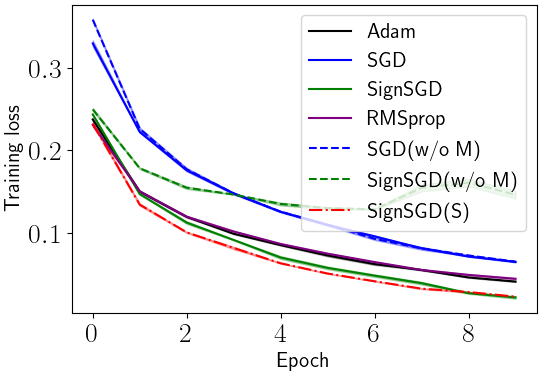
            }
            \subcaption{RoBERTa on SST-2}
        \end{minipage}
        \label{fig_appendix:optimization_additional}
        \caption{Training curve with different optimizers.}
    \end{figure}
    \clearpage
    \subsection{Test results}
    {\setlength{\tabcolsep}{3pt} \begin{table*}[htbp]
    \caption{Test results corresponding to the training curves shown in~\cref{fig:optimization,fig_appendix:optimization}. We report the accuracy and its standard deviation.} 
    \centering {\scriptsize\begin{tabular}{ccrrrrrr}\toprule Model & Dataset & \multicolumn{1}{c}{Adam} & \multicolumn{1}{c}{RMSprop} & \multicolumn{1}{c}{SGD} & \multicolumn{1}{c}{SignSGD} & \multicolumn{1}{c}{SGD(w/o M)} & \multicolumn{1}{c}{SignSGD(w/o M)} \\ \midrule \multirow{2}{*}{ViT-Base} & Flowers102 & $95.06 \pm 0.34$ & $95.15 \pm 0.41$ & $94.22 \pm 0.54$ & $94.01 \pm 0.98$ & $94.49 \pm 0.62$ & $92.45 \pm 1.35$ \\ & Aircraft & $74.28 \pm 0.59$ & $74.86 \pm 0.87$ & $71.33 \pm 0.27$ & $73.96 \pm 0.73$ & $55.25 \pm 0.67$ & $75.21 \pm 0.88$ \\  \midrule
    \multirow{2}{*}{ResNet18} & Flowers102 & $93.33 \pm 0.62$ & $93.27 \pm 0.71$ & $93.40 \pm 0.47$ & $94.43 \pm 0.54$ & $93.03 \pm 0.62$ & $93.10 \pm 0.37$ \\ & Aircraft & $71.95 \pm 0.69$ & $70.53 \pm 0.42$ & $72.66 \pm 0.71$ & $72.01 \pm 0.40$ & $72.16 \pm 0.41$ & $70.87 \pm 0.35$ \\ \midrule \multirow{2}{*}{RoBERTa-Base} & CB & $76.43 \pm 7.41$ & $84.29 \pm 4.96$ & $78.21 \pm 6.36$ & $83.21 \pm 2.71$ & $71.79 \pm 12.46$ & $77.86 \pm 2.99$ \\ & RTE & $75.88 \pm 1.56$ & $74.66 \pm 2.89$ & $75.31 \pm 3.12$ & $75.02 \pm 2.30$ & $73.21 \pm 1.83$ & $75.74 \pm 2.74$ \\ \bottomrule\end{tabular} \label{tab:results_all_ft_loss}}\end{table*} }
    \subsection{Effect of layer normalization}

\begin{table}[ht]
\centering
\caption{Gini coefficients of gradient norms for different normalization. A higher Gini coefficient indicates greater heterogeneity. ``No-LN'' refers to the architecture without layer normalization.}
\label{tab:gini_layer_norm}
\begin{tabular}{llll}
\toprule
Norm Type & Init & Dataset & Gini Coefficient \\
\midrule
No-LN     & Scratch     & RTE & $0.867 \pm 0.006$ \\
Pre-LN    & Scratch     & RTE & $0.880 \pm 0.004$ \\
Post-LN   & Scratch     & RTE & $0.941 \pm 0.012$ \\
Post-LN   & Pre-trained & RTE & $0.944 \pm 0.005$ \\
No-LN     & Scratch     & CB  & $0.850 \pm 0.049$ \\
Pre-LN    & Scratch     & CB  & $0.873 \pm 0.017$ \\
Post-LN   & Scratch     & CB  & $0.899 \pm 0.018$ \\
Post-LN   & Pre-trained & CB  & $0.932 \pm 0.006$ \\
\bottomrule
\end{tabular}
\end{table}
\clearpage
\subsection{Case study: Quadratic model}
\label{sec:app_quadratic}

Following~\citet{zhang2024transformers}, we consider a synthetic quadratic minimization problem of the form
\begin{align}
    L(\bm{\theta}) = \frac{1}{2}\bm{\theta}^{\top} \bm{H} \bm{\theta},
\end{align}
where $\bm{H} = \operatorname{blockdiag}(\{\bm{H}_i\}_{i=1}^{3})$ is a block-diagonal matrix. Each block $\bm{H}_i$ is constructed as
$\bm{H}_i = \bm{Q}_i \bm{\Lambda}_i \bm{Q}_i^{\top}$, where $\bm{Q}_i$ is an orthogonal matrix and $\bm{\Lambda}_i$ is a diagonal matrix containing the eigenvalues of the block.

We consider two settings for the eigenvalue configurations of $\bm{\Lambda}_i$, following~\citet{zhang2024transformers}:
\begin{itemize}
    \item \textbf{Homogeneous (Homo):}
    $\{1, 99, 4998\}$, $\{2, 100, 4999\}$, $\{3, 101, 5000\}$,
    \item \textbf{Heterogeneous (Hetero):}
    $\{1, 2, 3\}$, $\{99, 100, 101\}$, $\{4998, 4999, 5000\}$,
\end{itemize}
for $i = 1, 2, 3$.

We use fixed learning rates for both SGD and sign-based optimization methods. For SGD, the learning rate is set to~\citep{nesterov2013introductory}
\begin{align}
    \eta = \frac{2}{\lambda_{\min} + \lambda_{\max}}, 
\end{align}
where $\lambda_{\min} = 1$ and $\lambda_{\max} = 5000$ denote the smallest and largest eigenvalues of $\bm{H}$, respectively.
For SignSGD, we adopt the theoretically optimal learning rate derived from our analysis, as detailed in~\Cref{sec:optimal}.

\Cref{fig:gradient_quadratic} shows the evolution of the $\ell_2$ norm of the gradient during optimization, and \Cref{tab:lambda} reports the weighted Hessian complexities $\Lambda_G$ and $\Lambda_P$ defined in \Cref{whc}. In this quadratic setting, $\Lambda_P$ can be computed exactly, whereas $\Lambda_G$ involves a supremum over the fine-tuning region $\mathcal{R}_{\mathrm{FT}}$ and cannot be evaluated in closed form. We therefore approximate $\Lambda_G$ by $\sup_{t\in\{0,\ldots,T\}}\sum_{b=1}^{B}\frac{\|[\nabla L(\bm{\theta}_t)]_b\|_2^2}{\|\nabla L(\bm{\theta}_t)\|_2^2}\|[\nabla^2 L(\bm{\theta}_t)]_b\|_2$\footnote{While this computation restricts the supremum to the optimization trajectory and is therefore formally smaller than the supremum over $\bm{\theta}\in\mathcal{R}_{\mathrm{FT}}$, the resulting value is nearly maximal over $\bm{\theta}\in\mathbb{R}^{P}$, indicating that this approximation is inconsequential.}.

For SGD, the values of $\Lambda_G$ are nearly identical in the Homo and Hetero settings, and the corresponding gradient norm trajectories exhibit similar behavior. 
In contrast, for SignSGD, $\Lambda_P$ is substantially larger in the Homo setting than in the Hetero setting, which is reflected in a slower decay of the gradient norm (except in the early stage) and a larger number of iterations required to reach a small norm. 
These observations align with our theoretical prediction that the iteration complexity of gradient-based and sign-based methods is characterized by $\Lambda_G$ and $\Lambda_P$, respectively.

Moreover, the pronounced gap between $\Lambda_G$ and $\Lambda_P$ in the Hetero setting reflects strong gradient heterogeneity and gradient–Hessian correlation, explaining why sign-based methods are more effective in the Hetero setting. 
Furthermore, the optimization advantage of Adam over SGD in the heterogeneous quadratic setting reported by~\citet{zhang2024transformers} is also observed here when comparing SignSGD with SGD.

\begin{table}[H]
\centering
\caption{Values of the weighted Hessian complexities $\Lambda_G$ and $\Lambda_P$ in the quadratic model.}
\label{tab:lambda}
\begin{tabular}{lcc}
\toprule
 & \textbf{Homo} & \textbf{Hetero} \\
\midrule
$\Lambda_G$ & 4999.9997 & 4999.9997 \\
$\Lambda_P$ & 4999.0000 & 1701.3333 \\
\bottomrule
\end{tabular}
\end{table}

\begin{figure}[H]
            \centering
            \includegraphics[width=0.90\textwidth]{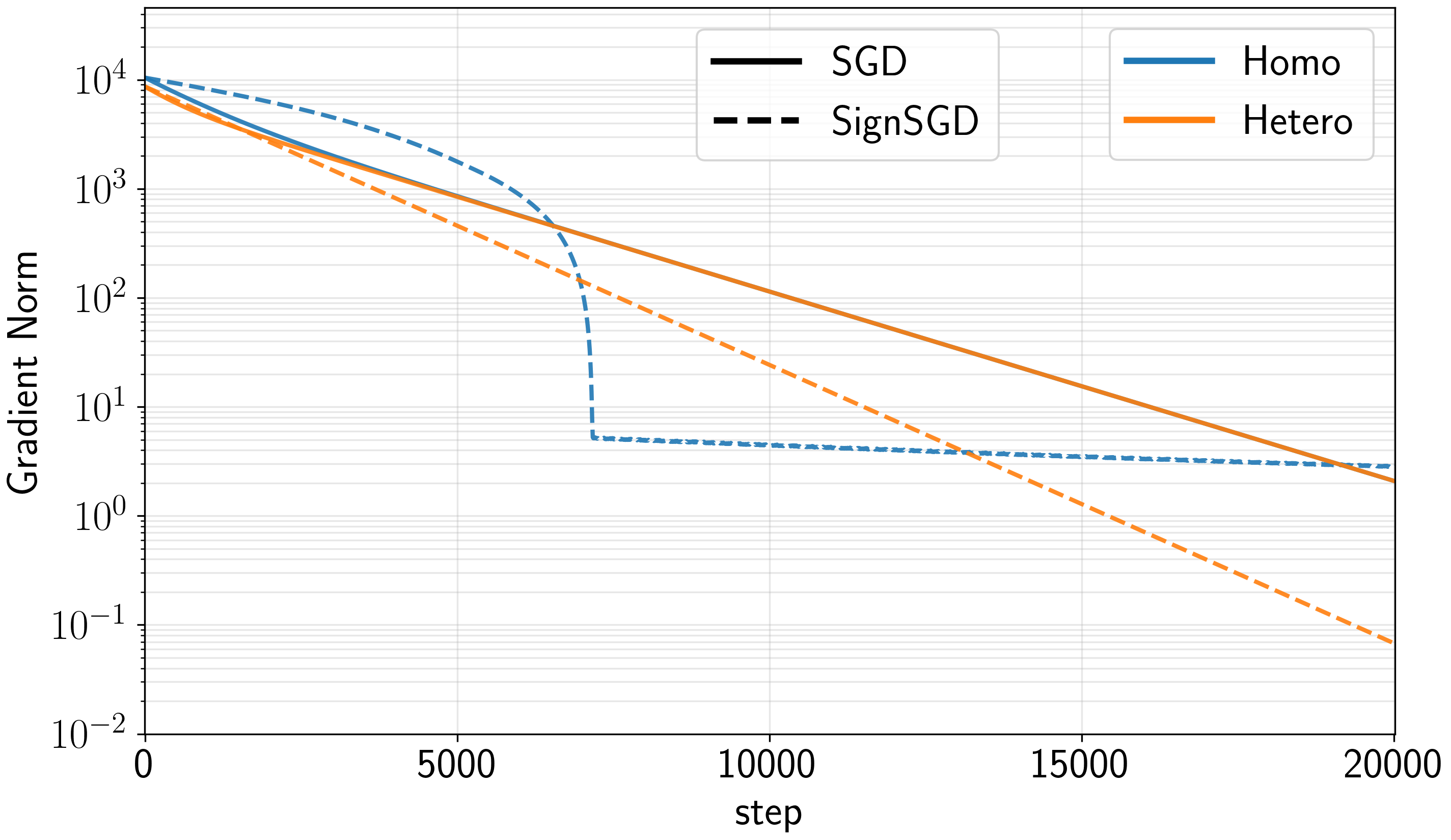
            }
\caption{Evolution of the $\ell_2$ norm of the gradient in the quadratic model.}
\label{fig:gradient_quadratic}
\end{figure}

\subsubsection{Optimal learning rate and upper bounds}
\label{sec:optimal}
\paragraph{SignSGD.}
We derive the optimal learning rate for SignSGD in the quadratic setting.
For the quadratic model, we have $\delta_{D} = \rho_{H} = 0$. Therefore, from Eq.~\eqref{eq:sign-dtm}, we obtain
\begin{align}
L(\bm{\theta}^{\text{Sign}}_{t+1}) - L(\bm{\theta}^{\text{Sign}}_{t})
    &\leq -\eta_{t}\|\nabla L(\bm{\theta}^{\text{Sign}}_{t})\|_{1}
    + \frac{\eta_{t}^{2}}{2}\Lambda_{P}P
    + \frac{\eta_{t}^{2}}{2}\delta_{D}P
    + \eta_{t}^{3}\frac{\rho_{H}}{6}P^{3/2} \\
    &= -\eta_{t}\|\nabla L(\bm{\theta}^{\text{Sign}}_{t})\|_{1}
    + \frac{\eta_{t}^{2}}{2}\Lambda_{P}P.
\end{align}
The right-hand side is minimized when
\begin{align}
    \eta_{t} = \frac{\|\nabla L(\bm{\theta}^{\text{Sign}}_{t})\|_{1}}{P\Lambda_{P}},
\end{align}
which is the learning rate used in our experiments. the steepest descent with respect to the $\ell_{\infty}$-norm as noted by \citet{kelner2014almost,carlson2015stochastic,balles2020geometry}.

Using this learning rate, we can derive an upper bound on the iteration complexity for the quadratic model by following the same argument as in the general setting:
\begin{align}
    \mathcal{T}_{\varepsilon}(\{\bm{\theta}^{\text{Sign}}_{t}\}_{t=0}^{\infty}, L, \|\mathord{\cdot}\|_{1})
    \leq \frac{2\bigl(L(\bm{\theta}_{0}) - L_{*}\bigr)}{P\varepsilon^{2}}\Lambda_{P} \label{eq:itercomp_sign}.
\end{align}

\paragraph{SGD.}
Analogously to the SignSGD case, starting from Eq.~\eqref{eq:grad-dtm}, we obtain the following inequality:
\begin{align}
L(\bm{\theta}^{\text{Grad}}_{t+1}) - L(\bm{\theta}^{\text{Grad}}_{t})
    &\leq -\eta_{t}\|\nabla L(\bm{\theta}^{\text{Grad}}_{t})\|_{2}^{2}
    + \frac{\eta_{t}^{2}}{2}\Lambda_{G}\|\nabla L(\bm{\theta}^{\text{Grad}}_{t})\|_{2}^{2}
    + \frac{\eta_{t}^{2}}{2}\delta_{D}\|\nabla L(\bm{\theta}^{\text{Grad}}_{t})\|_{2}^{2}
    + \eta_{t}^{3}\frac{\rho_{H}}{6}\|\nabla L(\bm{\theta}^{\text{Grad}}_{t})\|_{2}^{3} \\
    &= -\eta_{t}\|\nabla L(\bm{\theta}^{\text{Grad}}_{t})\|_{2}^{2}
    + \frac{\eta_{t}^{2}}{2}\Lambda_{G}\|\nabla L(\bm{\theta}^{\text{Grad}}_{t})\|_{2}^{2},
\end{align}
where we used $\delta_{D} = \rho_{H} = 0$ for the quadratic model.
The right-hand side is minimized when
\begin{align}
    \eta_{t} = \frac{1}{\Lambda_{G}}. \label{eq:optlr_grad2}
\end{align}

Using this learning rate, we derive an upper bound on the iteration complexity for the quadratic model by following the same argument as in the general setting:
\begin{align}
    \mathcal{T}_{\varepsilon}(\{\bm{\theta}^{\text{Grad}}_{t}\}_{t=0}^{\infty}, L, \|\mathord{\cdot}\|_{2})
    \leq \frac{2\bigl(L(\bm{\theta}_{0}) - L_{*}\bigr)}{P\varepsilon^{2}}\Lambda_{G}.
\end{align}
This bound has the same form as Eq.~\eqref{eq:itercomp_sign}, with $\Lambda_{G}$ replacing $\Lambda_{P}$.

From \Cref{tab:lambda}, we observe that $\Lambda_{G}$ is approximately equal to $\lambda_{\max}$. Therefore, comparing Eq.~\eqref{eq:optlr_grad2} with the classical optimal learning rate for quadratic objectives,
\begin{align}
    \eta = \frac{2}{\lambda_{\min} + \lambda_{\max}}, \label{eq:optlr_grad}
\end{align}
we obtain
\begin{align}
    \frac{1}{\Lambda_{G}} \leq \frac{2}{\lambda_{\min} + \lambda_{\max}}.
\end{align}
Empirically, we observed faster convergence when using the larger learning rate given by Eq.~\eqref{eq:optlr_grad}. Consequently, we adopt this learning rate in our experiments.

\subsection{Applicability beyond fine-tuning settings}

To test generalization beyond fine-tuning, we trained nanoGPT from scratch on the Shakespeare dataset. Adam outperformed SGD, and SignSGD remained competitive. We also found that gradient heterogeneity in nanoGPT lies between that of ViT/ResNet and RoBERTa. Despite the different setup, the results align with our analysis.

\begin{table}[ht]
\centering
\caption{Training loss for nanoGPT trained from scratch on the Shakespeare dataset. ``Min'' denotes the lowest observed loss during training, and ``Last'' denotes the final loss at the end of training.}
\label{tab:nanogpt_loss}
\begin{tabular}{lcc}
\toprule
Optimizer & Min & Last \\
\midrule
Adam     & $0.658 \pm 0.009$ & $0.687 \pm 0.019$ \\
SGD      & $0.928 \pm 0.120$ & $0.964 \pm 0.122$ \\
SignSGD  & $0.791 \pm 0.011$ & $0.820 \pm 0.017$ \\
\bottomrule
\end{tabular}
\end{table}

\begin{table}[hbt]
\centering
\caption{Gini coefficient of gradient norms for nanoGPT on the Shakespeare dataset. A higher Gini coefficient indicates greater gradient heterogeneity.}
\label{tab:nanogpt_gini}
\begin{tabular}{lc}
\toprule
Model (Dataset) & Gini Coefficient \\
\midrule
nanoGPT (Shakespeare) & $0.609 \pm 0.004$ \\
\bottomrule
\end{tabular}
\end{table}
    \section{Discussion on momentum in SignSGD}
    \label{sec:momentum} 
    The impact of the momentum term used in Adam has not been considered in the analysis so far. However, in sample-wise training, the presence of a
    momentum term significantly affects the updates of the linear head, particularly
    for the bias term.

    \paragraph{Model.}
    The model $\bm{f}$ comprises a pre-trained feature extractor $\bm{\phi}(\cdot): \mathcal{X}\rightarrow \mathbb{R}^{h}$ and a linear head with weight $\bm{V}\in \mathbb{R}^{C \times h}$ and bias $\bm{b}\in \mathbb{R}^{C}$. The output is given by $\bm{f}(\bm{x}) = \bm{V}\bm{\phi}(\bm{x}) + \bm{b}$. 
    
    \begin{proposition}[SignSGD without momentum]
        \label{proposition:signsgd} Let $\Delta^{\text{S}}\theta$ and $\Delta^{\text{F}}\theta$ denote the one-epoch updates of a parameter $\theta$ during sample-wise and full-batch training, respectively. For a linear head trained using the cross-entropy loss and SignSGD with a learning rate $\eta$, the updates are as follows:

    For the bias term $b_k$:
        \begin{align}
            \Delta^{\text{S}}b_{k} & = -\frac{\eta}{N}\sum_{i=1}^{N}(1 - 2 \cdot \mathbbm{1}[y^{(i)}= k]), \quad
            \Delta^{\text{F}}b_{k} = -\eta\sign\left(\sum_{i=1}^{N}\delta^{(i)}_{p_{k}}\right),
        \end{align}
        and for the weight matrix $V_{k,l}$:
        \begin{align}
            \Delta^{\text{S}}V_{k,l} & = -\frac{\eta}{N}\bigg(\sum\limits_{y^{(i)}\neq k}s_{l}^{(i)} -\sum\limits_{y^{(i)}=k}s_{l}^{(i)}   \bigg),  \quad \Delta^{\text{F}}V_{k,l}  = -\eta \sign\bigg( \sum_{i=1}^{N}\bm{\phi}(\bm{x}^{(i)})_{l}\delta^{(i)}_{p_{k}}\bigg),
        \end{align}
        where $\delta^{(i)}_{p_{k}}\coloneqq \softmax(\bm{f}(\bm{x}^{(i)}))_{k}- \mathbbm
        {1}[k=y^{(i)}]$
        represents the prediction error for the $i$-th sample and class $k$ and $s_{l}^{(i)}\coloneqq \sign\left(\bm{\phi}(\bm{x}^{(i)})_{l}\right)$ is the sign of the $l$-th element of the feature embedding $\bm{\phi}(\bm{x}^{(i)})_{l}$.
    \end{proposition}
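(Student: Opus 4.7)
The plan is to reduce everything to the gradient of the per-sample cross-entropy loss and then exploit the fact that, for a frozen feature extractor and a softmax output, the sign of each per-coordinate gradient is determined by data alone (labels and feature signs), independent of the current parameters. First I would compute, for sample $i$, $\partial \ell^{(i)}/\partial b_k = \delta^{(i)}_{p_k}$ and $\partial \ell^{(i)}/\partial V_{k,l} = \delta^{(i)}_{p_k}\,\bm{\phi}(\bm{x}^{(i)})_l$ by the standard softmax--cross-entropy computation. The key observation is that $\softmax(\bm{f}(\bm{x}^{(i)}))_k \in (0,1)$, so $\delta^{(i)}_{p_k} < 0$ exactly when $y^{(i)}=k$ and $\delta^{(i)}_{p_k} > 0$ otherwise; hence $\sign(\delta^{(i)}_{p_k}) = 1 - 2\,\mathbbm{1}[y^{(i)}=k]$, and in particular this sign is a function of the label only.

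Next I would handle the full-batch update. The full-batch gradient is $\tfrac{1}{N}\sum_i \partial \ell^{(i)}/\partial \theta$, and applying one SignSGD step with learning rate $\eta$ gives the stated $\Delta^{\text{F}} b_k$ and $\Delta^{\text{F}} V_{k,l}$ immediately, since $\sign$ is scale-invariant and absorbs the $1/N$. For $V_{k,l}$, I would keep the per-sample gradient intact as $\delta^{(i)}_{p_k}\bm{\phi}(\bm{x}^{(i)})_l$ inside the $\sign$, matching the stated form.

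For the sample-wise update, with per-sample step size $\eta/N$ and SignSGD applied one sample at a time, the $i$-th step contributes $-\tfrac{\eta}{N}\sign(\partial \ell^{(i)}/\partial \theta)$. Because $\sign(\delta^{(i)}_{p_k})$ depends only on the label and $s_l^{(i)} = \sign(\bm{\phi}(\bm{x}^{(i)})_l)$ depends only on the (frozen) feature map and input, both signs are invariant along the trajectory. Hence the contributions can be summed in any order without reference to the intermediate iterates, giving $\Delta^{\text{S}} b_k = -\tfrac{\eta}{N}\sum_i (1 - 2\,\mathbbm{1}[y^{(i)}=k])$ and $\Delta^{\text{S}} V_{k,l} = -\tfrac{\eta}{N}\sum_i (1-2\,\mathbbm{1}[y^{(i)}=k])\,s_l^{(i)}$; splitting the latter sum according to whether $y^{(i)}=k$ produces the stated difference of partial sums over $\{y^{(i)}\neq k\}$ and $\{y^{(i)}=k\}$.

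The main obstacle is justifying that one can telescope the $N$ sample-wise updates into a single closed form despite SignSGD being a nonlinear operator on the gradient: this hinges entirely on the sign-invariance argument above (fixed feature extractor plus label-determined sign of $\delta^{(i)}_{p_k}$). Once that point is clear, the rest of the proof is straightforward algebra and the decomposition of $\sum_i (1 - 2\,\mathbbm{1}[y^{(i)}=k])\, s_l^{(i)}$ into class-wise sums.
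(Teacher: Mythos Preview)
Your proposal is correct and follows essentially the same approach as the paper: compute the per-sample softmax--cross-entropy gradients, observe that $\sign(\delta^{(i)}_{p_k}) = 1 - 2\,\mathbbm{1}[y^{(i)}=k]$ because the softmax output lies in $(0,1)$, and factor the sign of the weight gradient as $s_l^{(i)}\cdot\sign(\delta^{(i)}_{p_k})$. Your explicit justification of the telescoping step---that the per-sample signs are invariant along the trajectory because the feature extractor is frozen and the sign of $\delta^{(i)}_{p_k}$ depends only on the label---is in fact more careful than the paper's own proof, which simply writes the sample-wise one-epoch update as a sum of per-sample sign steps without commenting on why intermediate parameter updates do not affect later signs.
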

    \paragraph{Sign-alignment causes large updates.}
    In full-batch training, the updates $\Delta^{\text{F}}b_{k}$ and $\Delta^{\text{F}}V_{k,l}$ depend on the model predictions. Because the signs of these updates vary across epochs, these updates remain small. In contrast, in sample-wise training, update signs can align across epochs, resulting in disproportionately large updates. This effect is particularly pronounced for the bias term $\Delta^{\text{S}}b_{k}$, which is independent of model predictions and grows with the number of classes. Similarly, the sign of $\Delta^{\text{S}}V_{k,l}$, which depends on the feature extractor output $\bm{\phi}(\bm{x}^{(i)})$, may align across epochs.
    \paragraph{Momentum resolves the issue.}
    Excessively large updates can cause training instability and incorrect predictions. Although the proposition specifically addresses sample-wise updates, similar challenges can arise in batch training. Momentum, which estimates the full-batch gradient using exponential moving averages, effectively mitigates this problem.
    \subsection{Experimental results}
    We show the norm of the linear head for different datasets, models, and optimizers. The results indicate that when the number of classes is large,
    the bias term of the linear head exhibits a larger norm with SignSGD without momentum compared to other optimizers. In contrast, the weight norm does not
    necessarily increase under the same conditions, even with SignSGD without momentum.
    This observation aligns with the theoretical analysis in~\cref{proposition:signsgd}, which suggests that a large number of classes leads to an increase in the bias
    term norm, while the weight norm is influenced by the sign of the feature extractor outputs.
    \begin{figure}[htb]
        \centering
        \begin{minipage}{0.49\columnwidth}
            \centering
            \includegraphics[width=0.75\textwidth]{
                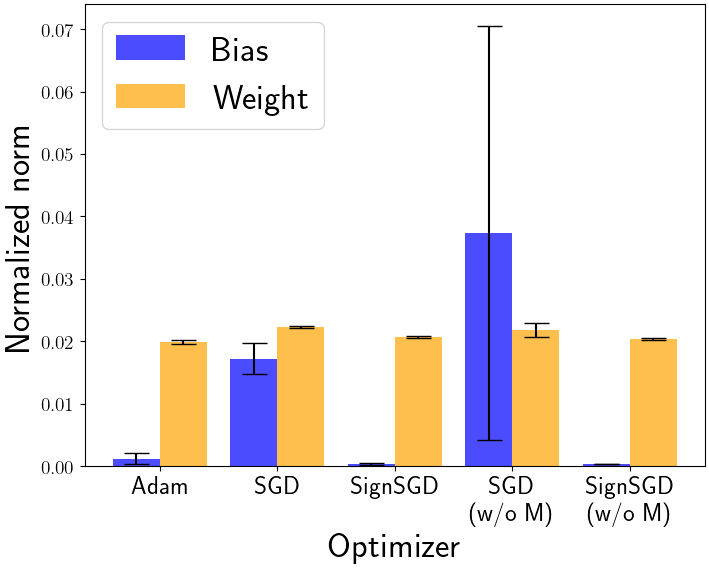
            }
            \subcaption{RoBERTa on CB}
        \end{minipage}
        \begin{minipage}{0.49\columnwidth}
            \centering
            \includegraphics[width=0.75\textwidth]{
                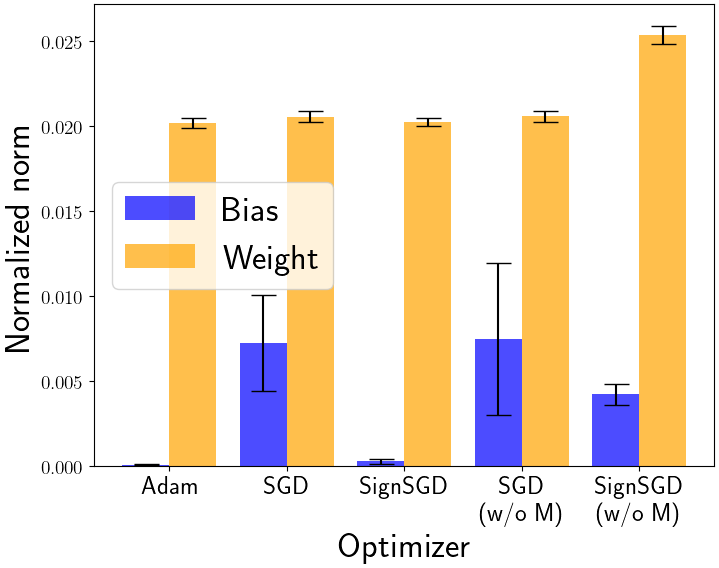
            }
            \subcaption{RoBERTa on RTE}
        \end{minipage}
        \begin{minipage}{0.49\columnwidth}
            \centering
            \includegraphics[width=0.75\textwidth]{
                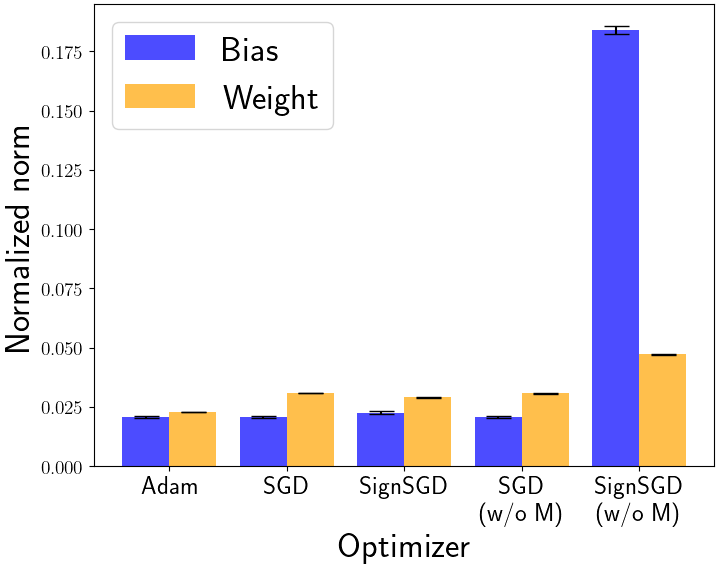
            }
            \subcaption{ViT on Flowers102}
        \end{minipage}
        \begin{minipage}{0.49\columnwidth}
            \centering
            \includegraphics[width=0.75\textwidth]{
                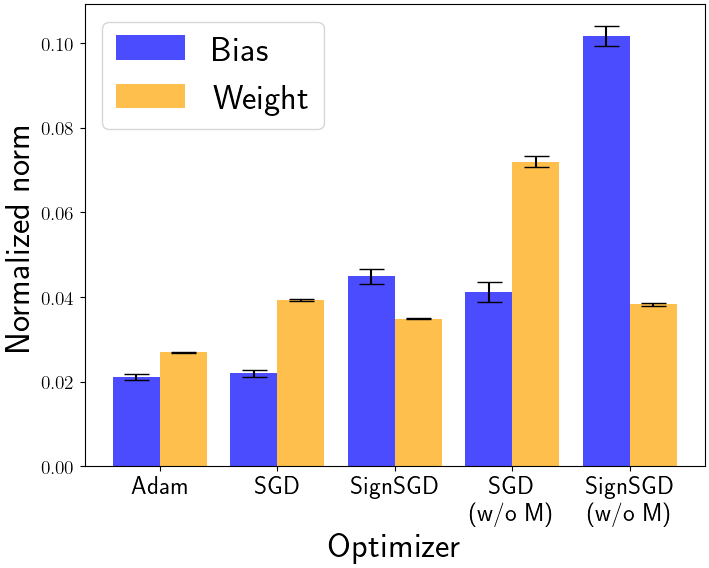
            }
            \subcaption{ViT on Aircraft}
        \end{minipage}
        \begin{minipage}{0.49\columnwidth}
            \centering
            \includegraphics[width=0.75\textwidth]{
                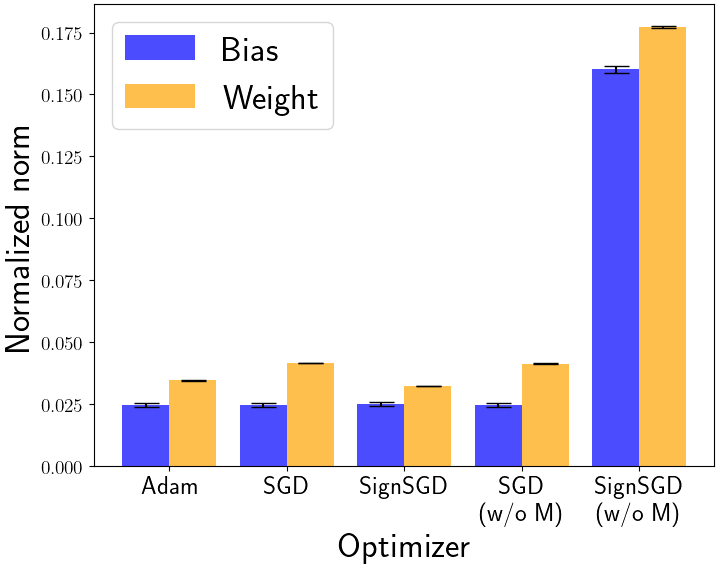
            }
            \subcaption{ResNet18 on Flowers102}
        \end{minipage}
        \begin{minipage}{0.49\columnwidth}
            \centering
            \includegraphics[width=0.75\textwidth]{
                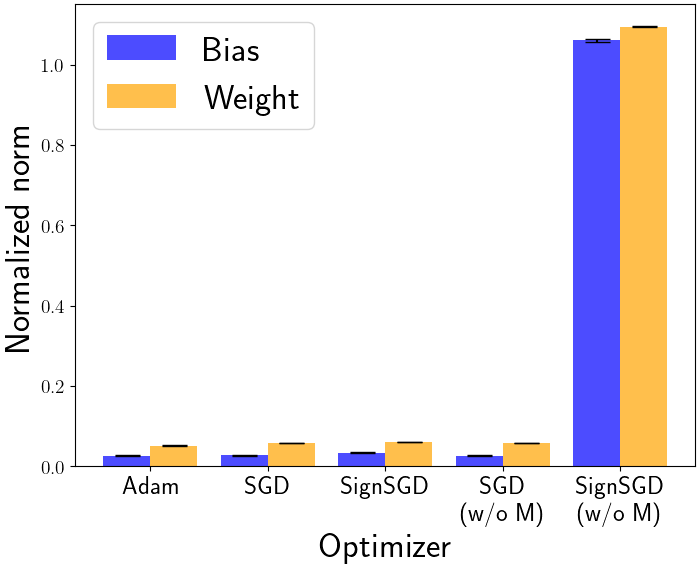
            }
            \subcaption{ResNet18 on Aircraft}
        \end{minipage}
        \caption{Norm of the linear head.}
        \label{fig_appendix:classifier_norm}
    \end{figure}
    
    \subsection{Proof of Proposition \ref{proposition:signsgd}}
    \label{proof:signsgd}
    \begin{proof}
        The partial derivative of the bias and the weight matrix with the cross-entropy
        loss is given by:
        \begin{align}
            \frac{\partial \ell (\bm{f}(\bm{x}^{(i)}, y^{(i)}))}{\partial b_{k}}   & =\frac{\partial \ell (\bm{f}(\bm{x}^{(i)}, y^{(i)}))}{\partial \bm{f}(\bm{x}^{(i)})}\frac{\partial \bm{f}(\bm{x}^{(i)})}{\partial b_{k}}                             \\
                                                                                   & = \frac{\partial \ell (\bm{f}(\bm{x}^{(i)}, y^{(i)}))}{\partial \bm{f}(\bm{x}^{(i)})}\frac{\partial \bm{V}\bm{\phi}(\bm{x}^{(i)})+\bm{b}}{\partial b_{k}}\nonumber   \\
                                                                                   & = (\softmax(\bm{f}(\bm{x}^{(i)}))-\bm{e}^{(y^{(i)})})^{\top}\bm{e}^{(k)}                                                                                             \\
                                                                                   & = \softmax(\bm{f}(\bm{x}^{(i)}))_{k}- \mathbbm{1}[k=y^{(i)}]                                                                                                         \\
            \frac{\partial \ell (\bm{f}(\bm{x}^{(i)}, y^{(i)}))}{\partial V_{k,l}} & = \frac{\partial \ell (\bm{f}(\bm{x}^{(i)}, y^{(i)}))}{\partial \bm{f}(\bm{x}^{(i)})}\frac{\partial \bm{V}\bm{\phi}(\bm{x}^{(i)})+\bm{b}}{\partial V_{k,l}}\nonumber \\
                                                                                   & = (\softmax(\bm{f}(\bm{x}^{(i)}))-\bm{e}^{(y^{(i)})})^{\top}\bm{\phi}(\bm{x}^{(i)})_{l}\bm{e}^{(k)}                                                                  \\
                                                                                   & = \bm{\phi}(\bm{x}^{(i)})_{l}(\softmax(\bm{f}(\bm{x}^{(i)}))_{k}- \mathbbm{1}[k=y^{(i)}])
        \end{align}
        The one-epoch updates of the bias and the weight matrix with the sample-wise
        training are given by:
        \begin{align}
            \Delta^{\text{S}}b_{k} & = -\frac{\eta}{N}\sum_{i=1}^{N}\sign\left(\frac{\partial \ell (\bm{f}(\bm{x}^{(i)}, y^{(i)}))}{\partial b_{k}}\right) \\
                                   & = -\frac{\eta}{N}\sum_{i=1}^{N}\sign\left(\softmax(\bm{f}(\bm{x}^{(i)}))_{k}- \mathbbm{1}[k=y^{(i)}]\right)           \\
                                   & = -\frac{\eta}{N}\sum_{i=1}^{N}(1-2\cdot \mathbbm{1}[y^{(i)}=k])
        \end{align}
        and
        \begin{align}
            \Delta^{\text{S}}V_{k,l} & = -\frac{\eta}{N}\sum_{i=1}^{N}\sign\left(\frac{\partial \ell (\bm{f}(\bm{x}^{(i)}, y^{(i)}))}{\partial V_{k,l}}\right)                                                    \\
                                     & =-\frac{\eta}{N}\sum_{i=1}^{N}\sign\left(\bm{\phi}(\bm{x}^{(i)})_{l}(\softmax(\bm{f}(\bm{x}^{(i)}))_{k}- \mathbbm{1}[k=y^{(i)}])\right)                                    \\
                                     & = -\frac{\eta}{N}\sum_{i=1}^{N}\sign\left(\bm{\phi}(\bm{x}^{(i)})_{l}\right)\sign\left(\softmax(\bm{f}(\bm{x}^{(i)}))_{k}- \mathbbm{1}[k=y^{(i)}]\right)                   \\
                                     & = -\frac{\eta}{N}\bigg(\sum\limits_{y^{(i)}\neq k}\sign\left(\bm{\phi}(\bm{x}^{(i)})_{l}\right)-\sum\limits_{y^{(i)}=k}\sign\left(\bm{\phi}(\bm{x}^{(i)})_{l}\right)\bigg)
        \end{align}
        The one-epoch updates of the bias and the weight matrix with the full-batch
        training are given by:
        \begin{align}
            \Delta^{\text{F}}b_{k} & = -\eta\sign\left(\frac{1}{N}\sum_{i=1}^{N}\frac{\partial \ell (\bm{f}(\bm{x}^{(i)}, y^{(i)}))}{\partial b_{k}}\right)    \\
                                   & = -\eta\sign\left(\frac{1}{N}\sum_{i=1}^{N}\left(\softmax(\bm{f}(\bm{x}^{(i)}))_{k}- \mathbbm{1}[k=y^{(i)}]\right)\right) \\
                                   & = -\eta\sign\left(\sum_{i=1}^{N}\delta^{(i)}_{p_{k}}\right)
        \end{align}
        and
        \begin{align}
            \Delta^{\text{F}}V_{k,l} & = -\eta\sign\left(\frac{1}{N}\sum_{i=1}^{N}\frac{\partial \ell (\bm{f}(\bm{x}^{(i)}, y^{(i)}))}{\partial V_{k,l}}\right)                 \\
                                     & =-\eta\sign\left(\frac{1}{N}\sum_{i=1}^{N}\bm{\phi}(\bm{x}^{(i)})_{l}(\softmax(\bm{f}(\bm{x}^{(i)}))_{k}- \mathbbm{1}[k=y^{(i)}])\right) \\
                                     & =-\eta \sign\bigg( \sum_{i=1}^{N}\bm{\phi}(\bm{x}^{(i)})_{l}\delta^{(i)}_{p_{k}}\bigg).
        \end{align}
    \end{proof}
    \clearpage
    \section{More discussion on Transformers}
    \label{appendix:more_transformer} In this section, we provide additional
    discussion on the gradient heterogeneity in Transformers, focusing on the
    self-attention mechanism.
    \paragraph{Additional notation.}
    The $k$-th standard basis vector is denoted by $\bm{e}^{(k)}$ with $\bm{e}^{(k)}
    _{l}= \delta_{kl}$, where $\delta_{kl}$ is the Kronecker delta. Function $\operatorname{vec}
    (\cdot)$ denotes row-wise vectorization. Frobenius norm and the Kronecker
    product is denoted by $\|\mathord{\cdot}\|_{F}$ and $\otimes$, respectively.
    \subsection{Transformer architecture}
    The Transformer architecture~\citep{vaswani2017attention} relies on the self-attention
    mechanism, which assigns importance to each token in the input sequence.

    For an input sequence of $n$ tokens, each of dimension $d$, represented by
    $\bm{X}\in \mathbb{R}^{n \times d}$, single-head self-attention is defined as:
    \begin{align}
        \operatorname{SA}(\bm{X}) \coloneqq \softmax\left(\frac{\bm{X}\bm{W}_{Q}(\bm{X}\bm{W}_{K})^{\top}}{\sqrt{d_{k}}}\right)\bm{X}\bm{W}_{V},
    \end{align}
    where $\bm{W}_{Q}, \bm{W}_{K}\in \mathbb{R}^{d \times d_k}$ and
    $\bm{W}_{V}\in \mathbb{R}^{d \times d_v}$ are learnable projection matrices
    for queries, keys, and values, respectively. Multi-head attention
    concatenates the outputs of parallel single-head self-attention mechanisms
    and applies a linear transformation, followed by a feed-forward network.
    \subsection{Gradient of self-attention mechanism}
    \label{sec:gradient_heterogeneity_attention} We analyze the gradients in
    self-attention, focusing on the value and query/key weight matrices. Using
    Lemma A.2 from~\citet{noci2022signal}, the Frobenius norms of these gradients
    are:
    \begin{align}
        \|\frac{\partial \operatorname{SA}(\bm{X})}{\partial \bm{W}_{V}}\|_{F} & = \|\bm{P}\bm{X}\otimes \bm{I}_{d_v}\|_{F}                                                                       \\
                                                                               & \leq \underbrace{\sqrt{d_{v}}\|\bm{P}\|_{F}\|\bm{X}\|_{F}}_{\eqqcolon \mathcal{U}_{V}}\label{eq:gradient_value},
    \end{align}
    \begin{align}
             & \|\frac{\partial \operatorname{SA}(\bm{X})}{\partial \bm{W}_{Q}}\|_{F}                                                                                                                                                \\
        =    & \|(\bm{I}_{n}\otimes \bm{W}_{V}\bm{X}^{\top})\frac{\partial \bm{P}}{\partial \bm{M}}\frac{\bm{X}\otimes \bm{X}\bm{W}_{K}}{\sqrt{d_{k}}}\|_{F}                                                                         \\
        \leq & \underbrace{\sqrt{n}\| \bm{W}_{V}\bm{X}^{\top}\|_{F}\|\frac{\partial \bm{P}}{\partial \bm{M}}\|_{F}\frac{\|\bm{X}\|_{F}\|\bm{X}\bm{W}_{K}\|_{F}}{\sqrt{d_{k}}}}_{\eqqcolon \mathcal{U}_{Q}}\label{eq:gradient_query},
    \end{align}
    where
    $\bm{M}\coloneqq \bm{X}\bm{W}_{Q}\bm{W}_{K}^{\top}\bm{X}^{\top}/\sqrt{d_{k}}$,
    $\bm{P}\coloneqq \softmax(\bm{M})$, and $\mathcal{U}_{V}$ and
    $\mathcal{U}_{Q}$ represent the upper bounds for the gradients of the value
    and query weight matrices, respectively. The derivation of the gradient for
    the key weight matrix is omitted, as it is analogous to that of the query weight
    matrix.

    Focusing on the attention matrix $\bm{P}$, we derive the following result.
    \begin{proposition}[Gradients and attention matrices]
        \label{proposition:attention_gradient} In Transformers, one-hot
        attention matrices uniquely maximize the upper bound of the Frobenius
        norm of the gradient with respect to the value weight matrix
        $\mathcal{U}_{V}$ and uniquely minimize that with respect to the query
        weight matrix $\mathcal{U}_{Q}$, as follows:
        \begin{align}
            \argmax_{\bm{P}}\mathcal{U}_{V}= \argmin_{\bm{P}}\mathcal{U}_{Q}= \mathcal{P}_{\text{one-hot}},
        \end{align}
        where
        \begin{align}
            \mathcal{P}_{\text{one-hot}}\coloneqq \{\bm{P}\mid \forall i,\;\exists k_{i}\; s.t. \;\bm{P}_{i,:}=\bm{e}^{(k_i)}\}
        \end{align}
        is the set of one-hot matrices.
    \end{proposition}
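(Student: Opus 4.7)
\textbf{Proof plan for Proposition \ref{proposition:attention_gradient}.}
The plan is to exploit the fact that $\bm{P}$ is a row-stochastic matrix (each row lies in the $n$-dimensional probability simplex, since it is the output of a row-wise softmax) and then optimize the only factor in each upper bound that actually depends on $\bm{P}$. Concretely, in the expression $\mathcal{U}_V = \sqrt{d_v}\,\|\bm{P}\|_F\|\bm{X}\|_F$ the variable factor is $\|\bm{P}\|_F$, and in $\mathcal{U}_Q$ the only factor depending on $\bm{P}$ is $\|\partial \bm{P}/\partial \bm{M}\|_F$, so we treat the two cases separately.

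For the $\mathcal{U}_V$ side, I would first decompose $\|\bm{P}\|_F^2 = \sum_{i=1}^{n} \|\bm{P}_{i,:}\|_2^2$, so that the maximization over the row-stochastic constraint decouples across rows. For each row, I would show that on the probability simplex $\{\bm{p} \in \mathbb{R}^n_{\geq 0} : \sum_j p_j = 1\}$ the quadratic $\sum_j p_j^2$ is uniquely maximized at the vertices $\bm{e}^{(k)}$. This follows from the pointwise bound $p_j^2 \leq p_j$ (because $p_j \in [0,1]$), giving $\sum_j p_j^2 \leq \sum_j p_j = 1$, with equality iff every $p_j \in \{0,1\}$, i.e.\ iff $\bm{p}$ is one-hot. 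Summing over rows gives $\argmax_{\bm{P}}\|\bm{P}\|_F = \mathcal{P}_{\text{one-hot}}$.

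For the $\mathcal{U}_Q$ side, I would use the standard block-diagonal form of the softmax Jacobian: since each row of $\bm{P}$ depends only on the corresponding row of $\bm{M}$, the Jacobian $\partial\bm{P}/\partial\bm{M}$ is block-diagonal with $n$ blocks, and the $i$-th block equals $\mathrm{diag}(\bm{P}_{i,:}) - \bm{P}_{i,:}^\top \bm{P}_{i,:}$. Therefore $\|\partial\bm{P}/\partial\bm{M}\|_F^2 = \sum_{i=1}^n \|\mathrm{diag}(\bm{P}_{i,:}) - \bm{P}_{i,:}^\top \bm{P}_{i,:}\|_F^2$, which again decouples row by row and is obviously nonnegative, so it is minimized at $0$. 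To identify the minimizers I would check that the $(j,j)$ entry of the $i$-th block equals $p_j - p_j^2 = p_j(1-p_j)$, and this vanishes for all $j$ iff each $p_j \in \{0,1\}$, i.e.\ iff $\bm{P}_{i,:}$ is one-hot (at which point the off-diagonal entries $-p_j p_k$ vanish automatically). Hence $\argmin_{\bm{P}}\mathcal{U}_Q = \mathcal{P}_{\text{one-hot}}$ as well.

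The main subtlety, rather than a genuine obstacle, is that $\bm{P}$ is literally the image of $\bm{M}$ under softmax and hence lies in the \emph{open} simplex, so one-hot matrices are attained only as a limit $\bm{M} \to \pm\infty$. I would address this by stating the optimization over the closed simplex $\Delta^{n-1}$ (which is the natural constraint set whose closure contains all softmax outputs) and noting that the extrema are attained on the boundary; in this sense the ``uniqueness'' claim is understood modulo this closure, and the two arguments above—each a sum of independent per-row convex/concave problems on the simplex—then give the desired characterization simultaneously.
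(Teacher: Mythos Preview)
Your proposal is correct and follows the same overall structure as the paper: isolate the $\bm{P}$-dependent factor in each bound, decouple the problem row by row over the simplex, and characterize the extremizers. The $\mathcal{U}_Q$ argument is essentially identical to the paper's (same block-diagonal softmax Jacobian, same reduction to the per-row Frobenius norm, same observation that the minimum value $0$ is attained exactly at one-hot rows).

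The only substantive difference is in the $\mathcal{U}_V$ part. The paper sets up a Lagrangian for maximizing $\sum_j P_{i,j}^2$ on the simplex, writes out the KKT conditions, parametrizes the candidate solutions by the number $k$ of nonzero entries, and then compares the objective values $1/k$ to conclude that $k=1$ is optimal. Your argument replaces all of this with the one-line bound $p_j^2 \le p_j$ (valid since $p_j \in [0,1]$), summed to give $\sum_j p_j^2 \le 1$ with equality iff each $p_j \in \{0,1\}$. This is strictly more elementary and shorter; the paper's KKT route buys nothing extra here, since the problem is a simple convex maximization over a polytope whose vertices are exactly the one-hot vectors. Your remark about the open-versus-closed simplex is also a nice clarification that the paper leaves implicit.
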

    The proof of the proposition is provided in~\cref{proof:attention_gradient}.
    The statement about the query weight matrix also applies to the key weight
    matrix due to their analogous gradients. The proposition demonstrates that the
    gradients of the value and query/key weight matrices exhibit opposing behaviors
    with respect to one-hot attention matrices: the gradient of the value weight
    matrix is maximized, while those of the query/key weight matrices are minimized.

    Previous studies~\citep{noci2022signal,wang2021escapinggradientvanishingperiodic}
    observed that the gradient of the value weight matrix is typically larger
    than those of the query/key weight matrices, consistent with our experimental
    findings in~\cref{experiment:gradient_heterogeneity}. Together with~\cref{proposition:attention_gradient},
    these results suggest that attention matrices close to one-hot amplify gradient
    heterogeneity in the self-attention mechanism.
    \if0
    \paragraph{Remark: limitations of the analysis.}
    \Cref{proposition:attention_gradient} focuses solely on attention matrices. Other
    terms in~Eq.\eqref{eq:gradient_value,eq:gradient_query} may also influence gradient
    heterogeneity, which is not captured in this analysis.
    \fi
    \subsection{Uniformity of the attention matrix}
    \label{sec:attention_matrix} In~\cref{fig:attention_heatmap}, we compare the
    attention matrices of pre-trained RoBERTa and ViT. The attention matrix of ViT
    is more uniform than that of RoBERTa, reflecting the differences between NLP
    and vision tasks. In NLP, the use of special tokens and stronger
    interrelations between input tokens lead to less uniform attention, with only
    a few tokens receiving attention~\citep{clark2019does}. Conversely, vision
    tasks, which prioritize holistic information~\citep{torralba2003contextual,rabinovich2007objects,shotton2009textonboost},
    produce more uniform attention matrices, where all tokens are attended to. This
    observation aligns with~\citet{Hyeon-Woo_2023_ICCV}, who also reported uniform
    attention matrices in ViT. Notably, more uniform attention matrices are farther
    from one-hot matrices, indicating reduced dominance by individual tokens.

    Combined with the analysis in~\cref{sec:gradient_heterogeneity_attention}, which
    shows that attention matrices closer to one-hot matrices amplify gradient heterogeneity,
    this suggests that gradient heterogeneity in the self-attention mechanism is
    more pronounced in NLP tasks than in vision tasks.
    \subsection{Proof of Proposition~\ref{proposition:attention_gradient}}
    \label{proof:attention_gradient}
    \begin{proof}[Proof of $\mathcal{U}_{V}$]
        As defined in~Eq.\eqref{eq:gradient_value}, the upper bound of the gradient
        is given by:
        \begin{align}
            \mathcal{U}_{V}= \sqrt{d_{v}}\|\bm{P}\|_{F}\|\bm{X}\|_{F}.
        \end{align}
        We observe that:
        \begin{align}
            \argmax_{\bm{P}}\mathcal{U}_{V} & = \argmax_{\bm{P}}\|\bm{P}\|_{F}                          \\
                                            & = \argmax_{\bm{P}}\|\bm{P}\|_{F}^{2}                      \\
                                            & = \argmax_{\bm{P}}\sum_{i=1}^{n}\|\bm{P}_{i,:}\|_{2}^{2}.
        \end{align}
        Since the rows of the attention matrix are independent, we focus on the $i$-th
        row. The $i$-th row of the attention matrix satisfies the following
        constraints:
        \begin{align}
            1\leq j\leq n,\quad P_{i,j}\geq 0, \quad \sum_{j=1}^{n}P_{i,j}=1.
        \end{align}
        We define the Lagrangian function as:
        \begin{align}
            \mathcal{L}_{V}=-\sum_{j=1}^{n}P_{i,j}^{2}-\sum_{j=1}^{n}\mu_{j}P_{i,j}+\lambda(\sum_{j=1}^{n}P_{i,j}-1),
        \end{align}
        where $\lambda$ and $\mu_{j}$ are the Lagrange multipliers. To minimize the
        Lagrangian function, the solution must satisfy the following KKT
        conditions:
        \begin{align}
            \frac{\partial \mathcal{L}_{V}}{\partial P_{i,j}}= -2P_{i,j}-\mu_{j}+\lambda=0, & \quad 1\leq j\leq n, \label{eq:lagrange_v1} \\
            \sum_{j=1}^{n}P_{i,j}-1 =0,                                                     & \label{eq:lagrange_v2}                      \\
            P_{i,j}\geq 0,                                                                  & \quad 1\leq j\leq n,\label{eq:lagrange_v3}  \\
            \mu_{j}\geq 0,                                                                  & \quad 1\leq j\leq n,\label{eq:lagrange_v4}  \\
            \mu_{j}P_{i,j}= 0,                                                              & \quad 1\leq j\leq n .\label{eq:lagrange_v5}
        \end{align}
        From \cref{eq:lagrange_v2,eq:lagrange_v3}, it follows that $P_{i,j}> 0$
        for some $j$. Let $k\;(1\leq k\leq n)$ denote the number of non-zero elements
        in $\bm{P}_{i,:}$, and suppose $P_{i, j_{l}}> 0$ for $1\leq l\leq k$. From
        \cref{eq:lagrange_v5}, we have $\mu_{j_{l}}=0$, and thus, from
        \cref{eq:lagrange_v1}, we deduce that $P_{i,j_{l}}=\frac{\lambda}{2}$ for
        $1 \leq l\leq k$. Using \cref{eq:lagrange_v2}, we get $\sum_{l=1}^{k}\frac{\lambda}{2}
        =1$, which gives $\lambda=2/k$. For $j\notin \{j_{l}\mid 1\leq l\leq k\}$,
        we have $P_{i,j}=0$ and $\mu_{j}=\lambda=2/k$, satisfying~Eq.\eqref{eq:lagrange_v4}.

        With $k$ non-zero elements of $\bm{P}_{i,:}$, the value of the
        Lagrangian function becomes
        $-\sum_{j=1}^{n}P_{i,j}^{2}= -\sum_{l=1}^{k}(\frac{\lambda}{2})^{2}= -\frac{\lambda^{2}}{4}
        k = -\frac{1}{k}$. The minimum value of the Lagrangian function is achieved
        if and only if $k=1$, which implies $\bm{P}_{i,:}=\bm{e}^{(k_{i})}$ for
        some $k_{i}$. Therefore, we conclude:
        \begin{align}
            \argmax_{\bm{P}}\mathcal{U}_{V}= \{\bm{P}\mid \forall i,\;\exists k_{i}\; s.t. \;\bm{P}_{i,:}=\bm{e}^{(k_i)}\}.
        \end{align}
    \end{proof}
    \begin{proof}[Proof of $\mathcal{U}_{Q}$]
        As defined in~Eq.\eqref{eq:gradient_query}, the upper bound of the gradient
        is given by:
        \begin{align}
            \mathcal{U}_{Q}= \sqrt{n}\|\bm{W}_{V}\bm{X}^{\top}\|_{F}\|\frac{\partial \bm{P}}{\partial \bm{M}}\|_{F}\frac{\|\bm{X}\|_{F}\|\bm{X}\bm{W}_{K}\|_{F}}{\sqrt{d_{k}}}.
        \end{align}
        The partial derivative is expressed as:
        \begin{align}
            \frac{\partial \bm{P}}{\partial \bm{M}} & = \frac{\partial \softmax(\bm{M})}{\partial \bm{M}}                                                          \\
                                                    & = \operatorname{blockdiag}(\{\frac{\partial \softmax(\bm{M}_{i,:})}{\partial \bm{M}_{i,:}}\}_{i=1}^{n})      \\
                                                    & = \operatorname{blockdiag}(\{\operatorname{diag}(\bm{P}_{i,:})-\bm{P}_{i,:}\bm{P}_{i,:}^{\top}\}_{i=1}^{n}).
        \end{align}
        Considering the attention matrix $\bm{P}$, we obtain:
        \begin{align}
            \argmin_{\bm{P}}\mathcal{U}_{Q} & = \argmin_{\bm{P}}\|\frac{\partial \bm{P}}{\partial \bm{M}}\|_{F}                                              \\
                                            & = \argmin_{\bm{P}}\sum_{i=1}^{n}\|\operatorname{diag}(\bm{P}_{i,:})-\bm{P}_{i,:}\bm{P}_{i,:}^{\top}\|_{F}^{2}.
        \end{align}
        As in the proof of $\mathcal{U}_{V}$, we focus on the value of the $i$-th
        row:
        \begin{align}
            \|\operatorname{diag}(\bm{P}_{i,:})-\bm{P}_{i,:}\bm{P}_{i,:}^{\top}\|_{F}^{2} & = \sum_{j=1}^{n}(P_{i,j}-P_{i,j}^{2})^{2}+ \sum_{j\neq l}P_{i,j}^{2}P_{i,l}^{2},
        \end{align}
        subject to the constraints
        $1\leq j \leq n,\quad P_{i,j}\geq 0, \quad \sum_{j=1}^{n}P_{i,j}=1$. Since
        both the first term and the second term are non-negative, the minimum
        value is attained if and only if both terms are $0$. This condition is satisfied
        if $\bm{P}_{i,:}$ is a one-hot vector. Conversely, if $\bm{P}_{i,:}$ is not
        a one-hot vector, the second term becomes positive, and the minimum value
        cannot be attained. Thus, we have shown that the minimum value of the objective
        function is achieved if and only if $\bm{P}_{i,:}$ is a one-hot vector.
        Therefore:
        \begin{align}
            \argmin_{\bm{P}}\mathcal{U}_{Q} & = \{\bm{P}\mid \forall i,\;\exists k_{i}\; s.t. \;\bm{P}_{i,:}=\bm{e}^{(k_i)}\}.
        \end{align}
    \end{proof}
    \clearpage
    \subsection{Experimental results}
    \paragraph{Heatmap of attention matrices.}
    In~\cref{fig:attention_heatmap}, we show the attention matrices computed from
    pre-trained models. These matrices are calculated for a randomly sampled
    sequ ence from the training data and are averaged across all heads.
    \begin{figure}[H]
        \centering
        \begin{minipage}{0.24\columnwidth}
            \centering
            \includegraphics[width=0.9\columnwidth]{
                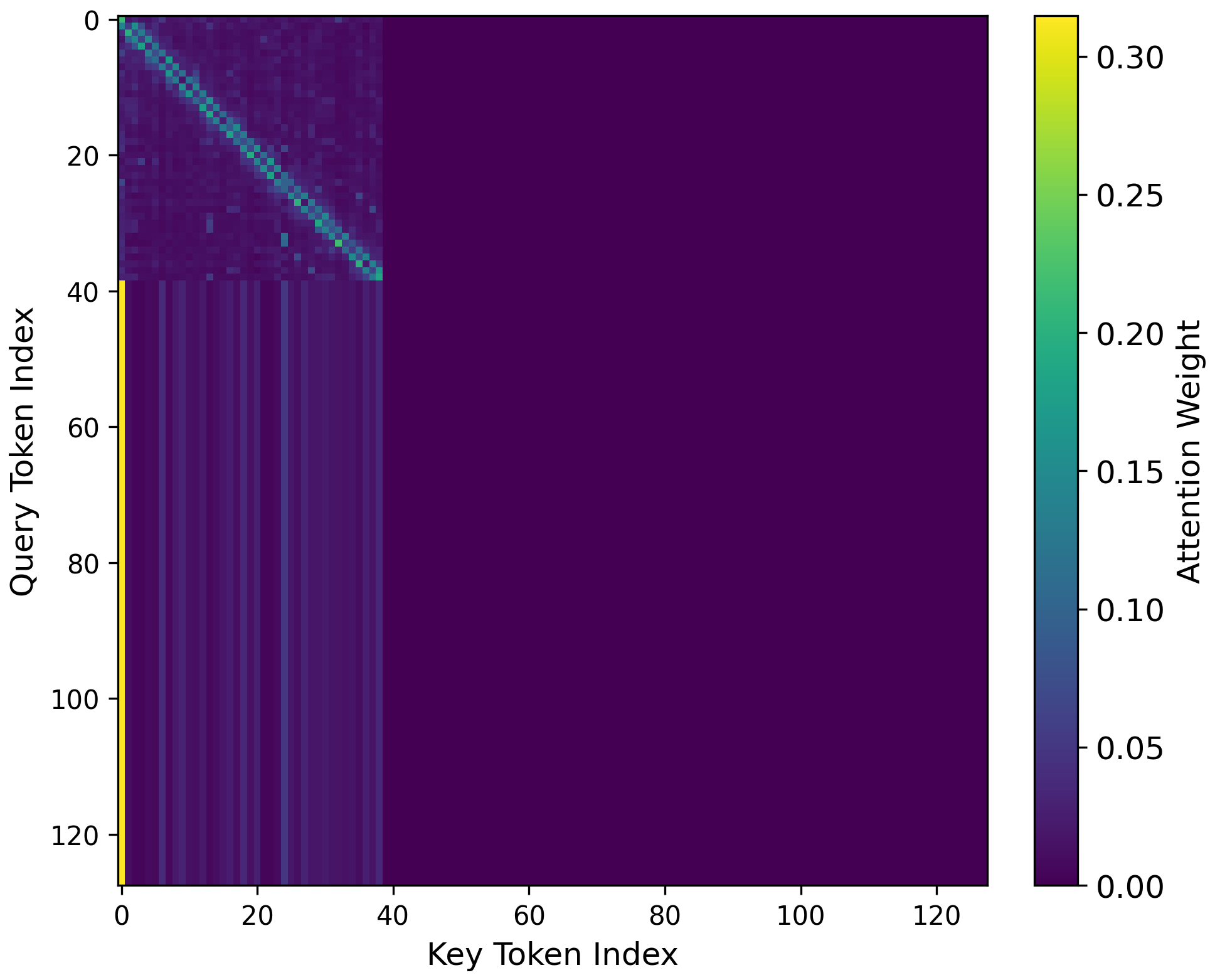
            }
            \subcaption{RoBERTa on CB (Layer 1)}
        \end{minipage}
        \begin{minipage}{0.24\columnwidth}
            \centering
            \includegraphics[width=0.9\columnwidth]{
                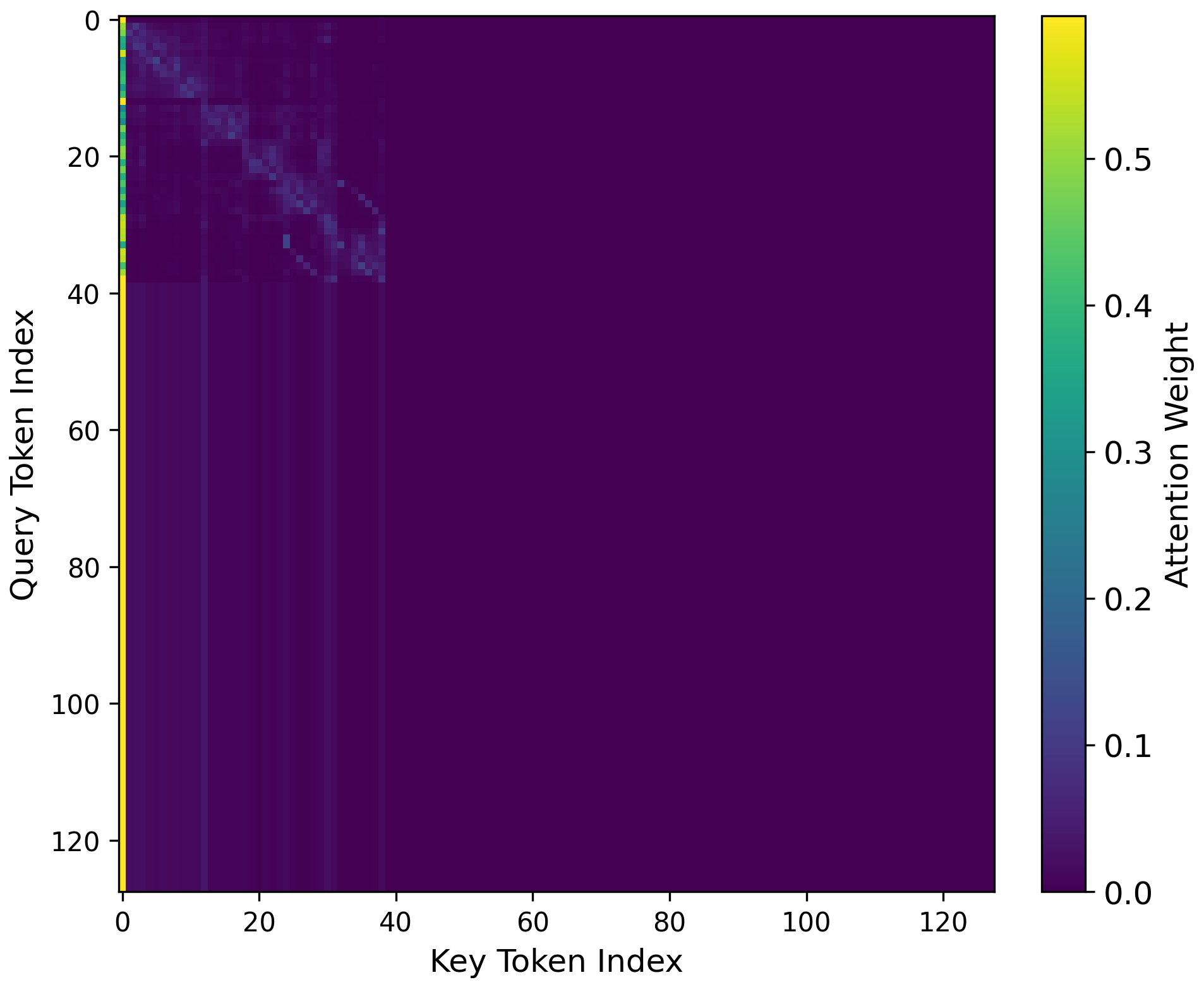
            }
            \subcaption{RoBERTa on CB (Layer 4)}
        \end{minipage}
        \begin{minipage}{0.24\columnwidth}
            \centering
            \includegraphics[width=0.9\columnwidth]{
                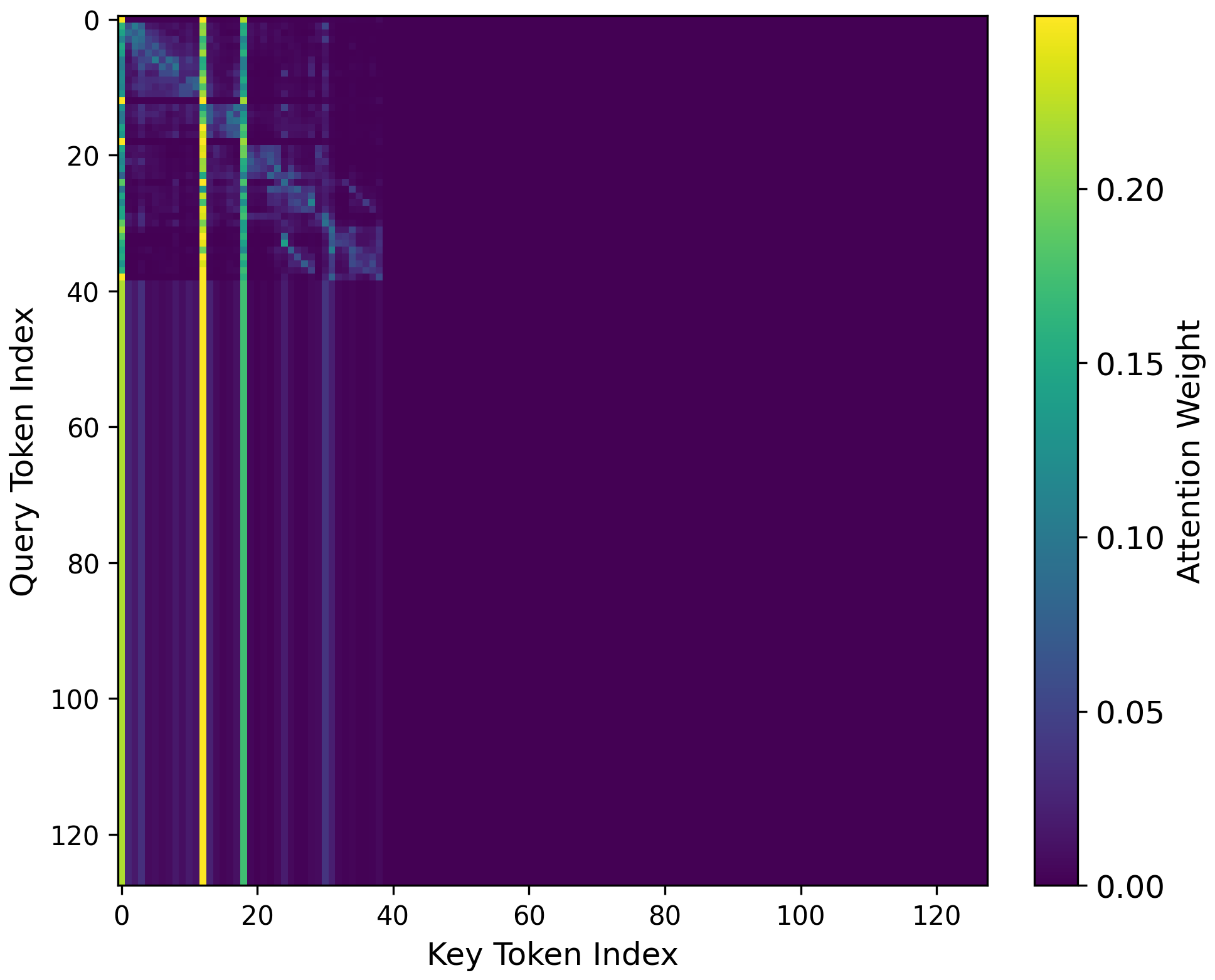
            }
            \subcaption{RoBERTa on CB (Layer 7)}
        \end{minipage}
        \begin{minipage}{0.24\columnwidth}
            \centering
            \includegraphics[width=0.9\columnwidth]{
                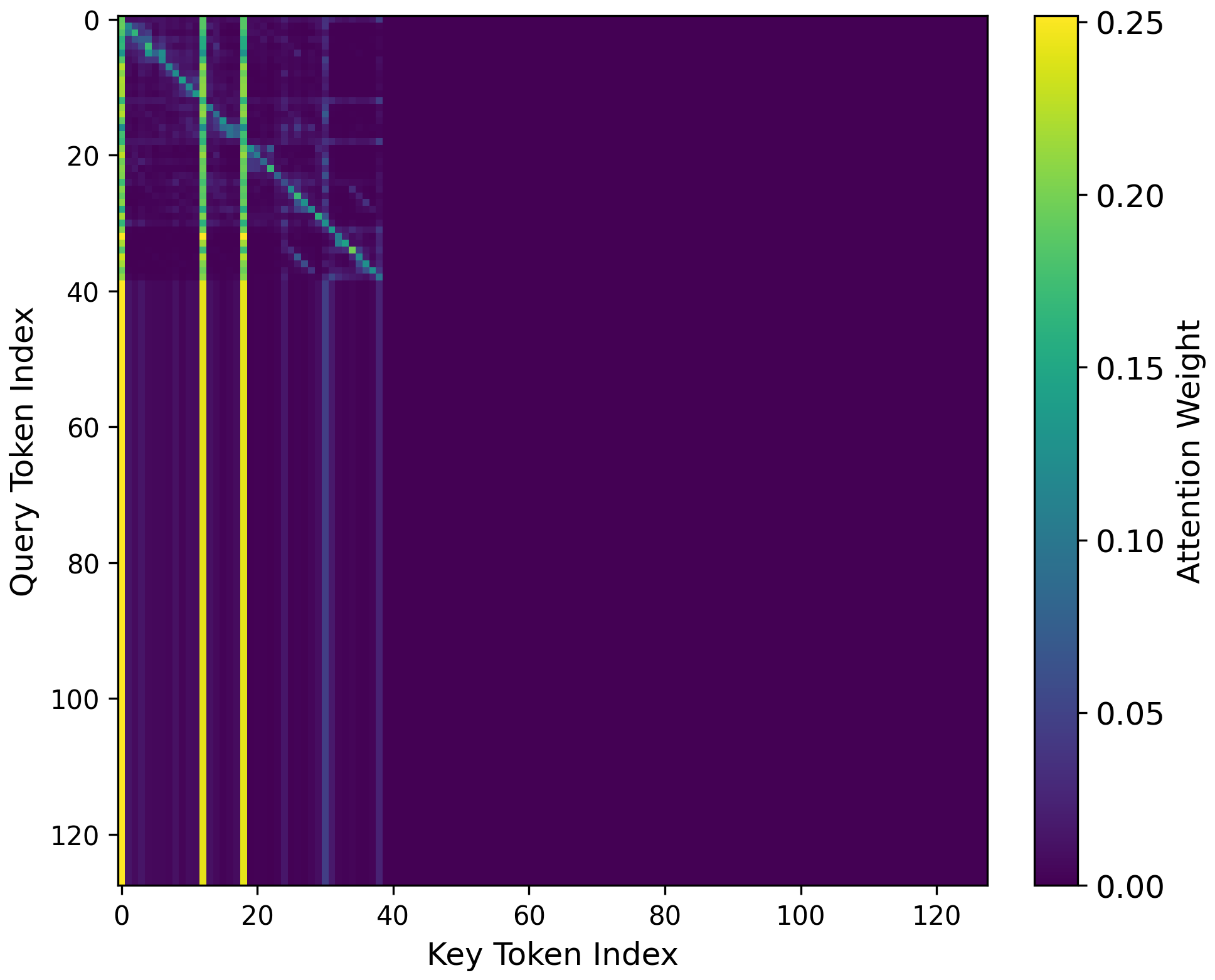
            }
            \subcaption{RoBERTa on CB (Layer 12)}
        \end{minipage}
        \begin{minipage}{0.24\columnwidth}
            \centering
            \includegraphics[width=0.9\columnwidth]{
                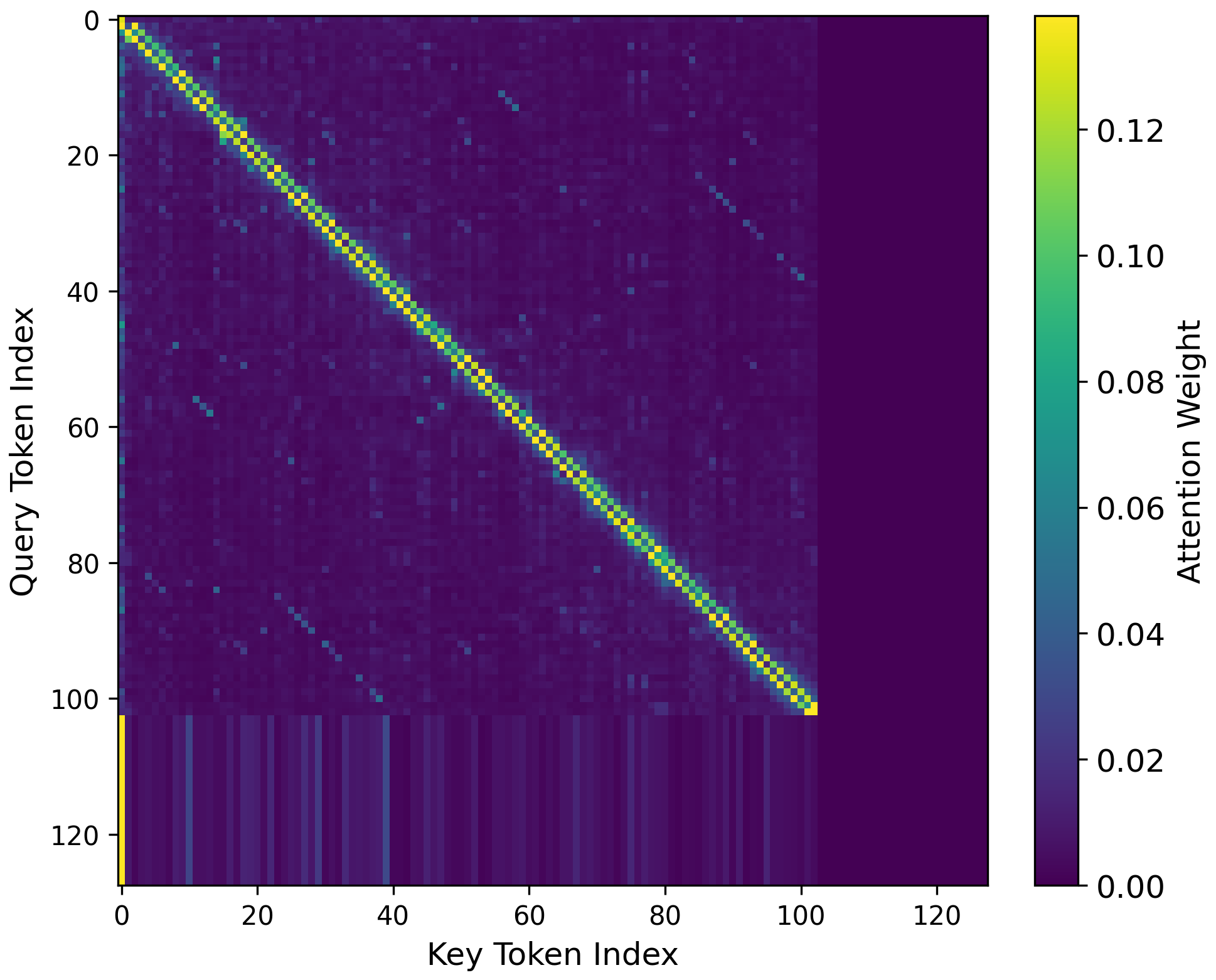
            }
            \subcaption{RoBERTa on RTE (Layer 1)}
        \end{minipage}
        \begin{minipage}{0.24\columnwidth}
            \centering
            \includegraphics[width=0.9\columnwidth]{
                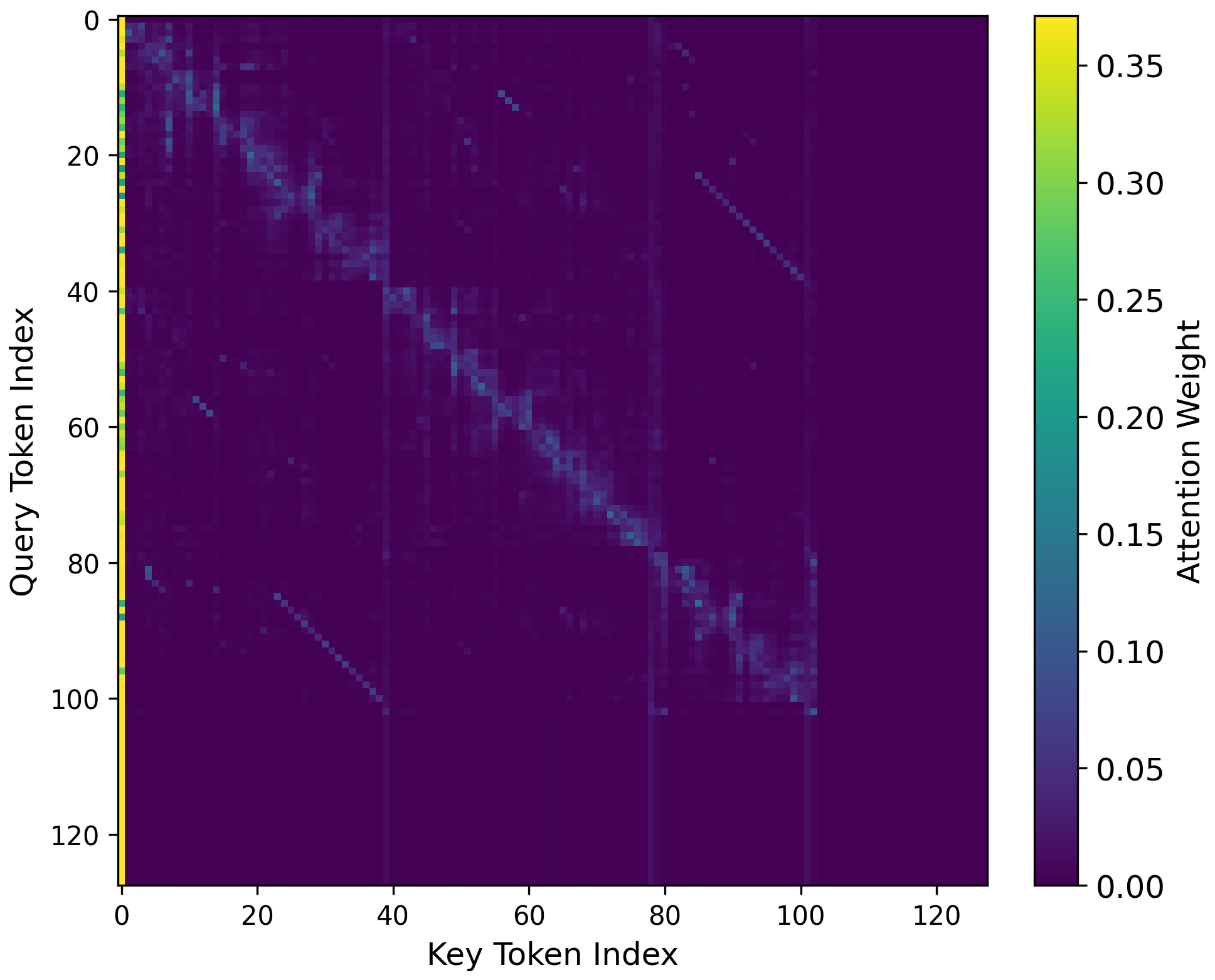
            }
            \subcaption{RoBERTa on RTE (Layer 4)}
        \end{minipage}
        \begin{minipage}{0.24\columnwidth}
            \centering
            \includegraphics[width=0.9\columnwidth]{
                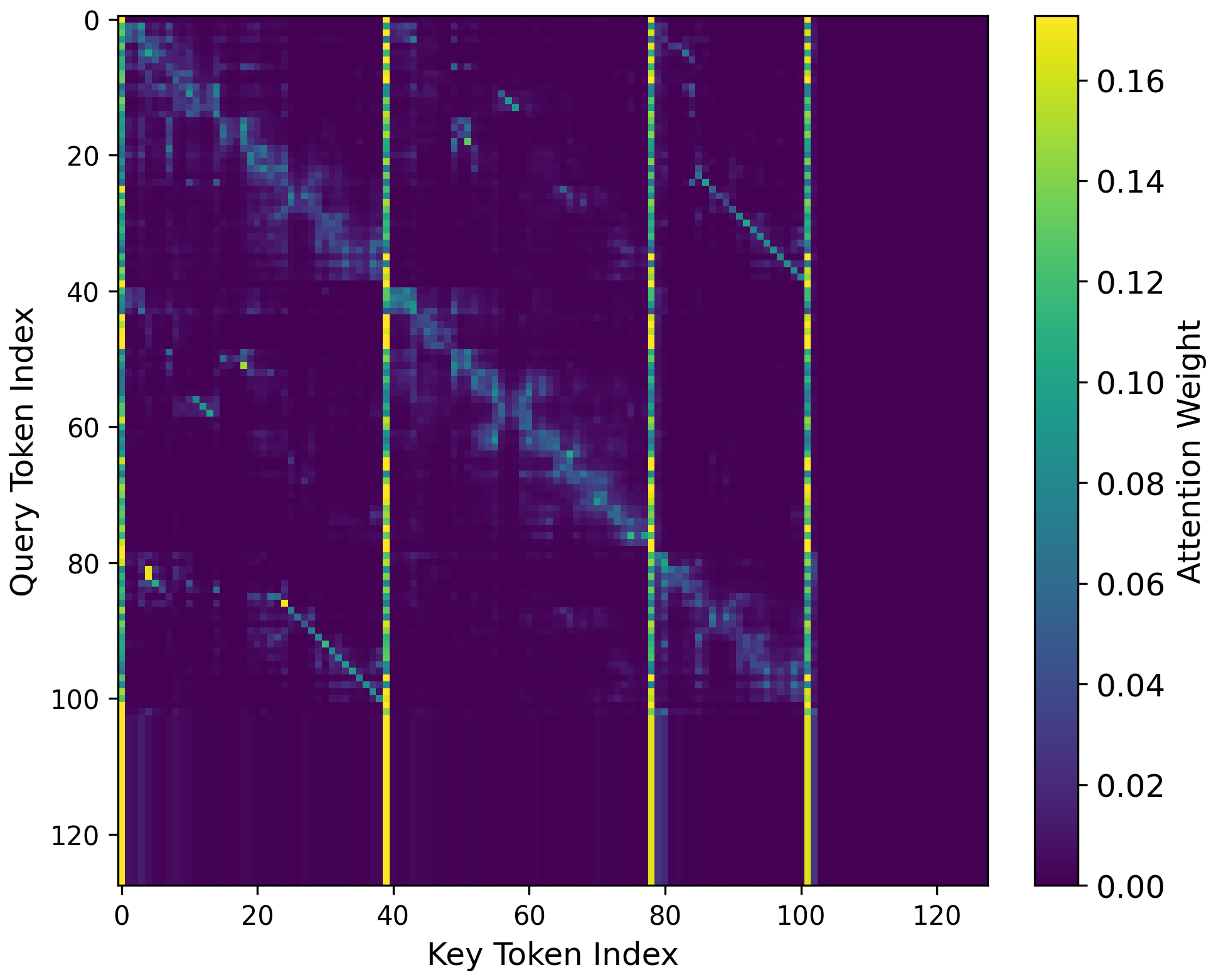
            }
            \subcaption{RoBERTa on RTE (Layer 7)}
        \end{minipage}
        \begin{minipage}{0.24\columnwidth}
            \centering
            \includegraphics[width=0.9\columnwidth]{
                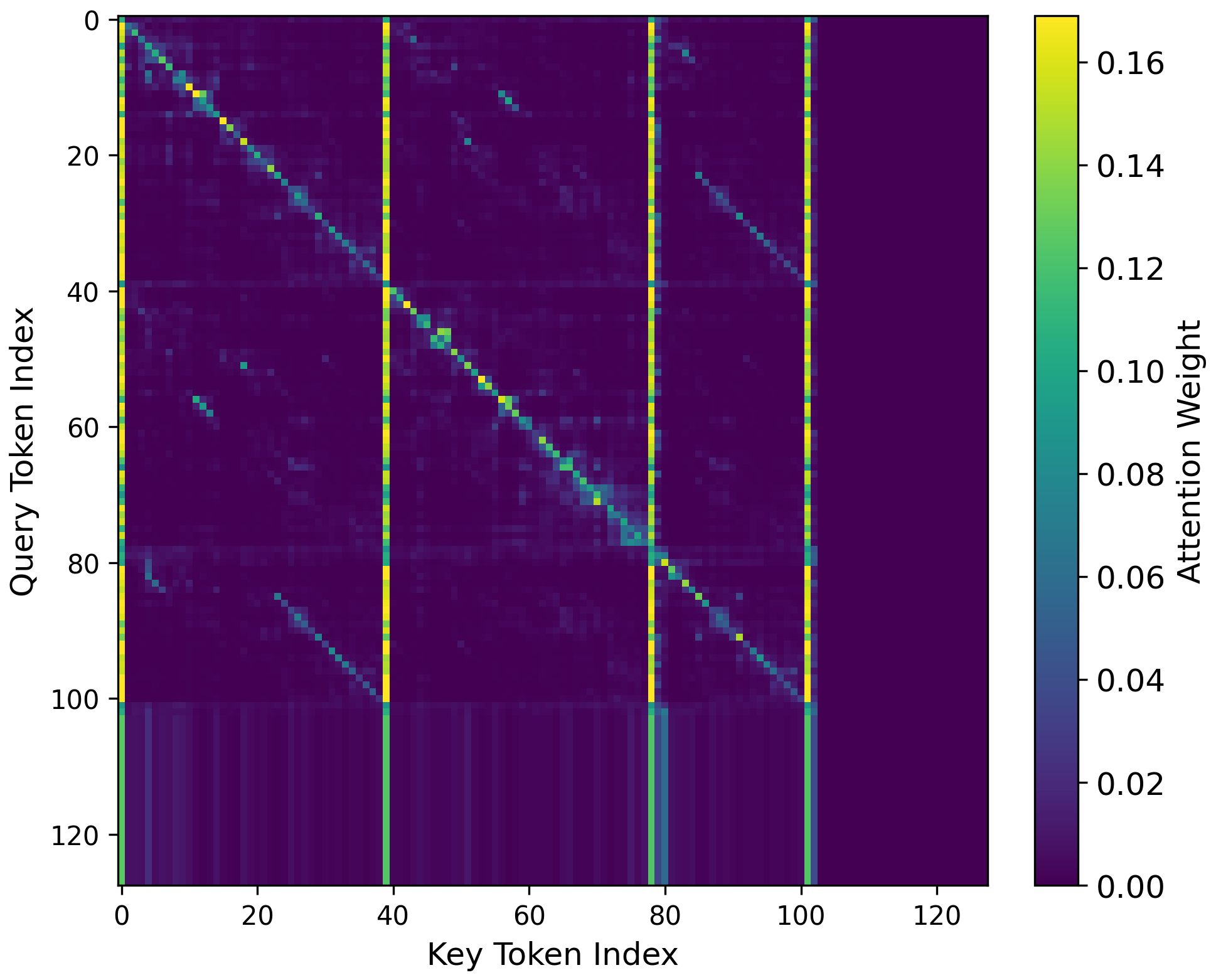
            }
            \subcaption{RoBERTa on RTE (Layer 12)}
        \end{minipage}
        \begin{minipage}{0.24\columnwidth}
            \centering
            \includegraphics[width=0.9\columnwidth]{
                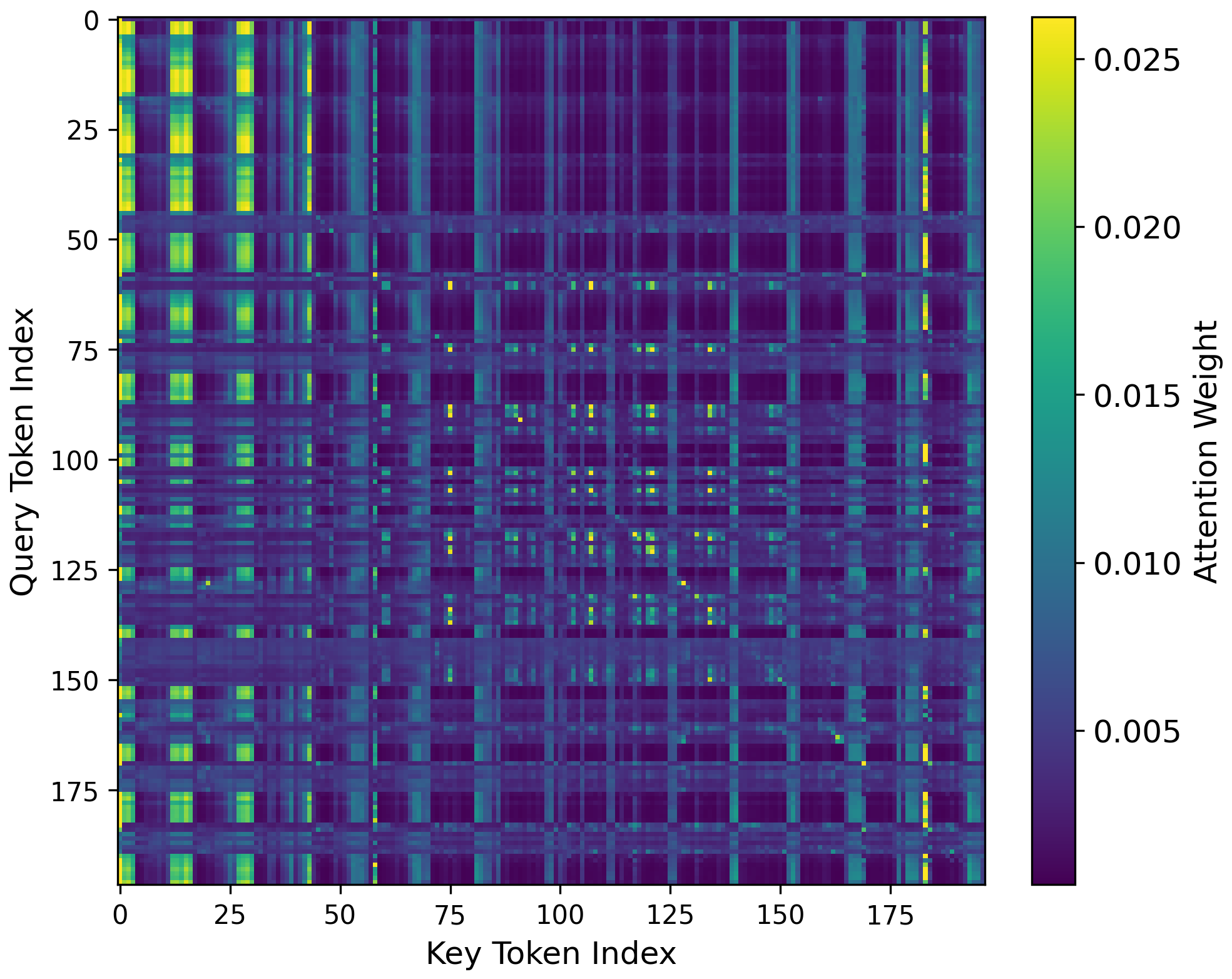
            }
            \subcaption{ViT on Flowers102 (Layer 1)}
        \end{minipage}
        \begin{minipage}{0.24\columnwidth}
            \centering
            \includegraphics[width=0.9\columnwidth]{
                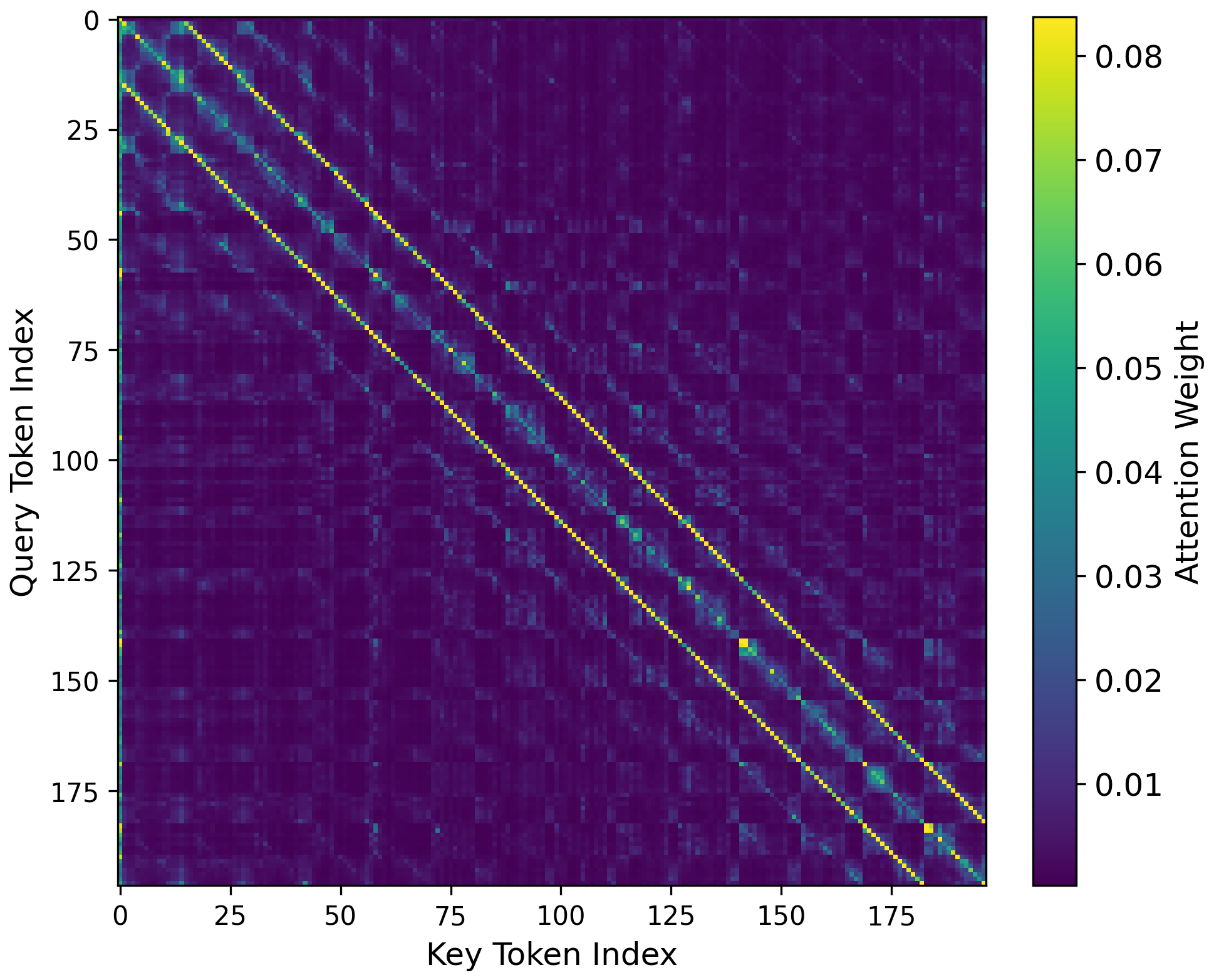
            }
            \subcaption{ViT on Flowers102 (Layer 4)}
        \end{minipage}
        \begin{minipage}{0.24\columnwidth}
            \centering
            \includegraphics[width=0.9\columnwidth]{
                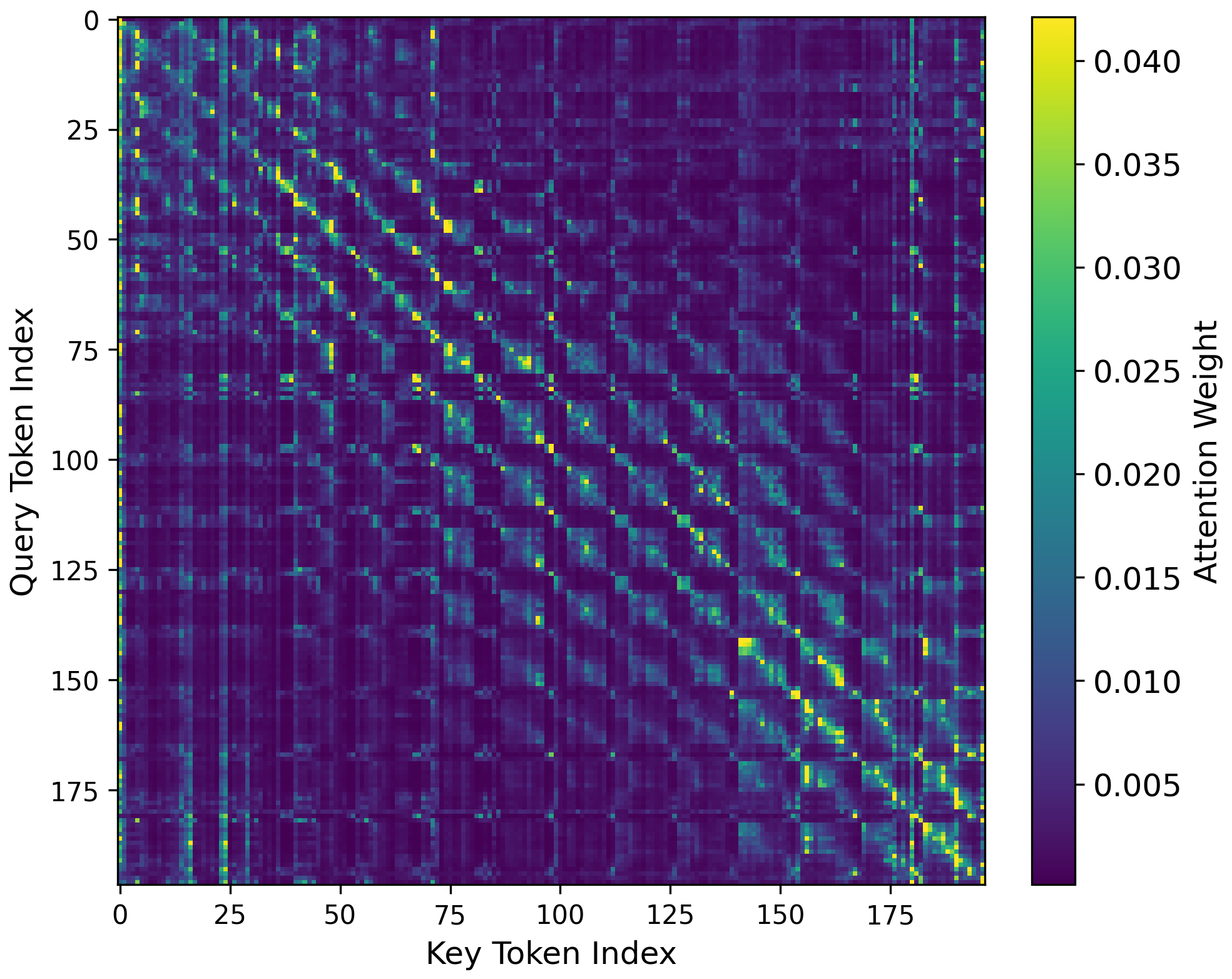
            }
            \subcaption{ViT on Flowers102 (Layer 7)}
        \end{minipage}
        \begin{minipage}{0.24\columnwidth}
            \centering
            \includegraphics[width=0.9\columnwidth]{
                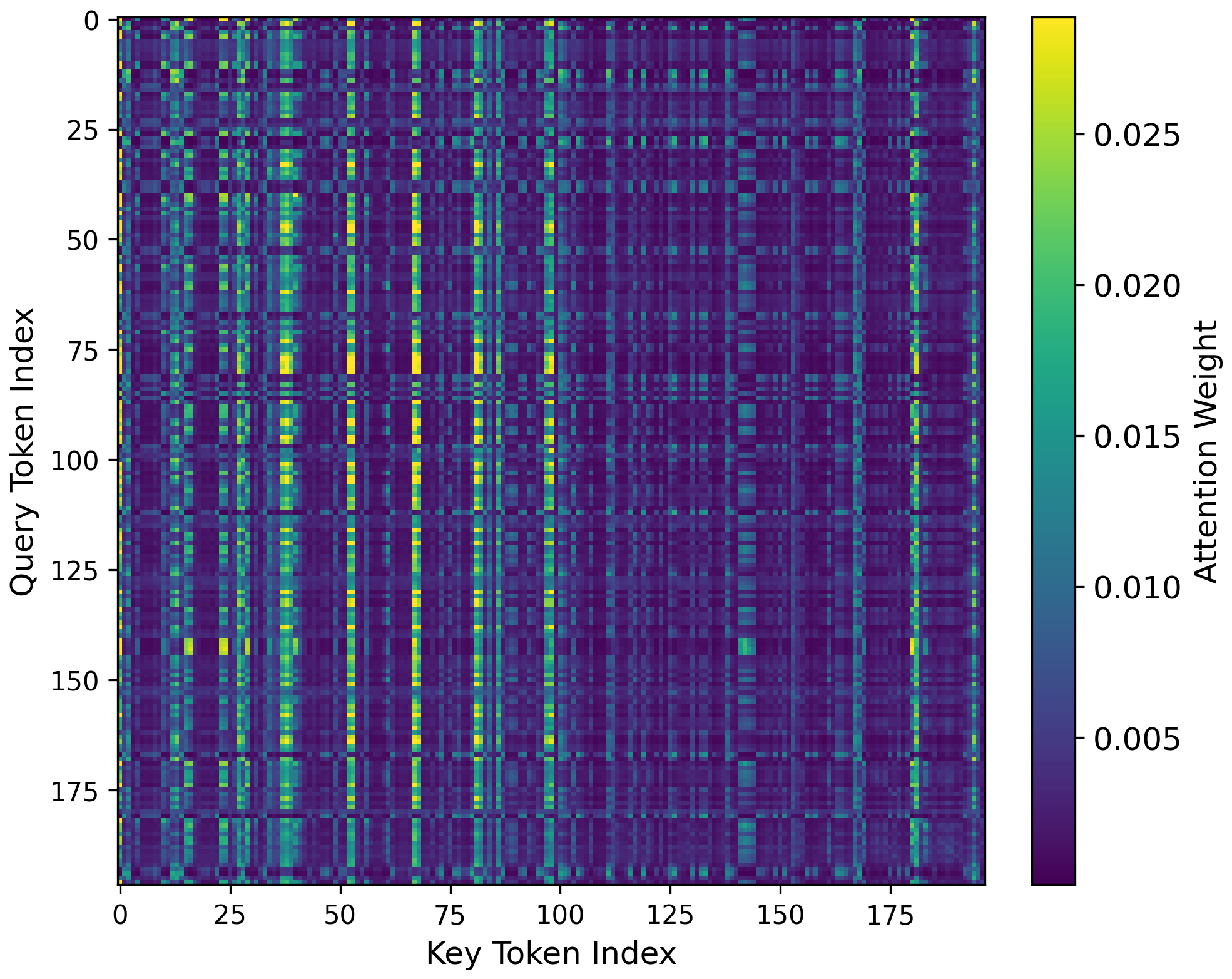
            }
            \subcaption{ViT on Flowers102 (Layer 12)}
        \end{minipage}
        \begin{minipage}{0.24\columnwidth}
            \centering
            \includegraphics[width=0.9\columnwidth]{
                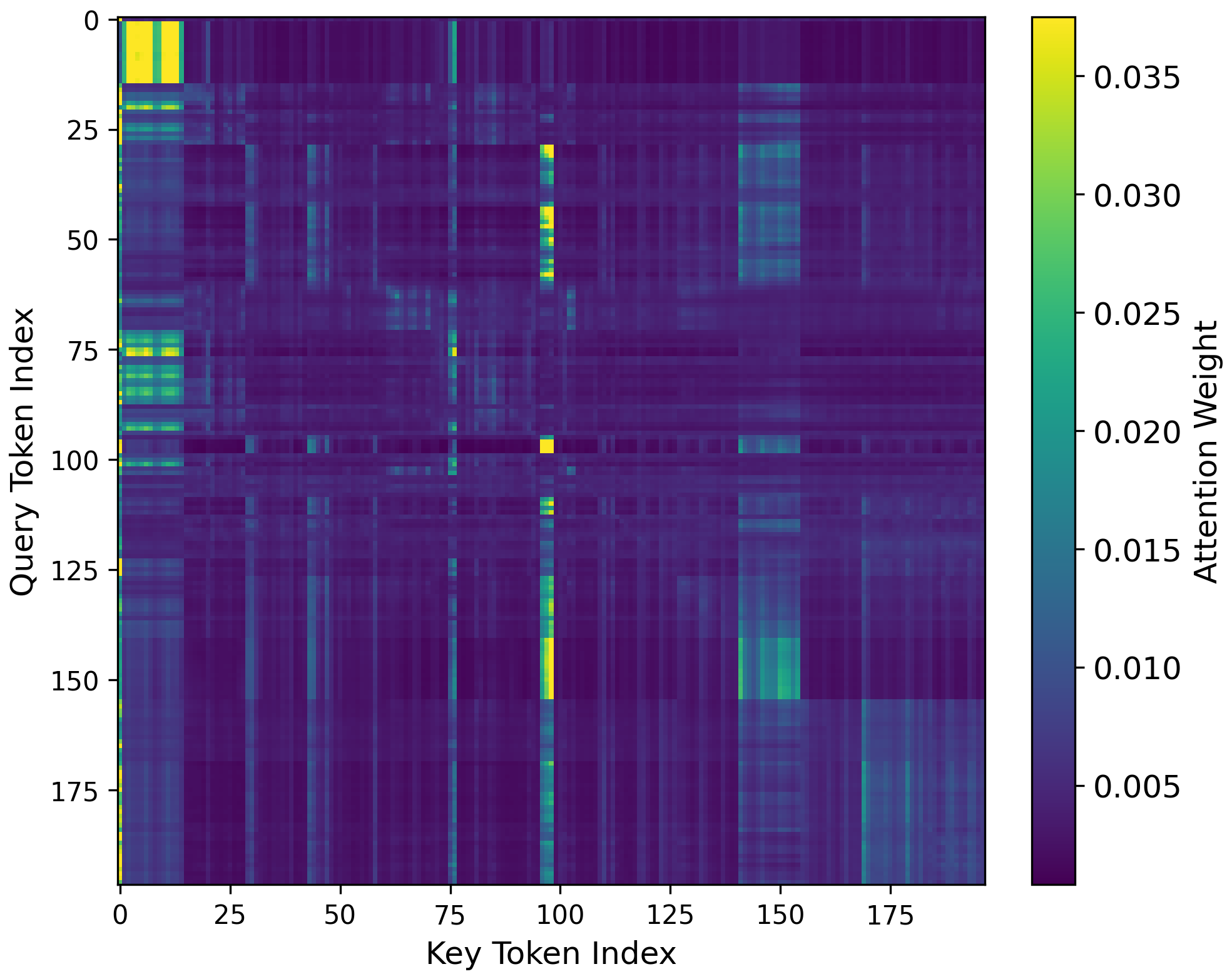
            }
            \subcaption{ViT on Aircraft (Layer 1)}
        \end{minipage}
        \begin{minipage}{0.24\columnwidth}
            \centering
            \includegraphics[width=0.9\columnwidth]{
                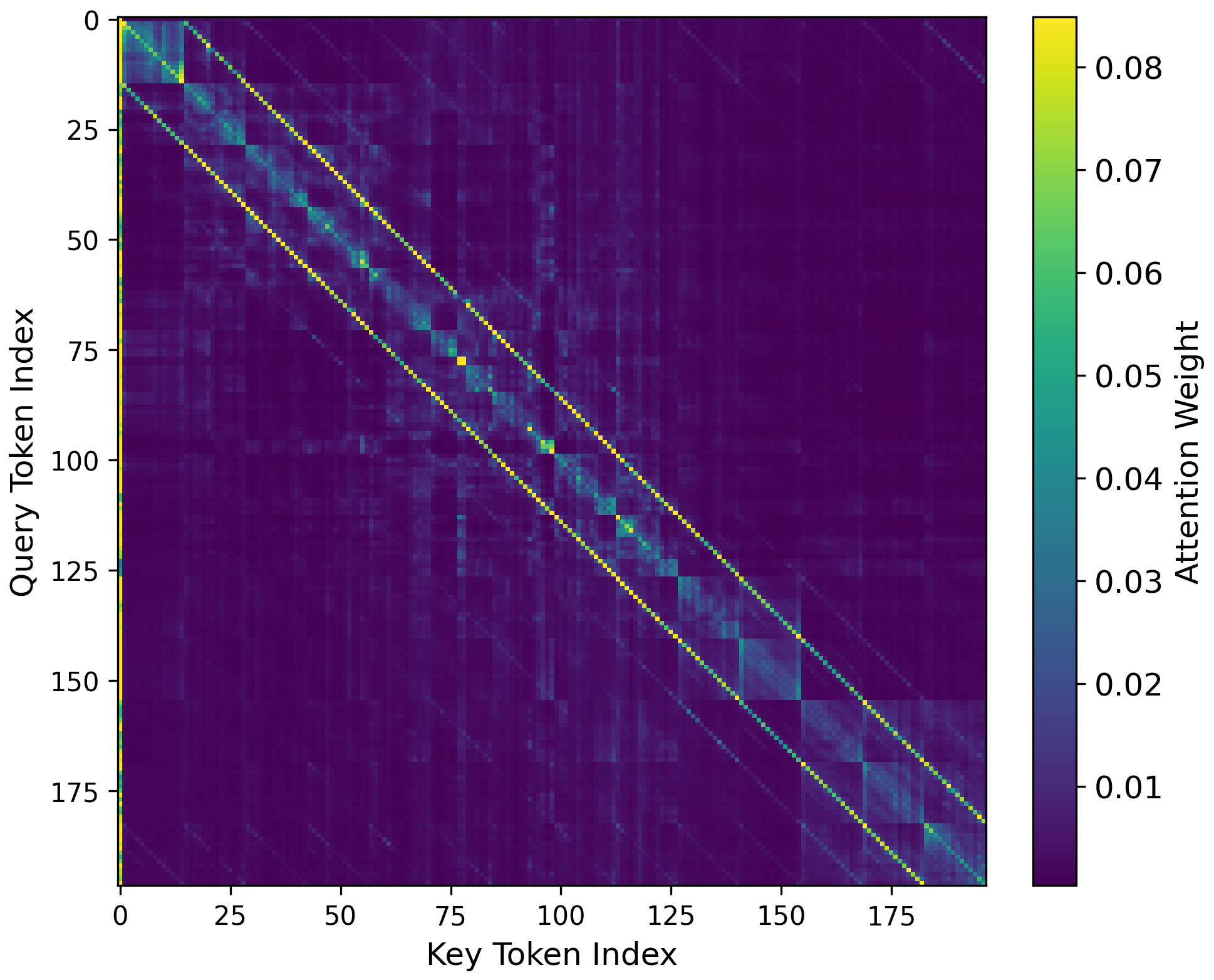
            }
            \subcaption{ViT on Aircraft (Layer 4)}
        \end{minipage}
        \begin{minipage}{0.24\columnwidth}
            \centering
            \includegraphics[width=0.9\columnwidth]{
                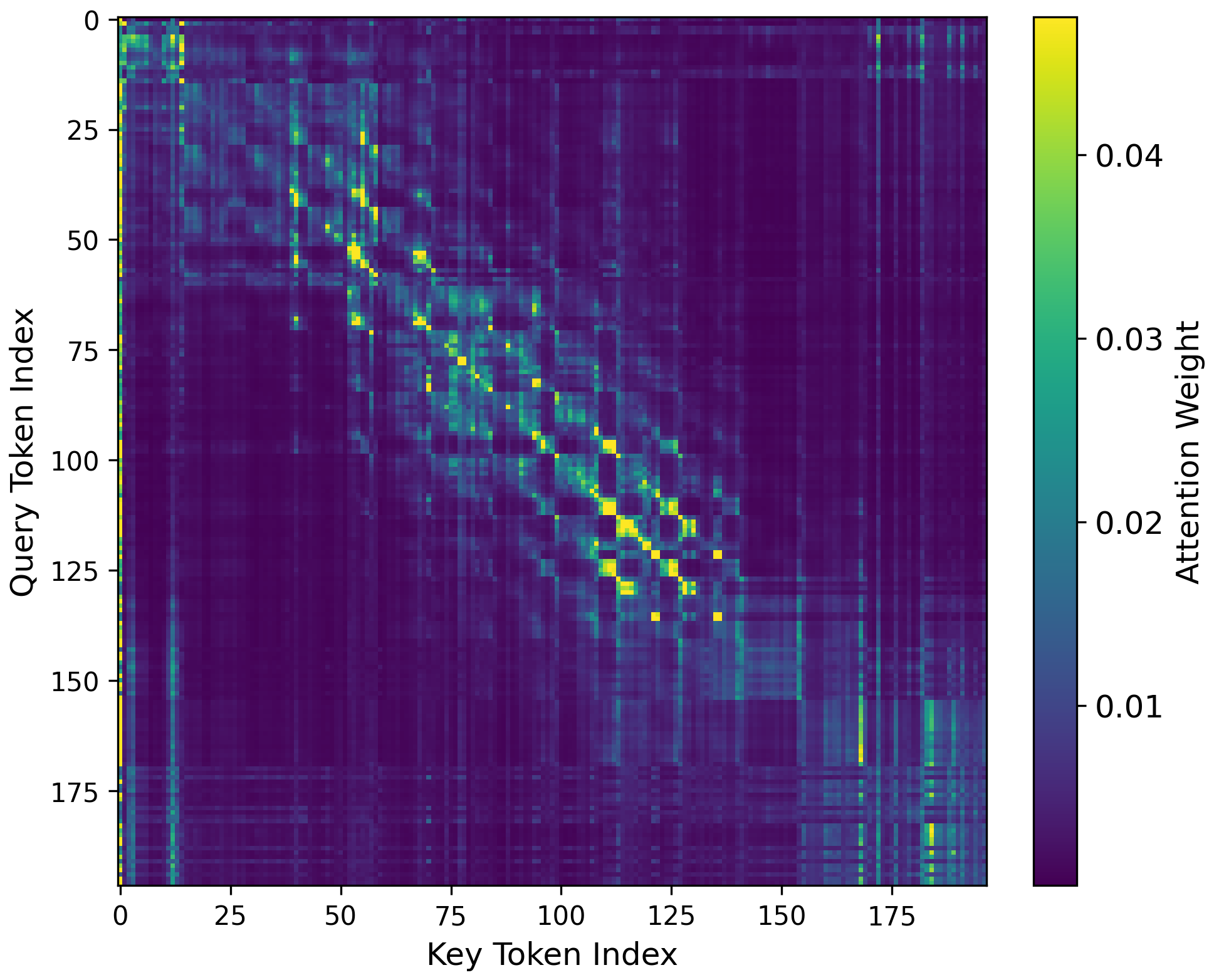
            }
            \subcaption{ViT on Aircraft (Layer 7)}
        \end{minipage}
        \begin{minipage}{0.24\columnwidth}
            \centering
            \includegraphics[width=0.9\columnwidth]{
                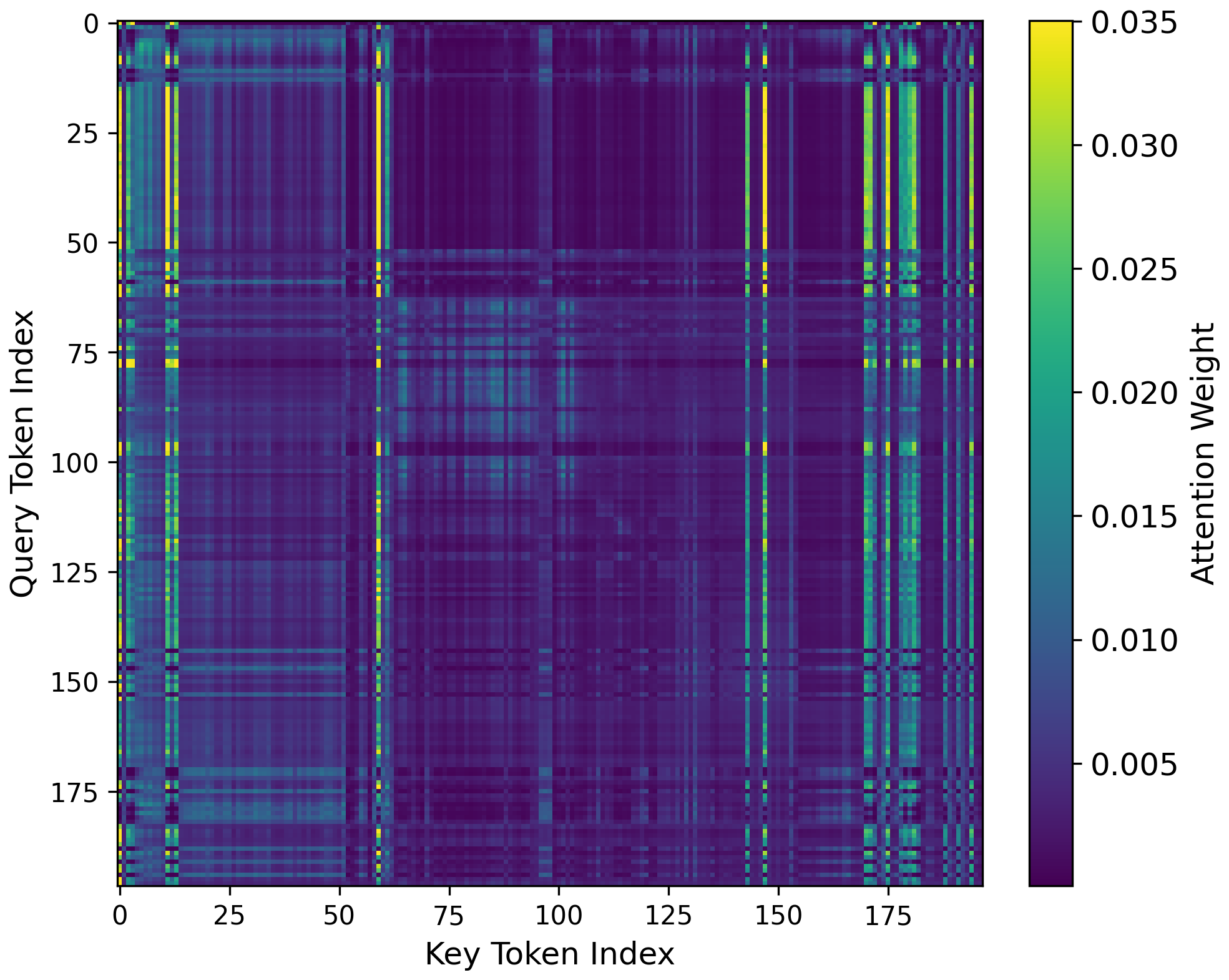
            }
            \subcaption{ViT on Aircraft (Layer 12)}
        \end{minipage}
        \caption{Attention matrices of the pre-trained RoBERTa and ViT.}
        \label{fig:attention_heatmap}
    \end{figure}
    \clearpage
    \paragraph{Gradient and entropy of attention matrices.}
    In~\cref{fig:attention_gradient} (a) and (c), we show the ratio of the mean entropy
    relative to the maximum entropy of the attention matrix for each layer of the
    Transformer model. Error bars indicate the standard deviation. Specifically,
    we plot:
    \begin{align}
        \frac{1}{HNS}\sum_{h=1}^{H}\sum_{i=1}^{N}\sum_{s=1}^{S}\left(\sum_{j=1}^{S}A^{(i,h,l)}_{s,j}\log(A^{(i,h,l)}_{s,j}) / \log(S)\right),
    \end{align}
    for each layer $l$, where $H$ is the number of heads, $S$ is the sequence length,
    and $\bm{A}^{(i,h,l)}\in \mathbb{R}^{S \times S}$ is the attention matrix of
    the $h$-th head in the $l$-th layer for sample $\bm{x}^{(i)}$.

    In~\cref{fig:attention_gradient} (b) and (d), we show the ratio of the mean gradient
    norm relative to the sum of the gradient norms of the attention matrix for each
    layer. Specifically, we plot:
    \begin{align}
        \frac{G_{p}^{(l)}}{G_{Q}^{(l)}+ G_{K}^{(l)}+ G_{V}^{(l)}},
    \end{align}
    for each layer $l$ and $p \in \{Q, K, V\}$, where $G_{Q}^{(l)}$,
    $G_{K}^{(l)}$, and $G_{V}^{(l)}$ are the full-batch gradient norms of the query,
    key, and value weight matrices in the $l$-th layer of the Transformer model,
    respectively.

    The results show that the entropy of the attention matrix is higher in
    RoBERTa than in ViT, and the gradient norm of the attention matrix is more heterogeneous
    in RoBERTa than in ViT. This observation is consistent with the theoretical
    analysis in~\cref{sec:attention_matrix}.
    \begin{figure}[htbp]
        \begin{minipage}{0.49\columnwidth}
            \centering
            \includegraphics[width=0.70\columnwidth]{
                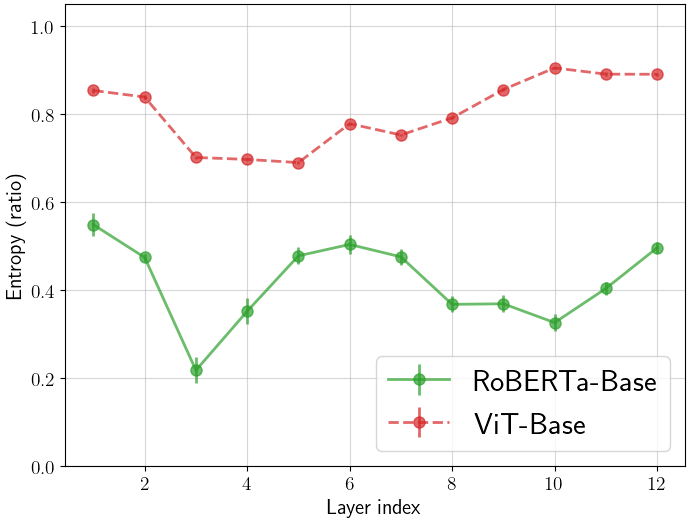
            }
            \subcaption{Entropy ratio (RTE and Flowers102)}
        \end{minipage}
        \begin{minipage}{0.49\columnwidth}
            \centering
            \includegraphics[width=0.70\columnwidth]{
                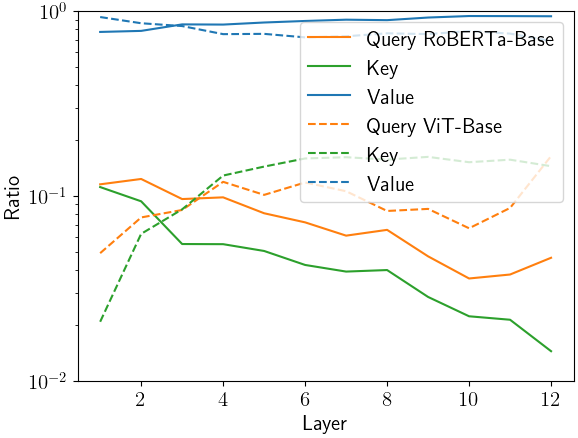
            }
            \subcaption{Gradient norm ratio (RTE and Flowers102)}
        \end{minipage}
        \begin{minipage}{0.49\columnwidth}
            \centering
            \includegraphics[width=0.7\textwidth]{
                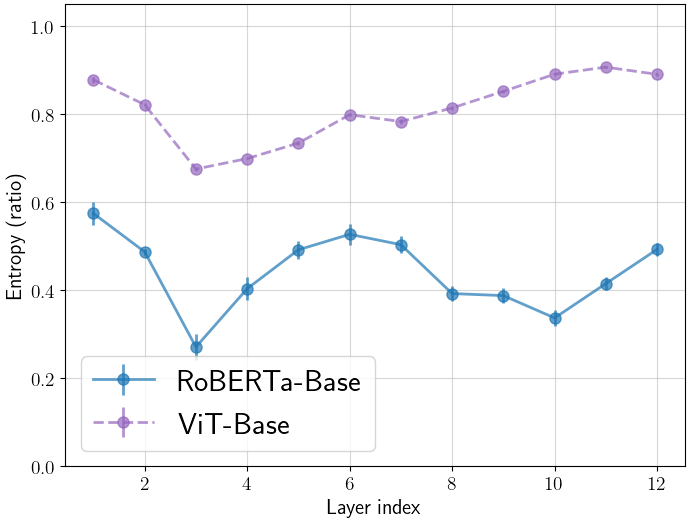
            }
            \subcaption{Ratio of entropy (CB and Aircraft)}
        \end{minipage}
        \begin{minipage}{0.49\columnwidth}
            \centering
            \includegraphics[width=0.7\textwidth]{
                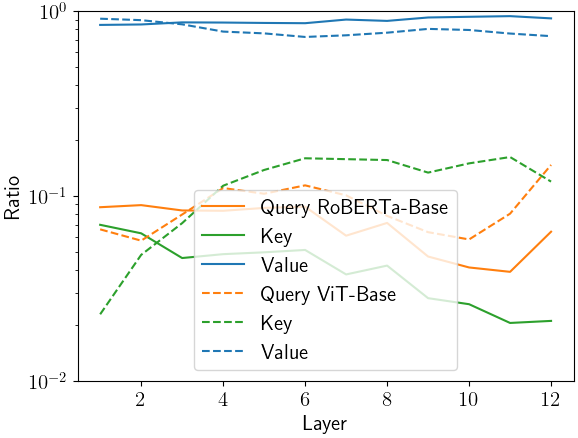
            }
            \subcaption{Ratio of gradient norm (CB and Aircraft)}
        \end{minipage}
        \caption{Comparison of entropy and gradient norms in attention matrices
        for RoBERTa and ViT. (a) and (c): the ratio of entropy relative to the
        maximum possible entropy. (b) and (d): the ratio of the gradient norm
        for self-attention parameters relative to the total gradient norm.}
        \label{fig:attention_gradient}
    \end{figure}
    \if0
    \clearpage
    \section{\texorpdfstring{Iteration complexity of SignSGD under $l_{2}$ norm}{Iteration complexity of SignSGD under L2 norm}}
    \label{app:signsgd_l2}
    If we evaluate both SGD and SignSGD using the $l_{2}$ norm, we obtain the following upper bound for SignSGD without additional assumptions from~\Cref{theorem:complexity}
    \begin{align}
            \mathcal{T}_{\varepsilon}(\{\bm{\theta}^{\text{Sign}}_{t}\}_{t=0}^{\infty}, L, \|\mathord{\cdot}\|_{2}) & \leq \frac{6(L(\bm{\theta}_{0}) - L_{*})}{\varepsilon^{2}\zeta_{0}}\Lambda_{P}.
    \end{align} 

    Compared with the original bound under $l_{1}$ the norm, this bound is larger than that of SGD by a factor of $P$ because SGD and SignSGD are steepest descent methods with respect to the $l_{2}$ and $l_{\infty}$ norms, respectively (as discussed in~\cref{prelim:optimization_algorithms}). Thus, evaluating both methods under $l_{2}$ norm favors SGD, which is why we evaluate SignSGD under its natural metric ($l_{1}$ norm).

    \begin{proof}
        From Eq. \eqref{eq:sign_ineq_proof}, 
        \begin{align}
            L(\bm{\theta}^{\text{Sign}}_{T})- L(\bm{\theta}_{0}) 
            & \leq -\frac{\zeta_{0}}{6}\sum_{t=0}^{T-1}\min(\frac{\|\nabla L(\bm{\theta}^{\text{Sign}}_{t})\|_{1}}{P\Lambda_{P}}, \sqrt{\frac{\|\nabla L(\bm{\theta}^{\text{Sign}}_{t})\|_{1}}{\rho_{H}P^{3/2}}})\|\nabla L(\bm{\theta}^{\text{Sign}}_{t})\|_{1} \\
            &\leq -\frac{\zeta_{0}}{6}\sum_{t=0}^{T-1}\min(\frac{\|\nabla L(\bm{\theta}^{\text{Sign}}_{t})\|_{2}}{P\Lambda_{P}}, \sqrt{\frac{\|\nabla L(\bm{\theta}^{\text{Sign}}_{t})\|_{2}}{\rho_{H}P^{3/2}}})\|\nabla L(\bm{\theta}^{\text{Sign}}_{t})\|_{2}
        \end{align}

        Assume that $\|\nabla L(\bm{\theta}^{\text{Sign}}_{t})\|_{2}\geq \sqrt{P}\varepsilon$ holds
        for all $0\leq t < T$. Then, we have
        \begin{align}
            L(\bm{\theta}^{\text{Sign}}_{T})- L(\bm{\theta}_{0}) & \leq -\frac{T\varepsilon\zeta_{0}}{6}\min(\frac{\varepsilon}{\Lambda_{P}}, \sqrt{\frac{\varepsilon}{\rho_{H}}})   \\
                                                                 & = -\frac{T\varepsilon^{2}\zeta_{0}}{6\Lambda_{P}}\quad (\text{From}~ \varepsilon< \frac{\Lambda_{P}^{2}}{\rho_{H}\sqrt{P}}<\frac{\Lambda_{P}^{2}}{\rho_{H}}).
        \end{align}
        Therefore, we have:
        \begin{align}
            T & \leq \frac{6(L(\bm{\theta}_{0})-L(\bm{\theta}^{\text{Sign}}_{T}))}{\varepsilon^{2}\zeta_{0}}\Lambda_{P} \\
              & \leq \frac{6(L(\bm{\theta}_{0})-L_{*})}{\varepsilon^{2}\zeta_{0}}\Lambda_{P}.
        \end{align}
        This means:
        \begin{align}
            \mathcal{T}_{\varepsilon}(\{\bm{\theta}^{\text{Sign}}_{t}\}_{t=0}^{\infty}, L, \|\mathord{\cdot}\|_{2}) & \leq \frac{6(L(\bm{\theta}_{0}) - L_{*})}{\varepsilon^{2}\zeta_{0}}\Lambda_{P}.
        \end{align}
    \end{proof}
    \section{Lower bound for iteration complexity}
\label{app:lower_bound}
In this section, we compare our upper bounds on iteration complexity with the lower bound established by~\citet{crawshaw2022robustness}, which provides a corrected version of the result in~\citet{zhang2019gradient}.

Specifically, \citet{crawshaw2022robustness} derive the following lower bound:
\begin{align}
    \Omega\left(\frac{L(\bm{\theta}) - L(\bm{\theta}^{*})}{\varepsilon^2} \, M L_1 \right).
\end{align}
up to constants. In comparison, our upper bounds in~\Cref{theorem:complexity,theorem:complexity_stochastic} are given by:
\begin{align}
    O\left(\frac{L(\bm{\theta}) - L(\bm{\theta}^{*})}{\varepsilon^2} \, \Lambda \right)
\end{align}
up to constants, where $\Lambda$ denotes either $\Lambda_{G}$ or $\Lambda_{P}$.

The result of~\citet{crawshaw2022robustness} is based on the assumptions $\|\nabla^2 L(\bm{\theta})\|_{2} \leq L_{0} + L_{1}\|\nabla L(\bm{\theta})\|_{2}$ and $\|\nabla L(\bm{\theta})\|_{2} \leq M$. Under these conditions, our parameter $\Lambda$, which corresponds to a weighted average of Hessian component norms, can be considered comparable in scale to $ML_1$.

It remains challenging to establish a tight lower bound in our setting, as our analysis is designed to accommodate a wide range of objective functions that reflect the complexity of deep learning optimization dynamics.  
Note that~\citet{crawshaw2022robustness} do not establish a lower bound under their setting either.
\fi
    \section{More discussion on the sign-based sequence the stochastic setting}
    \label{appendix:more_sign}
    In this section, we further examine the iteration complexity of the sign‐based sequence under stochastic settings. Specifically, we present iteration complexity results that account for a learning rate adapted to the noise level.
    \begin{theorem}
    Assume that $\delta_{D}< \Lambda_{P}/3,\;\varepsilon< \frac{5\Lambda_{P}^{2}}{3(1-2\sigma_2)\rho_{H}\sqrt{P}}$, and $\sigma_{2}< \frac{1}{2}$ hold and that the learning rate at time $t$ satisfies $\eta_t= \zeta_{t} \min(\frac{3(1-2\sigma_2)\|\nabla L(\bm{\theta}^{\text{Sign}}_{t})\|_{1}}{5\Lambda_{P}P}
        , \sqrt{\frac{3(1-2\sigma_2)\|\nabla L(\bm{\theta}^{\text{Sign}}_{t})\|_{1}}{5\rho_{H}P^{3/2}}})$, where $\zeta_{t}\in [\zeta_{0}, 1]$. Then, the iteration complexity for the sign-based sequence the stochastic setting are bounded as follows.
        \begin{align}
            \mathcal{T}_{\varepsilon}(\{\bm{\theta}^{\text{Sign}}_{t}\}_{t=0}^{\infty}, L, \|\mathord{\cdot}\|_{1}) & \leq \frac{20(L(\bm{\theta}_{0})-L_{*})}{3(1-2\sigma_2)^2P\varepsilon^{2}\zeta_{0}}\Lambda_{P}.
        \end{align}
    \end{theorem}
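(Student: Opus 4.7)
The plan is to reprove the stochastic sign-based bound of \Cref{theorem:complexity_stochastic} essentially verbatim, but with the learning rate rescaled so that the Chebyshev-derived sign-flip term is absorbed directly into the linear descent term rather than merely dominated by it. This trades the restrictive $\sigma_2 \leq 1/24$ condition for the much milder $\sigma_2 < 1/2$, at the cost of a $(1-2\sigma_2)^2$ factor in the final constant.

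First I would reuse \Cref{lemma:base_inequality} and the exact decomposition of the earlier proof: the conditional expected one-step change consists of the linear term $-\eta_t\|\nabla L(\bm{\theta}^{\text{Sign}}_t)\|_1$, the block-diagonal quadratic term bounded by $\tfrac{\eta_t^2}{2}\Lambda_P P$, the off-diagonal error $\tfrac{\eta_t^2}{2}\delta_D P$, the cubic remainder $\tfrac{\eta_t^3 \rho_H}{6}P^{3/2}$, and the Chebyshev-bounded sign-flip term $2\sigma_2 \eta_t \|\nabla L\|_1$. Merging the linear and sign-flip contributions and applying $\delta_D < \Lambda_P/3$ produces $-(1-2\sigma_2)\eta_t\|\nabla L\|_1 + \tfrac{2\eta_t^2}{3}\Lambda_P P + \tfrac{\eta_t^3 \rho_H}{6} P^{3/2}$. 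Substituting the prescribed learning rate then collapses the quadratic term to $\tfrac{2(1-2\sigma_2)}{5}\eta_t\|\nabla L\|_1$ and the cubic term to $\tfrac{(1-2\sigma_2)}{10}\eta_t\|\nabla L\|_1$; the coefficients sum to $-1 + 2/5 + 1/10 = -1/2$, leaving the clean per-step estimate $\mathbb{E}[L(\bm{\theta}^{\text{Sign}}_{t+1})-L(\bm{\theta}^{\text{Sign}}_t) \mid \bm{\theta}^{\text{Sign}}_t] \leq -\tfrac{(1-2\sigma_2)}{2}\eta_t\|\nabla L(\bm{\theta}^{\text{Sign}}_t)\|_1$.

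The last step is the familiar telescope-plus-event-split finish. Suppose the event $\mathcal{E}(T) \coloneqq \{\forall s \leq T,\ \|\nabla L(\bm{\theta}^{\text{Sign}}_s)\|_1 \geq P\varepsilon\}$ has probability at least $1/2$. Then the assumption $\varepsilon < \tfrac{5\Lambda_P^2}{3(1-2\sigma_2)\rho_H\sqrt{P}}$ guarantees that on $\mathcal{E}(T)$ the linear-in-$\|\nabla L\|_1$ branch of $\eta_t$ is the active one in the minimum, yielding $\eta_t \|\nabla L\|_1 \geq \zeta_0 \tfrac{3(1-2\sigma_2) P \varepsilon^2}{5 \Lambda_P}$. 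Summing over $t < T$, splitting the total expectation via $\mathbb{P}(\mathcal{E}(T)) \geq 1/2$, and using $\mathbb{E}[L(\bm{\theta}^{\text{Sign}}_T)] \geq L_*$ gives $L(\bm{\theta}_0) - L_* \geq \tfrac{3(1-2\sigma_2)^2 P \varepsilon^2 \zeta_0 T}{20 \Lambda_P}$, whose contrapositive is the advertised complexity bound.

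The only genuine obstacle is the constant-matching: the two numerical constants inside the learning rate ($3/5$ and $5$) must be chosen so that the quadratic and cubic errors together consume exactly half of the rescaled linear term $(1-2\sigma_2)\eta_t\|\nabla L\|_1$, preserving the clean $1/2$ descent coefficient. This same balance is what forces the $\varepsilon$-threshold $5\Lambda_P^2/[3(1-2\sigma_2)\rho_H\sqrt{P}]$ to be exactly the level at which the linear branch dominates in the learning-rate minimum, and the $20/3$ factor in the final bound follows directly. Everything else is a routine rerun of the earlier sign-based stochastic argument with $(1-2\sigma_2)$ threaded through.
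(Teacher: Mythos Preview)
Your proposal is correct and follows essentially the same argument as the paper: both start from the combined one-step bound $-\eta_t\|\nabla L\|_1 + \tfrac{\eta_t^2}{2}\Lambda_P P + \tfrac{\eta_t^2}{2}\delta_D P + \tfrac{\eta_t^3\rho_H}{6}P^{3/2} + 2\sigma_2\eta_t\|\nabla L\|_1$, absorb the sign-flip term into the linear term, and then substitute the noise-adapted learning rate before telescoping and splitting on the event $\mathcal{E}(T)$. The only cosmetic difference is that the paper carries a free scaling parameter $\alpha$ in the learning rate, derives the bound $\tfrac{12\alpha^2}{6\alpha(1-2\sigma_2)-5}\cdot\tfrac{(L(\bm{\theta}_0)-L_*)\Lambda_P}{P\varepsilon^2\zeta_0}$, and then optimizes to $\alpha=\tfrac{5}{3(1-2\sigma_2)}$, whereas you plug in this optimal choice from the start (since it is already given in the theorem statement) and arrive at the same $-\tfrac{1-2\sigma_2}{2}\eta_t\|\nabla L\|_1$ per-step descent and $20/3$ final constant.
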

    \begin{proof}
    We start with Eq.~\eqref{eq:combined} in~\cref{proof:complexity_stochastic}. Let $\varepsilon< \frac{\alpha\Lambda_{P}^{2}}{\rho_{H}\sqrt{P}}$ and set the learning rate as $\eta_t= \zeta_{t} \min(\frac{\|\nabla L(\bm{\theta}^{\text{Sign}}_{t})\|_{1}}{\alpha\Lambda_{P}P}
        , \sqrt{\frac{\|\nabla L(\bm{\theta}^{\text{Sign}}_{t})\|_{1}}{\alpha\rho_{H}P^{3/2}}})$, where $\zeta_{t}\in [\zeta_{0}, 1]$ and $\alpha>\frac{5}{6(1-2\sigma_2)}$. Then, we have:
    \begin{align}
             & \quad \mathbb{E}\left[L(\bm{\theta}^{\text{Sign}}_{t+1})- L(\bm{\theta}^{\text{Sign}}_{t}) \mid \bm{\theta}^{\text{Sign}}_{t}\right]                                                                                                                                                                                                   \\
             & \leq -\eta_{t}\|\nabla L(\bm{\theta}^{\text{Sign}}_{t})\|_{1}+ \frac{\eta_{t}^{2}}{2}\Lambda_{P}P+\frac{\eta_{t}^{2}}{2}\delta_{D}P+\eta_{t}^{3}\frac{\rho_{H}}{6}P^{3/2}+2\sigma_{2}\eta_{t}\|\nabla L(\bm{\theta}^{\text{Sign}}_{t})\|_{1}                                                                                           \\
             & \leq -\eta_{t}\|\nabla L(\bm{\theta}^{\text{Sign}}_{t})\|_{1}+\frac{\eta_{t}}{2\alpha}\|\nabla L(\bm{\theta}^{\text{Sign}}_{t})\|_{1}+\frac{\eta_{t}}{6\alpha}\|\nabla L(\bm{\theta}^{\text{Sign}}_{t})\|_{1}+\frac{\eta_{t}}{6\alpha}\|\nabla L(\bm{\theta}^{\text{Sign}}_{t})\|_{1}+2\sigma_{2}\eta_{t}\|\nabla L(\bm{\theta}^{\text{Sign}}_{t})\|_{1} \\
             & \quad\quad\quad (\text{From}~\eta_{t}\leq \min(\frac{\|\nabla L(\bm{\theta}^{\text{Sign}}_{t})\|_{1}}{\alpha\Lambda_{P}P}
             , \sqrt{\frac{\|\nabla L(\bm{\theta}^{\text{Sign}}_{t})\|_{1}}{\alpha\rho_{H}P^{3/2}}}
             )\text{ and }\delta_{D}< \Lambda_{P}/3)                                                                                                                                                                                                                                              \\
             & = -\frac{(6\alpha(1-2\sigma_2)-5)\eta_{t}}{6\alpha}\|\nabla L(\bm{\theta}^{\text{Sign}}_{t})\|_{1}
        \end{align}
    Assume that the probability of the event $\mathcal{E}(T)=\{\forall s\leq T,\;\|\nabla L(\bm{\theta}^{\text{Sign}}_{s})\|_{1}\geq P\varepsilon\}$ satisfies $\mathbb{P}\left( \mathcal{E}(T)\right)\geq \frac{1}{2}$. By applying the telescoping sum and taking expectations, and noting that
        $\bm{\theta}_{0}= \bm{\theta}^{\text{Sign}}_{0}$, we have:
        \begin{align}
            &\quad \mathbb{E}\left[L(\bm{\theta}^{\text{Sign}}_{T})\right]- L(\bm{\theta}_{0}) \\
            &\leq -\frac{(6\alpha(1-2\sigma_2)-5)\eta_{t}}{6\alpha}\sum_{t=0}^{T-1}\mathbb{E}\left[\eta_{t}\|\nabla L(\bm{\theta}^{\text{Sign}}_{t})\|_{1}\right]                                 \\
            &= -\frac{(6\alpha(1-2\sigma_2)-5)\eta_{t}}{6\alpha}\sum_{t=0}^{T-1}\left(\mathbb{E}\left[\eta_{t}\|\nabla L(\bm{\theta}^{\text{Sign}}_{t})\|_{1}\mid \mathcal{E}(T)\right]\mathbb{P}\left( \mathcal{E}(T)\right) + \mathbb{E}\left[\overline\eta_{t}\|\nabla L(\bm{\theta}^{\text{Sign}}_{t})\|_{1}\mid \overline{\mathcal{E}(T)}\right]\mathbb{P}\left( \overline{\mathcal{E}(T)}\right) \right)                               \\
            &\leq -\frac{(6\alpha(1-2\sigma_2)-5)\eta_{t}}{6\alpha}\sum_{t=0}^{T-1}\mathbb{E}\left[\eta_{t}\|\nabla L(\bm{\theta}^{\text{Sign}}_{t})\|_{1}\mid \mathcal{E}(T)\right]\mathbb{P}\left( \mathcal{E}(T)\right)                             \\
            &\leq -\frac{(6\alpha(1-2\sigma_2)-5)\eta_{t}}{12\alpha}\sum_{t=0}^{T-1}\mathbb{E}\left[\eta_{t}\|\nabla L(\bm{\theta}^{\text{Sign}}_{t})\|_{1}\mid \mathcal{E}(T)\right]                            \\
           &\leq -\frac{(6\alpha(1-2\sigma_2)-5)\eta_{t}\zeta_{0}}{12\alpha}\sum_{t=0}^{T-1}\mathbb{E}\left[\min(\frac{\|\nabla L(\bm{\theta}^{\text{Sign}}_{t})\|_{1}^{2}}{\alpha\Lambda_{P}P}, \frac{\|\nabla L(\bm{\theta}^{\text{Sign}}_{t})\|_{1}^{3/2}}{\sqrt{\alpha\rho_{H}P^{3/2}}})\mid \mathcal{E}(T)\right]\\
           &\quad \quad\quad  (\text{From}~ \eta_{t}\geq \zeta_{0}\min(\frac{\|\nabla L(\bm{\theta}^{\text{Sign}}_{t})\|_{1}}{\alpha\Lambda_{P}P}
           , \sqrt{\frac{\|\nabla L(\bm{\theta}^{\text{Sign}}_{t})\|_{1}}{\alpha\rho_{H}P^{3/2}}}))                                  \\
            & \leq -\frac{(6\alpha(1-2\sigma_2)-5)\eta_{t}\zeta_{0}}{12\alpha}\sum_{t=0}^{T-1}\min(\frac{P\varepsilon^2}{\alpha\Lambda_{P}}, P\varepsilon\sqrt{\frac{\varepsilon}{\alpha\rho_{H}P^{1/2}}}) \\
            &= -\frac{(6\alpha(1-2\sigma_2)-5)TP\varepsilon^2\zeta_{0}}{12\alpha^2\Lambda_{P}}\quad (\text{From}~ \varepsilon< \frac{\alpha\Lambda_{P}^{2}}{\rho_{H}\sqrt{P}}).
        \end{align}
        Therefore, we have:
        \begin{align}
            T & \leq \frac{12\alpha^2(L(\bm{\theta}_{0})-\mathbb{E}\left[L(\bm{\theta}^{\text{Sign}}_{T})\right])}{(6\alpha(1-2\sigma_2)-5)P\varepsilon^{2}\zeta_{0}}\Lambda_{P} \\
              & \leq \frac{12\alpha^2(L(\bm{\theta}_{0})-L_{*})}{(6\alpha(1-2\sigma_2)-5)P\varepsilon^{2}\zeta_{0}}\Lambda_{P}.
        \end{align}
        This means that when we take $T > \frac{12\alpha^2(L(\bm{\theta}_{0})-L_{*})}{(6\alpha(1-2\sigma_2)-5)P\varepsilon^{2}\zeta_{0}}\Lambda_{P}$, we have $\mathbb{P}\left( \mathcal{E}(T)\right) < \frac{1}{2}$.
        Therefore, we have
        \begin{align}
            \mathcal{T}_{\varepsilon}(\{\bm{\theta}^{\text{Sign}}_{t}\}_{t=0}^{\infty}, L, \|\mathord{\cdot}\|_{1}) & \leq \frac{12\alpha^2(L(\bm{\theta}_{0})-L_{*})}{(6\alpha(1-2\sigma_2)-5)P\varepsilon^{2}\zeta_{0}}\Lambda_{P},
        \end{align}
        for any $\alpha>\frac{5}{6(1-2\sigma_2)}$. Setting $\alpha=\frac{5}{3(1-2\sigma_2)}$ to minimize the right-hand side completes the proof.
    \end{proof}

\clearpage

\section{Additional background and related work}

\subsection{Steepest descent}
\label{sec:steep}
Beyond the descent direction, steepest descent can be formulated as a full update obtained by minimizing a local smoothness-based upper bound of the objective~\citep{bernstein2024old}.

We say that $L$ is $L_p$-smooth if its gradient is Lipschitz continuous with respect to the
$\ell_p$ norm, that is,
\[
\|\nabla L(\bm{\theta}) - \nabla L(\bm{\theta}')\|_{q}
\le L_p \|\bm{\theta}-\bm{\theta}'\|_{p},
\]
where $\|\cdot\|_{q}$ is the dual norm of $\|\cdot\|_{p}$
For an $L_p$-smooth function in a deterministic setting, the steepest descent update is given by
\begin{align}
    \bm{\theta}_{t+1}
    \in
    \argmin_{\bm{\theta} \in \mathbb{R}^P}
    \left(
        \left\langle \nabla L(\bm{\theta}_t), \bm{\theta} - \bm{\theta}_t \right\rangle
        + \frac{L_p}{2}\, \|\bm{\theta} - \bm{\theta}_t\|_p^2
    \right).
    \label{eq:steepestdescent}
\end{align}

For the $\ell_2$ norm, this reduces to the standard gradient descent update
\begin{align}
    \bm{\theta}_{t+1}
    = \bm{\theta}_t - \frac{1}{L_2}\, \nabla L(\bm{\theta}_t),
\end{align}
whereas for the $\ell_\infty$ norm, a closed-form solution is given by
\begin{align}
    \bm{\theta}_{t+1}
    = \bm{\theta}_t
    - \frac{\|\nabla L(\bm{\theta}_t)\|_1}{L_\infty}
    \, \sign\!\left(\nabla L(\bm{\theta}_t)\right),
\end{align}
which corresponds to a scaled sign-based update.

\subsection{Extended related work}
    \paragraph{Transformer architecture and layer normalization.}
    The original Transformer architecture~\citep{vaswani2017attention}, referred
    to as Post-LN, applies layer normalization after the residual connection. In
    contrast, the Pre-LN architecture places layer normalization before the residual
    connection. \citet{wang2019learning} demonstrated that Post-LN Transformers are
    difficult to train when the number of layers is large, a finding later
    theoretically confirmed by~\citet{xiong2020layer} using mean field theory.
    Other architectures such as Reformer~\citep{he2020realformer} were also introduced.
    \citet{shi2022revisiting} showed that a large standard deviation in layer normalization
    leads to rank collapse in Post-LN Transformers. Furthermore, \citet{wu2024on}
    observed that sparse masked attention mitigates rank collapse in the absence
    of layer normalization and that layer normalization induces equilibria
    ranging from rank one to full rank.

    \paragraph{Attention sparsity.}
    Sparse attention mechanisms have been proposed to reduce the computational
    costs of Transformers. For example, ETC~\citep{ainslie2020etc} introduces efficient
    sparse attention, and \citet{zaheer2020big} proposed BigBird, which they
    theoretically demonstrated to be as expressive as full attention. These sparse
    attention mechanisms are widely used in language models with large context windows,
    such as Longformer~\citep{beltagy2020longformer} and Mistral 7B~\citep{jiang2023mistral}.
    In NLP, \citet{clark2019does} found that attention of pre-trained BERT focuses
    on specific tokens. In vision, \citet{Hyeon-Woo_2023_ICCV} showed that while
    uniform attention is challenging to learn with the softmax function, ViT successfully
    learns uniform attention, which is key to its success. Additionally, \citet{pmlr-v202-zhai23a}
    suggested that low attention entropy contributes to training instability in Transformers,
    a phenomenon they termed \emph{entropy collapse}. Furthermore, \citet{bao2024self}
    demonstrated that a small eigenspectrum variance of query and key matrices
    leads to localized attention and mitigates both rank and entropy collapse.


      
      
    
    \section{Notation}
    \label{appendix} \Cref{table:notations} shows our notations.
    \begin{table}[htbp]
        \caption{ Table of notations. }
        \centering
        \begin{tabular}{lc}
            \toprule Variable                                             & Definition                                                           \\
            \midrule $a_{k}$                                              & $k$-th element of vector $\bm{a}$                                    \\
            $\bm{A}_{k,:}, \bm{A}_{:,j},A_{k,j}$                          & $k$-th row, $j$-th column, and $(k,j)$-th element of matrix $\bm{A}$ \\
            $[\bm{A}]_{b}, [\bm{a}]_{b}$                                  & $b$-th block of matrix $\bm{A}$ and vector $\bm{a}$                  \\
            $B$                                                           & number of blocks in parameters                                       \\
            $\bm{1}_{a}$                                                  & all-ones vector of size $a$                                          \\
            $\bm{I}_{a}$                                                  & identity matrix of size $a\times a$                                  \\
            $\operatorname{vec}(\cdot ), \operatorname{blockdiag}(\cdot)$ & row-wise vectorization, block diagonal matrix                        \\
            $\otimes$                                                     & Kronecker product                                                    \\
            $C, N$                                                        & number of classes and training samples                               \\
            $P, P_{b}$                                                    & dimensions of model parameters, and $b$-th block of parameters       \\
            $\mathcal{X}$                                                 & sample space                                                         \\
            $\bm{\theta}$                                                 & model parameter                                                      \\
            $\bm{f}(\cdot), \bm{\phi}(\cdot)$                             & model and feature extractor                                             \\
            $\bm{V}, \bm{b}$                                              & weight matrix and bias of the linear head                            \\
            $h, d$                                                        & dimensions of features and tokens                                    \\
            $\bm{x}^{(i)}, y^{(i)}$                                       & $i$-th training sample and label                                     \\
            $L(\cdot)$                                                    & training loss                                                 \\
            $\widehat{L}(\cdot)$                                          & mini-batch loss                                                      \\
            $\eta_{t}$                                                    & learning rate at iteration $t$                                       \\
            $\ell(\cdot,\cdot)$                                           & cross entropy loss function                                          \\
            $\softmax(\cdot), \sign(\cdot)$                               & softmax and sign function                                            \\
            $\mathcal{R}_{\text{FT}}$ & parameter region of fine-tuning \\
            $L_{*}=L(\bm{\theta}_{*})$                                                       & local minimum of training loss \\
            $\rho_{H}$                                                    & Lipschitz constant of the Hessian matrix                             \\
            $L_{D}$                                                       & block-diagonal approximation of the Hessian matrix                   \\
            $\delta_{D}$                                                  & upper bound of the approximation of $L_{D}$                          \\
            $\sigma_{2}, \sigma_{3}$                                      & constants in the upper bound of the gradient error                   \\
            $\operatorname{SA}(\cdot)$                                    & single-head self-attention                                           \\
            $\bm{W}_{Q}, \bm{W}_{K}, \bm{W}_{V}$                          & query, key, and value weight matrix                                  \\
            $d_{k}, d_{v}$                                                & dimensions of key/query and value                                    \\
            \bottomrule
        \end{tabular}
        \label{table:notations}
    \end{table}
\end{document}